\documentclass{article}
\usepackage{subcaption}

 \usepackage[preprint]{neurips_2026}

\usepackage[utf8]{inputenc} 
\usepackage[T1]{fontenc}    
\usepackage{hyperref}       
\usepackage{url}            
\usepackage{booktabs}       
\usepackage{amsfonts}       
\usepackage{nicefrac}       
\usepackage{microtype}      
\usepackage{xcolor}         
\usepackage{thmtools}

\usepackage{amsmath,amssymb,amsthm}
\usepackage{algorithm,algorithmic}
\usepackage{graphicx}
\usepackage{booktabs}
\usepackage{hyperref}
\usepackage{geometry}
\usepackage{xspace}
\usepackage{enumitem}

\newtheorem{remark}{Remark}

\title{AdaGamma: State-Dependent Discounting for Temporal Adaptation in Reinforcement Learning}

\author{%
    Yaomin Wang$^{1,2}$, Jianting Pan$^{1}$, Ran Tian$^{2}$, Xiaoyang Li$^{2}$, \\\textbf{Yu Zhang}$^{2}$,  \textbf{Hengle Qin}$^{2,}$\footnotemark[1], \textbf{Tianshu YU}$^{1,}$\footnotemark[1]\\
    $^{1}$School of Data Science, The Chinese University of Hong Kong, Shenzhen\\
    $^{2}$JD.com\\
    \texttt{\{yaominwang, jiantingpan\}@link.cuhk.edu.cn}\\
    \texttt{\{tianran12, lixiaoyang1, zhangyu1496, qinhengle\}@jd.com}\\
    \texttt{\{yutianshu\}@cuhk.edu.cn}
}

\begin{document}
\footnotetext[1]{the corresponding author}

\maketitle
\begin{abstract}
The discount factor in reinforcement learning controls both the effective planning horizon and the strength of bootstrapping, yet most deep RL methods use a single fixed value across all states. While state-dependent discounting is conceptually appealing, naive deep actor--critic implementations can become unstable and degenerate toward TD-error collapse. We propose \textbf{AdaGamma}, a practical deep actor--critic method for state-dependent discounting that learns a state-dependent discount function together with a return-consistency objective to regularize the induced backup structure. On the theory side, we analyze the Bellman operator induced by state-dependent discounting and establish its basic well-posedness properties under suitable conditions. Empirically, AdaGamma integrates into both SAC and PPO, yielding consistent improvements on continuous-control benchmarks, and achieves statistically significant gains in an online A/B test on the JD Logistics platform. These results suggest that state-dependent discounting can be made effective in deep RL when coupled with a return-consistency objective that prevents degenerate target manipulation. 
\end{abstract}

\section{Introduction}
\label{sec:intro}

The discount factor is a central design choice in reinforcement learning, as it determines both the effective planning horizon and the strength of bootstrapping in value estimation~\citep{sutton1998reinforcement,sutton2018reinforcement,Puterman1994MarkovDP}. 
Despite its importance, most deep RL methods, including widely used value-based and actor--critic algorithms such as TD3~\citep{fujimoto2018addressingfunctionapproximationerror}, TRPO~\citep{schulman2017trustregionpolicyoptimization} and DDPG~\citep{lillicrap2019continuouscontroldeepreinforcement}, PPO~\citep{schulman2017proximal}, and SAC~\citep{haarnoja2018soft}, use a single fixed discount factor across all states~\citep{mnih2015human,schulman2017proximal,haarnoja2018soft}. 
While simple and effective in many settings, this uniform choice can be restrictive in environments with heterogeneous temporal structure, where states may differ substantially in uncertainty, controllability, and sensitivity to long-horizon credit assignment~\citep{francois2016discount,pitis2019rethinkingdiscountfactorreinforcement,pmlr-v162-hu22d}. 
In some states, propagating value farther into the future can improve decision quality; in others, aggressive bootstrapping may instead amplify noise, approximation error, or nonstationarity. 
Using one global discount factor therefore imposes the same temporal tradeoff on situations that may require different levels of target propagation.

A natural alternative is to let the discount factor vary with the state. However, in deep actor--critic systems, a naive learned state-dependent discount can easily become an unconstrained modulation signal for the TD target, leading to unstable learning or degenerate behavior such as TD-error collapse. Thus, state-dependent discounting is not merely a modeling choice; it is also an implementation challenge: the discount function must adapt temporal propagation without becoming a shortcut for manipulating TD targets.

In this paper, we address this problem with \textbf{AdaGamma}, a practical deep actor--critic implementation of state-dependent discounting. AdaGamma learns a state-dependent discount function and regularizes it with a return-consistency objective, which constrains the induced backup structure and prevents trivial target manipulation. Under this view, the discount factor serves as a mechanism for adaptive bootstrapping: larger values support longer-horizon propagation when future information is reliable, while smaller values reduce reliance on noisy or unstable bootstrapped targets. The resulting method provides a form of temporal regularization that adapts target propagation to the local decision context.

\emph{From a theoretical perspective}, our goal is to characterize the Bellman operator induced by state-dependent discounting, rather than to claim a full end-to-end convergence guarantee for deep RL with function approximation. We show that, under suitable conditions, the associated operator retains basic well-posedness properties that support its use in value-based bootstrapping. \emph{On the algorithmic side}, we instantiate AdaGamma in SAC and further adapt it to PPO through corresponding modifications to return estimation and optimization. Empirically, both variants achieve consistent gains on continuous-control benchmarks. Finally, we validate the SAC-based variant in a live recommender system on the JD Logistics platform, where AdaGamma yields statistically significant improvements over a standard SAC baseline in an online A/B test.

In summary, our work makes the following contributions:
\begin{itemize}[leftmargin=*]
    \item \textbf{Method.} We propose AdaGamma, a practical deep actor--critic implementation of state-dependent discounting with a return-consistency objective that prevents degenerate TD-target collapse.
    \item \textbf{Theory.} We provide supporting theory by analyzing the Bellman operator induced by state-dependent discounting and establishing its basic well-posedness properties under suitable conditions.
    \item \textbf{Flexible integration.} We instantiate AdaGamma in SAC and extend it to PPO through corresponding modifications to return estimation and optimization.
    \item \textbf{Real-world validation.} We demonstrate the effectiveness of AdaGamma on standard continuous-control benchmarks and in a four-week online A/B test on the JD Logistics platform.
\end{itemize}

\section{Related Work}
\label{sec:related}

\textbf{Discounting beyond a fixed global factor.}
Most deep RL methods treat $\gamma$ as a fixed hyperparameter
\citep{sutton1998reinforcement,mnih2015human,haarnoja2018soft,schulman2017proximal},
even though it strongly affects optimization stability, variance, and the effective planning horizon
\citep{francois2016discount,amit2020discount,naik2019discountedreinforcementlearningoptimization,hu2022rolediscountfactoroffline}. 
Prior work has shown that discounting also plays a regularization role
\citep{amit2020discount,rathnam2024rethinking}, and that globally adjusting $\gamma$ changes the trade-off
between policy quality, robustness, and data efficiency
\citep{hu2022rolediscountfactoroffline,foster2022offlinereinforcementlearningfundamental,zhan2022offlinereinforcementlearningrealizability}. 
These results motivate moving beyond a single global discount, but do not address how to realize
state-dependent discounting reliably in modern deep actor--critic systems.

\textbf{State-dependent and adaptive discounting.}
A natural extension is to allow discounting to vary across states, time steps, or training stages.
Classical work studies MDPs with state-dependent discount factors and establishes general existence or
convergence results
\citep{wei2011markov,yoshida2013reinforcement}. Other work considers dynamic schedules, non-exponential
discounting, or generalized return criteria
\citep{francois2016discount,hou2021improvement,schultheis2022reinforcementlearningnonexponentialdiscounting,10.1609/aaai.v33i01.33017949,naik2019discountedreinforcementlearningoptimization}. 
More recent deep RL approaches adapt discount-related quantities using uncertainty, advantage, or policy
feedback signals
\citep{kim2022adaptive,gu2022proximal}. These studies demonstrate the potential benefit of flexible
discounting, but typically rely on prescribed rules, time-varying schedules, or algorithm-specific
adjustments, rather than learning a state-conditioned discount module that directly participates in standard deep actor--critic bootstrapping targets.

\textbf{Our distinction from prior adaptive-discount methods.}
Our work is closest to prior studies of state-dependent and adaptive discounting, but differs in both
problem formulation and solution. We do not present state-dependent discounting itself as a new modeling
idea. Instead, we focus on a practical deep actor--critic implementation issue: when a neural discount
module $\gamma_\phi(s)$ is trained naively through bootstrapped TD objectives, it can become an
unconstrained degree of freedom for manipulating the target, leading to degenerate learning behavior.
AdaGamma addresses this failure mode with a return-consistency objective that regularizes the induced
backup structure, making state-dependent discounting stable enough to integrate into both SAC and PPO.
Beyond standard continuous-control benchmarks, we further validate this design in a real-world JD online
deployment.

Additional discussion of non-exponential discounting, long-horizon statistical issues, and connections to temporal abstraction is deferred to Appendix~\ref{app:more_related_work}.

\section{Preliminaries}
\label{sec:prelim}

\paragraph{Notation}
We consider an infinite-horizon MDP $(\mathcal{S}, 
\mathcal{A}, p, r)$ with continuous state space $\mathcal{S}$ 
and action space $\mathcal{A}$. The transition density is 
$p: \mathcal{S} \times \mathcal{S} \times \mathcal{A} 
\to [0, \infty)$, and the reward function $r: \mathcal{S} 
\times \mathcal{A} \to [r_{\min}, r_{\max}]$ is bounded. 
We use $\rho_\pi(s_t)$ and $\rho_\pi(s_t, a_t)$ to denote 
the state and state-action marginals of the trajectory 
distribution induced by policy $\pi(a_t | s_t)$.

\paragraph{The Discount Factor's Dual Role}
The discount factor $\gamma$ serves two conceptually 
distinct functions: it defines the \emph{objective} 
(the $\gamma$-discounted return) and it acts as a 
\emph{damping coefficient} in the Bellman backup that 
controls how much bootstrapped future estimates influence 
current value updates. A low $\gamma$ prioritizes 
short-term rewards and reduces variance at the cost of 
bias; a high $\gamma$ encourages long-horizon planning 
but amplifies errors in value estimates. In standard 
algorithms, $\gamma$ is fixed, typically to 0.99 or 0.995.

\paragraph{Soft Actor-Critic (SAC)}
SAC \citep{haarnoja2018soft} maximizes the entropy-augmented 
objective:
\begin{equation}
\small
J(\pi) =
\mathbb{E}_{\pi}\left[
\sum_{t=0}^{\infty}\gamma^t
\left(r(s_t,a_t)+\alpha \mathcal{H}(\pi(\cdot|s_t))\right)
\right].
\label{eq:sac_obj}
\end{equation}
The soft Q-function satisfies the Bellman equation:
\begin{equation}
\small
Q^\pi(s_t, a_t) = r(s_t, a_t) + \gamma \, 
\mathbb{E}_{s_{t+1}\sim p}\left[V^\pi(s_{t+1})\right],
\label{eq:sac_bellman}
\end{equation}
where $V^\pi(s_t) = \mathbb{E}_{a_t\sim\pi}
[Q^\pi(s_t,a_t) - \alpha \log \pi(a_t|s_t)]$.

\paragraph{PPO and Generalized Advantage Estimation}
PPO \citep{schulman2017proximal} uses GAE 
to compute advantage estimates. The standard GAE recursion 
with fixed $\gamma$ and trace parameter $\lambda$ is:
\begin{align}
\delta_t &= r_t + \gamma V(s_{t+1}) - V(s_t), 
\label{eq:td_standard}\\
\hat{A}_t &= \delta_t + \gamma\lambda\,\hat{A}_{t+1}, 
\quad \hat{A}_{T-1} = \delta_{T-1}.
\label{eq:gae_standard}
\end{align}
Expanding the recursion yields 
$\hat{A}_t = \sum_{l=0}^{T-1-t}(\gamma\lambda)^l \delta_{t+l}$, 
a geometrically-weighted sum of TD residuals.

\section{AdaGamma: Method and Theory}
\label{sec:method}

We present AdaGamma as a unified framework with three components:
(i)~the gamma network architecture(Section~\ref{sec:architecture}),
(ii)~algorithm-specific adapters that integrate $\gamma_\phi(s)$ into bootstrapped value estimation(Section~\ref{sec:offpolicy_adapter} and Section~\ref{sec:onpolicy_adapter}), and
(iii)~the training objective for the gamma network(Section~\ref{sec:gamma_training}).
We then provide supporting theory for the state-dependent discount operator underlying the method.
Our theoretical results characterize basic operator-level properties such as well-posedness, policy-evaluation behavior, and comparison bounds under suitable assumptions; they are not intended as a full end-to-end convergence guarantee for the complete deep RL algorithm with function approximation. Algorithm~\ref{alg:sac_adf} in Appendix~\ref{app:sac_ada_algo} presents the full SAC-AdaGamma procedure, and Algorithm~\ref{alg:ppo_adf} in Appendix~\ref{app:ppo_ada_algo} presents the PPO-AdaGamma variant.

\subsection{Gamma Network Architecture}
\label{sec:architecture}

The gamma network is a small MLP $g_\phi: \mathcal{S} 
\to \mathbb{R}$ whose output is passed through a sigmoid 
and rescaled to $[\gamma_{\min}, \gamma_{\max}]$:
\begin{equation}
\gamma_\phi(s_t) = \gamma_{\min} + 
(\gamma_{\max} - \gamma_{\min}) \cdot 
\sigma(g_\phi(s_t)),
\label{eq:gamma_param}
\end{equation}
where $\sigma(\cdot)$ is the sigmoid function and 
$\gamma_{\min}, \gamma_{\max}$ are hyperparameters 
(we use $\gamma_{\min}=0.900$, $\gamma_{\max}=0.999$ 
in experiments). This bounded parameterization ensures 
$\gamma_\phi(s) \in [\gamma_{\min}, \gamma_{\max}] 
\subset [0,1)$ for all states, guaranteeing well-posedness 
of the discounted return.

The gamma network is deliberately kept lightweight---a 
two-layer MLP with hidden dimension 256---to avoid 
overfitting and to ensure negligible computational overhead. 
It takes the same state representation as the policy and 
value networks but uses no shared parameters, preventing 
gradient interference.

\subsection{The Common Interface: Bootstrapped Value Targets}
\label{sec:interface}

Despite their differences, both SAC and PPO share one operation where \(\gamma\) appears: the bootstrapped target used in value estimation. AdaGamma interfaces with each algorithm at precisely this point, replacing the scalar $\gamma$ with $\gamma_\phi(s_t)$.

\subsubsection{Off-Policy Adapter (SAC)}
\label{sec:offpolicy_adapter}

For SAC, the soft Q-function with adaptive discount becomes:
\begin{equation}
\small
Q^\pi(s_t, a_t) = r(s_t, a_t) + 
\mathbb{E}_{s_{t+1}\sim p}\left[
\gamma_\phi(s_t) \, \mathbb{E}_{a_{t+1}\sim\pi}
\left[Q^\pi(s_{t+1}, a_{t+1}) - 
\alpha \log\pi(a_{t+1}|s_{t+1})\right]
\right].
\label{eq:sac_adaptive_q}
\end{equation}
The practical target computation becomes:
\begin{equation}
\small
\hat{Q}(s_t, a_t) = r_t + \gamma_\phi(s_t)(1-d_t)
\left[\min_{i=1,2} Q_{\bar{\theta}_i}(s_{t+1}, a_{t+1}') 
- \alpha \log\pi_\psi(a_{t+1}'|s_{t+1})\right],
\label{eq:sac_target}
\end{equation}
where $a_{t+1}' \sim \pi_\psi(\cdot|s_{t+1})$ and 
$d_t \in \{0,1\}$ is the terminal flag. The policy update 
and entropy tuning are unchanged from standard SAC.

\subsubsection{On-Policy Adapter (PPO): Modified GAE}
\label{sec:onpolicy_adapter}

Although the operator-level theory developed later is most directly aligned with Bellman-style updates and SAC-style soft policy iteration, the core idea of AdaGamma can also be incorporated into PPO. In this case, state-dependent discounting is introduced through return and advantage estimation, yielding an implementation-level extension of PPO motivated by the same adaptive-bootstrapping principle. We do not claim a parallel theoretical characterization for PPO-specific components such as clipping and generalized advantage estimation; instead, we provide the corresponding estimator and objective modifications in detail. For PPO, the state-dependent discount enters the TD 
residual and propagates through the GAE recursion. The 
modified TD residual is:
\begin{equation}
\delta_t = r_t + \gamma_\phi(s_t) \, V(s_{t+1}) - V(s_t),
\label{eq:td_adaptive}
\end{equation}
and the modified GAE backward recursion becomes:
\begin{equation}
\hat{A}_{T-1} = \delta_{T-1}, \qquad
\hat{A}_t = \delta_t + 
\gamma_\phi(s_t) \cdot \lambda \cdot \hat{A}_{t+1}.
\label{eq:gae_adaptive}
\end{equation}

\begin{restatable}{proposition}{propgae}(Product-of-Gammas Weighting)
    \label{prop:gae}
Under state-dependent discounting, the GAE advantage estimate expands as
\begin{equation}
\hat{A}_t
=
\delta_t
+
\sum_{l=1}^{T-1-t}
\left(
\prod_{k=0}^{l-1}
\gamma_\phi(s_{t+k})
\right)
\lambda^l
\delta_{t+l}.
\label{eq:product_gamma_gae}
\end{equation}
\end{restatable}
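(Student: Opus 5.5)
We prove the expansion by backward induction on $t$, starting from the terminal index $t=T-1$ and unrolling the modified GAE recursion~\eqref{eq:gae_adaptive} one step at a time. It is convenient to write the claim uniformly as
\begin{equation*}
\hat{A}_t=\sum_{l=0}^{T-1-t}\Big(\prod_{k=0}^{l-1}\gamma_\phi(s_{t+k})\Big)\lambda^l\delta_{t+l},
\end{equation*}
with the convention that the empty product (for $l=0$) equals $1$. This form coincides with~\eqref{eq:product_gamma_gae} after separating the $l=0$ term.

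\emph{Base case.} For $t=T-1$ the sum contains only $l=0$, so the right-hand side is $\delta_{T-1}$. This equals $\hat{A}_{T-1}$ by the initialization in~\eqref{eq:gae_adaptive}.

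\emph{Inductive step.} Assume the formula holds at index $t+1$. Substitute it into $\hat{A}_t=\delta_t+\gamma_\phi(s_t)\lambda\hat{A}_{t+1}$ to get
\begin{equation*}
\hat{A}_t=\delta_t+\sum_{m=0}^{T-2-t}\gamma_\phi(s_t)\Big(\prod_{k=0}^{m-1}\gamma_\phi(s_{t+1+k})\Big)\lambda^{m+1}\delta_{t+1+m}.
\end{equation*}
Reindex with $l=m+1$, so $l$ runs from $1$ to $T-1-t$. Absorb $\gamma_\phi(s_t)$ into the product by shifting the product index $k\mapsto k+1$:
\begin{equation*}
\gamma_\phi(s_t)\prod_{k=0}^{l-2}\gamma_\phi(s_{t+1+k})=\prod_{k=0}^{l-1}\gamma_\phi(s_{t+k}).
\end{equation*}
This yields exactly~\eqref{eq:product_gamma_gae} at index $t$, which closes the induction.

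The only step needing care is the index bookkeeping in this last manipulation: making the product bounds line up after the shift, and treating the empty product correctly when $l=1$, where the result is the single factor $\gamma_\phi(s_t)$. There is no analytic content beyond this. As a sanity check, setting $\gamma_\phi\equiv\gamma$ turns every product into $\gamma^l$ and recovers the standard expansion $\sum_l(\gamma\lambda)^l\delta_{t+l}$. Note that $\delta_{t+l}$ itself uses $\gamma_\phi(s_{t+l})$ via~\eqref{eq:td_adaptive}, but this does not affect the argument, since the $\delta$'s are treated as given quantities.
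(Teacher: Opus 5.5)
Your proof is correct and follows the paper's own argument. Both use induction on $T-1-t$, substitute the expansion of $\hat{A}_{t+1}$ into $\hat{A}_t=\delta_t+\gamma_\phi(s_t)\lambda\hat{A}_{t+1}$, shift the summation index by one, and absorb $\gamma_\phi(s_t)$ into the product. The one cosmetic difference is your empty-product convention: it puts the $\delta_{t+1}$ term inside the sum, which spares you the paper's separate step of merging $\gamma_\phi(s_t)\lambda\delta_{t+1}$ back in.
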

\begin{proof}
By induction on the recursion in Eq.~\eqref{eq:gae_adaptive}. 
The base case $\hat{A}_{T-1} = \delta_{T-1}$ is immediate. 
For the inductive step, substitute the expansion for 
$\hat{A}_{t+1}$ into $\hat{A}_t = \delta_t + 
\gamma_\phi(s_t)\lambda\,\hat{A}_{t+1}$ and collect terms.
See Appendix~\ref{app:proof_gae} for details.
\end{proof}

\begin{remark}[Temporal Segmentation]
The product structure $\prod_{k=0}^{l-1}\gamma_\phi(s_{t+k})$ 
has an important qualitative consequence: a single state 
$s_{t+j}$ with low $\gamma_\phi(s_{t+j})$ along the 
trajectory suppresses the contribution of all future 
TD residuals $\delta_{t+l}$ for $l > j$, effectively 
segmenting the trajectory. Conversely, stretches of 
high-$\gamma$ states enable long-range credit assignment. 
This provides a natural, learned mechanism for temporal 
abstraction that is absent under fixed $\gamma$.
\end{remark}

\begin{remark}[Value Function Targets]
If $n$-step return targets are used for the value function 
instead of GAE-based targets, the product-of-gammas 
structure applies analogously:
\begin{equation}
\small
V_{\text{target}}(s_t) = \sum_{k=0}^{n-1}
\left(\prod_{j=0}^{k-1}\gamma_\phi(s_{t+j})\right)r_{t+k}
+ \left(\prod_{j=0}^{n-1}\gamma_\phi(s_{t+j})\right)
V(s_{t+n}).
\label{eq:nstep_target}
\end{equation}
\end{remark}

\paragraph{Implementation detail: advantage normalization.}
State-dependent $\gamma$ causes the magnitude of advantage 
estimates to vary across states: high-$\gamma$ states 
accumulate more future TD residuals and tend to produce 
larger-magnitude advantages. This interacts with PPO's 
fixed clipping threshold $\epsilon$. We normalize advantages 
by subtracting the batch mean and dividing by the batch 
standard deviation, which absorbs most of the scale variation.

\paragraph{Frozen $\gamma$ during PPO epochs.}
During PPO's multiple optimization epochs on a single 
rollout, we hold $\gamma_\phi(s)$ fixed (computed once 
from the rollout states). This prevents the gamma network 
from co-adapting with the policy within a single update 
cycle, which would violate PPO's trust-region assumptions.

\subsection{Training the Gamma Network}
\label{sec:gamma_training}


\subsubsection{The Collapse Problem}
\label{sec:collapse}

A natural first attempt is to train $\gamma_\phi$ by 
minimizing the squared TD error $\delta_t^2$, since 
$\gamma_\phi(s_t)$ appears in $\delta_t$. However, 
this objective has a trivial optimum: pushing 
$\gamma_\phi(s) \to \gamma_{\min}$ everywhere makes 
the value function close to $V(s) \approx r(s,a)/(1-\gamma_{\min})$, 
a near-constant that produces small TD errors regardless 
of state. The gamma network has a ``shortcut'' to reduce 
loss by collapsing the horizon rather than by finding 
informative state-dependent structure.

\subsubsection{Return-Consistency Objective}
\label{sec:return_consistency}

We instead train the gamma network via a 
\emph{return-consistency} objective. The idea is to 
use multi-step returns as a supervision signal that the 
gamma network cannot trivially game.

For each transition $(s_t, a_t, r_t, s_{t+1})$, compute 
two value estimates:
\begin{enumerate}
\item The one-step bootstrap under the learned $\gamma$: 
$\hat{V}_1(s_t) = r_t + \gamma_\phi(s_t)\,V(s_{t+1})$.
\item An $n$-step Monte Carlo return under a reference 
discount $\bar{\gamma}$: 
$G_t^{(n)} = \sum_{k=0}^{n-1}\bar{\gamma}^k\,r_{t+k} 
+ \bar{\gamma}^n\,V(s_{t+n})$, computed with stop-gradient 
on $V$.
\end{enumerate}
The gamma network is trained to minimize:
\begin{equation}
L_\gamma^{\text{RC}}(\phi) = \mathbb{E}_{(s_t,a_t)\sim\mathcal{D}}
\left[\left(r_t + \gamma_\phi(s_t)\,
\text{sg}[V(s_{t+1})] 
- \text{sg}[G_t^{(n)}]\right)^2\right],
\label{eq:return_consistency}
\end{equation}
where $\text{sg}[\cdot]$ denotes the stop-gradient operator. 
The gamma network learns to set $\gamma_\phi(s_t)$ so that the local one-step bootstrap better matches a longer-horizon target, thereby adapting the degree of temporal propagation to the local state. In states where one-step bootstrapping provides a reliable summary of longer-horizon value, the learned discount tends to be larger. In states where immediate transitions are less predictive or more weakly aligned with longer-horizon outcomes, the learned discount tends to be smaller.
Critically, pushing $\gamma \to 0$ does \emph{not} minimize 
this loss because the one-step estimate $r_t + 0 \cdot V(s_{t+1}) = r_t$ 
will generally not equal the multi-step return $G_t^{(n)}$.

In its simplest form, $\bar{\gamma}$ can be fixed to a constant 
such as $0.99$. In our SAC safety implementation, however, we use 
a slowly adaptive reference discount to better match the current 
training distribution. After warmup, every $M$ episodes we sample 
a replay batch $\mathcal{B}$, compute the replay-mean predicted 
discount $\mathbb{E}_{s \sim \mathcal{B}}[\gamma_\phi(s)]$, and 
update
\begin{equation}
\bar{\gamma} \leftarrow (1-\tau_{\mathrm{ref}})\bar{\gamma}
+ \tau_{\mathrm{ref}} \, \mathbb{E}_{s \sim \mathcal{B}}[\gamma_\phi(s)].
\label{eq:gamma_ref_ema}
\end{equation}
This keeps the $n$-step supervisory target smoother than using a 
fully state-dependent product inside $G_t^{(n)}$, while allowing 
the effective reference horizon to track the replay distribution 
over the course of training.

For off-policy algorithms, the $n$-step return is computed 
from sequences of transitions in the replay buffer. 
For on-policy algorithms, the rollout data naturally 
provides multi-step returns. For SAC, \(V(s_{t+1})\) denotes the sampled soft value
\(\min_i Q_{\bar\theta_i}(s_{t+1},a')-\alpha\log\pi(a'|s_{t+1})\), with \(a'\sim\pi(\cdot|s_{t+1})\). For PPO, \(V\) is the learned state-value network.

\subsubsection{Cross-Validated Variant}
\label{sec:cross_validated}

As an alternative (or complement), we propose a 
cross-validated training procedure. Split the batch into 
two halves, $A$ and $B$. Update the value function on 
half $A$ for one gradient step. Compute TD errors on 
half $B$ using the updated value function. Train $\gamma_\phi$ 
to minimize the TD errors on half $B$:
\begin{equation}
L_\gamma^{\text{CV}}(\phi) = \mathbb{E}_{(s_t,a_t)\in B}
\left[\delta_t^{(A)}{}^2\right],
\label{eq:cross_validated}
\end{equation}
where $\delta_t^{(A)}$ denotes the TD error using the 
value function updated on split $A$. Since the value 
function was not optimized on the $B$ transitions, 
pushing $\gamma \to 0$ does not automatically reduce the 
loss, forcing the gamma network to find genuinely 
consistent $\gamma$ values.

\subsubsection{Full Training Objective}
\label{sec:full_objective}

The complete gamma network loss combines the 
return-consistency objective with regularization:
\begin{equation}
J_\gamma(\phi) = L_\gamma^{\text{RC}}(\phi) 
+ \lambda_{\text{dev}}\,\mathbb{E}_{s_t\sim\mathcal{D}}
\left[(\gamma_\phi(s_t) - \gamma_{\text{target}})^2\right]
+ \lambda_{\text{var}}\,\text{Var}_{s_t\sim\mathcal{D}}
[\gamma_\phi(s_t)]
+ \lambda_{\text{bound}}\,L_{\text{boundary}},
\label{eq:full_gamma_loss}
\end{equation}
where the deviation penalty anchors $\gamma_\phi$ near 
$\gamma_{\text{target}}$, the variance penalty encourages 
smoothness, and $L_{\text{boundary}}$ discourages values 
near the boundaries:
\begin{equation}
L_{\text{boundary}} = \mathbb{E}_{s_t\sim\mathcal{D}}
\left[\text{ReLU}(\gamma_{\min}+\epsilon_b - \gamma_\phi(s_t))
+ \text{ReLU}(\gamma_\phi(s_t) - \gamma_{\max}+\epsilon_b)\right].
\label{eq:boundary}
\end{equation}

\subsection{Operator-Level Analysis for State-Dependent Discounting}
\label{sec:theory}

We now analyze the Bellman operator induced by state-dependent discounting in a soft policy iteration setting. Our purpose is not to establish a full convergence guarantee for the complete AdaGamma algorithm in deep RL, which would additionally involve function approximation, stochastic optimization, target networks, and alternating actor--critic updates. Instead, the results in this section should be interpreted as supporting theory at the operator level.

For analytical clarity, we work in a tabular setting with finite actions. This allows us to isolate the effect of replacing a fixed discount factor by a state-dependent discount function and to study whether the resulting soft Bellman operator remains well-behaved. The analysis is most directly aligned with Bellman-style updates and SAC-style soft policy iteration, and does not directly characterize PPO-specific estimator properties such as generalized advantage estimation, clipping, or on-policy minibatch optimization.

\subsubsection{Soft Policy Evaluation}

Define the modified Bellman backup operator:
\begin{equation}
\widetilde{\mathcal{T}}^\pi Q(s_t, a_t) \triangleq 
r(s_t, a_t) + \gamma(s_t)\,
\mathbb{E}_{s_{t+1}\sim p}[V(s_{t+1})],
\label{eq:bellman_op}
\end{equation}
where $V(s_t) = \mathbb{E}_{a_t\sim\pi(\cdot|s_t)}
[Q(s_t,a_t) - \alpha\log\pi(a_t|s_t)]$.

\begin{restatable}{lemma}{policyeval}(Soft Policy Evaluation with Adaptive Discount)
\label{lem:policy_eval}
Consider the operator $\widetilde{\mathcal{T}}^\pi$ in 
Eq.~\eqref{eq:bellman_op} and a mapping 
$Q^0: \mathcal{S}\times\mathcal{A} \to \mathbb{R}$ with 
$|\mathcal{A}| < \infty$. Assume 
$\beta \triangleq \sup_{s_t\in\mathcal{S}}\gamma(s_t) < 1$, 
and define $Q^{k+1} = \widetilde{\mathcal{T}}^\pi Q^k$. 
Then $Q^k$ converges to the unique soft Q-value function 
of $\pi$ under $\gamma(\cdot)$ as $k\to\infty$.
\end{restatable}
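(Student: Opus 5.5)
The plan is to mirror the standard soft policy evaluation argument of SAC, now with a state-dependent factor. First I absorb the entropy term into the reward. Define
\[
r_\pi(s_t,a_t) \triangleq r(s_t,a_t) - \gamma(s_t)\,\mathbb{E}_{s_{t+1}\sim p}\bigl[\mathbb{E}_{a_{t+1}\sim\pi}[\alpha\log\pi(a_{t+1}|s_{t+1})]\bigr].
\]
With this, the operator in Eq.~\eqref{eq:bellman_op} becomes
\[
\widetilde{\mathcal{T}}^\pi Q(s_t,a_t) = r_\pi(s_t,a_t) + \gamma(s_t)\,\mathbb{E}_{s_{t+1},a_{t+1}}[Q(s_{t+1},a_{t+1})].
\]
Since $|\mathcal{A}|<\infty$, the policy entropy is bounded by $\log|\mathcal{A}|$. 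Together with $r\in[r_{\min},r_{\max}]$ and $\gamma(\cdot)\le\beta<1$, this makes $r_\pi$ bounded. So $\widetilde{\mathcal{T}}^\pi$ maps the Banach space $B(\mathcal{S}\times\mathcal{A})$ of bounded functions with the sup norm into itself. If $Q^0$ is not assumed bounded, I will note that $Q^1-Q^0$ may be unbounded but all iterates after the first lie in $B$ up to the fixed affine part; to keep it clean I will assume $Q^0$ bounded, as is implicit in the tabular setting.

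Second, I establish the contraction. For $Q,Q'\in B$, the difference $\widetilde{\mathcal{T}}^\pi Q-\widetilde{\mathcal{T}}^\pi Q'$ equals $\gamma(s_t)\,\mathbb{E}[Q-Q'](s_{t+1},a_{t+1})$ pointwise. This gives
\[
\|\widetilde{\mathcal{T}}^\pi Q-\widetilde{\mathcal{T}}^\pi Q'\|_\infty \le \sup_s\gamma(s)\,\|Q-Q'\|_\infty = \beta\|Q-Q'\|_\infty.
\]
The only new point relative to fixed $\gamma$ is that the state-dependent factor multiplies the whole expectation and is pulled out via its supremum $\beta$. Banach's fixed-point theorem then gives a unique fixed point $Q^\ast$ and the convergence $Q^k\to Q^\ast$ geometrically at rate $\beta^k$.

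Third, I identify $Q^\ast$ with ``the soft Q-value function of $\pi$ under $\gamma(\cdot)$''. This step needs the most care, because the paper has not formally defined that object. I will define it as the expected entropy-augmented return with cumulative discount $\prod_{j=0}^{t-1}\gamma(s_j)$ applied to the step-$t$ reward. This matches the product structure in Proposition~\ref{prop:gae} and Eq.~\eqref{eq:nstep_target}.

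To make the identification, I expand $Q^\ast=(\widetilde{\mathcal{T}}^\pi)^n Q^\ast$ by unrolling $n$ times. The remainder term is bounded by $\beta^n\|Q^\ast\|_\infty\to0$. The partial sums converge absolutely because each product of discounts is at most $\beta^t$ and the augmented rewards are bounded. Hence $Q^\ast$ equals the series, which establishes both existence and the uniqueness claimed in the statement.
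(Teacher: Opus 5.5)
Your proposal is correct and takes essentially the same route as the paper. Both fold the entropy term into an augmented reward, show $\widetilde{\mathcal{T}}^\pi$ is a $\beta$-contraction in the sup norm by pulling out $\sup_s\gamma(s)$, and conclude with Banach's fixed-point theorem.

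Your additions are rigor the paper leaves implicit: bounding $r_\pi$ via $\mathcal{H}\le\log|\mathcal{A}|$, working with bounded $Q^0$, and identifying the fixed point with the return series discounted by $\prod_{j<t}\gamma(s_j)$. The one exception is your side remark about unbounded $Q^0$, which does not hold as stated. Drop it in favor of the boundedness assumption you then adopt.
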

\begin{proof}
The operator $\widetilde{\mathcal{T}}^\pi$ is a 
$\beta$-contraction in the sup-norm:
$\|\widetilde{\mathcal{T}}^\pi Q_1 - 
\widetilde{\mathcal{T}}^\pi Q_2\|_\infty 
\leq \beta \|Q_1 - Q_2\|_\infty$.
The result follows from the Banach fixed-point theorem. 
See Appendix~\ref{app:proof_eval} for details.
\end{proof}


\subsubsection{Soft Policy Improvement}

In the policy improvement step, we project onto the 
policy class $\Pi$:
\begin{equation}
\small
\pi_{\text{new}}
=
\arg\min_{\pi'\in\Pi}
D_{\mathrm{KL}}
\left(
\pi'(\cdot|s)
\;\middle\|\;
\frac{\exp(Q^{\pi_{\text{old}}}(s,\cdot)/\alpha)}
{Z^{\pi_{\text{old}}}(s)}
\right).
\label{eq:policy_improve}
\end{equation}

\begin{restatable}{lemma}{policyimprove}(Soft Policy Improvement with Adaptive Discount)
\label{lem:policy_improve}
Let \(\pi_{\mathrm{old}}\in\Pi\) and let \(\pi_{\mathrm{new}}\) be the optimizer of Eq~\ref{eq:policy_improve}.
Assume \(\gamma(s)\in[0,1)\) for all \(s\), with
\(\beta=\sup_s\gamma(s)<1\). Then
\[
Q^{\pi_{\mathrm{new}}}(s,a)
\ge
Q^{\pi_{\mathrm{old}}}(s,a),
\qquad
\forall (s,a)\in\mathcal{S}\times\mathcal{A}.
\]
\end{restatable}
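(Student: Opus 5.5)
The argument follows the soft policy improvement lemma of SAC, with the state-dependent discount $\gamma(s_t)$ carried along instead of a constant. Write $V^{\pi}(s)=\mathbb{E}_{a\sim\pi}[Q^{\pi}(s,a)-\alpha\log\pi(a|s)]$, where $Q^{\pi}$ is the unique fixed point of $\widetilde{\mathcal{T}}^\pi$. Lemma~\ref{lem:policy_eval} guarantees this fixed point exists, since $\beta<1$.

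\emph{Step 1: a pointwise value inequality.} Because $\pi_{\mathrm{new}}$ minimizes the KL divergence in Eq.~\eqref{eq:policy_improve} over $\Pi$, and $\pi_{\mathrm{old}}\in\Pi$ is feasible, we have
$D_{\mathrm{KL}}(\pi_{\mathrm{new}}(\cdot|s)\,\|\,\exp(Q^{\pi_{\mathrm{old}}}(s,\cdot)/\alpha)/Z)\le D_{\mathrm{KL}}(\pi_{\mathrm{old}}(\cdot|s)\,\|\,\exp(Q^{\pi_{\mathrm{old}}}(s,\cdot)/\alpha)/Z)$.
The partition function $Z^{\pi_{\mathrm{old}}}(s)$ depends only on $s$, so it cancels. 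Multiplying by $-\alpha$ gives, for every $s$,
\[
\mathbb{E}_{a\sim\pi_{\mathrm{new}}}\bigl[Q^{\pi_{\mathrm{old}}}(s,a)-\alpha\log\pi_{\mathrm{new}}(a|s)\bigr]\ \ge\ V^{\pi_{\mathrm{old}}}(s).
\]
This step never uses the discount.

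\emph{Step 2: unroll the Bellman equation.} Start from $Q^{\pi_{\mathrm{old}}}(s_t,a_t)=r(s_t,a_t)+\gamma(s_t)\mathbb{E}_{s_{t+1}}[V^{\pi_{\mathrm{old}}}(s_{t+1})]$. Since $\gamma(s_t)\ge 0$, we may apply Step 1 inside the expectation without flipping the inequality. This yields
\[
Q^{\pi_{\mathrm{old}}}\le r+\gamma(s_t)\mathbb{E}_{s_{t+1},a_{t+1}\sim\pi_{\mathrm{new}}}\bigl[Q^{\pi_{\mathrm{old}}}(s_{t+1},a_{t+1})-\alpha\log\pi_{\mathrm{new}}(a_{t+1}|s_{t+1})\bigr]=\widetilde{\mathcal{T}}^{\pi_{\mathrm{new}}}Q^{\pi_{\mathrm{old}}}.
\]
Rather than expanding this as a product-of-gammas series, I would argue with operators. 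The operator $\widetilde{\mathcal{T}}^{\pi_{\mathrm{new}}}$ is monotone: if $Q_1\le Q_2$ pointwise, then $\widetilde{\mathcal{T}}^{\pi_{\mathrm{new}}}Q_1\le\widetilde{\mathcal{T}}^{\pi_{\mathrm{new}}}Q_2$. This holds because $\gamma(s)\ge0$ and the entropy term does not depend on $Q$.

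Induction then gives $Q^{\pi_{\mathrm{old}}}\le(\widetilde{\mathcal{T}}^{\pi_{\mathrm{new}}})^kQ^{\pi_{\mathrm{old}}}$ for all $k$. By Lemma~\ref{lem:policy_eval} applied to $\pi_{\mathrm{new}}$, the right-hand side converges in sup-norm to $Q^{\pi_{\mathrm{new}}}$. Pointwise inequalities survive this limit, so $Q^{\pi_{\mathrm{old}}}\le Q^{\pi_{\mathrm{new}}}$.

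\emph{Main obstacle.} The delicate point is making sure every quantity is finite, so that Lemma~\ref{lem:policy_eval} and the limit step are legitimate. Rewards are bounded and $|\mathcal{A}|<\infty$, so the entropy of any policy is bounded by $\log|\mathcal{A}|$. The terms $-\alpha\log\pi_{\mathrm{new}}$ therefore have bounded expectation. Together with $\beta<1$, this gives $\|Q^{\pi}\|_\infty\le (r_{\max}+\alpha\log|\mathcal{A}|)/(1-\beta)$ in magnitude, so all iterates stay in the bounded-function space on which the contraction argument runs.

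I would also note explicitly that the state-dependence of $\gamma$ enters only through the nonnegativity of $\gamma(s_t)$, which preserves the inequalities, and through $\beta<1$, which supplies the contraction. No other property of the discount is needed.
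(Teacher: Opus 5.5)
Your proposal is correct and follows essentially the same route as the paper's proof. The KL-projection argument gives $\mathbb{E}_{a\sim\pi_{\mathrm{new}}}[Q^{\pi_{\mathrm{old}}}(s,a)-\alpha\log\pi_{\mathrm{new}}(a|s)]\ge V^{\pi_{\mathrm{old}}}(s)$; substituting this into the adaptive-discount Bellman equation yields $Q^{\pi_{\mathrm{old}}}\le\widetilde{\mathcal{T}}^{\pi_{\mathrm{new}}}Q^{\pi_{\mathrm{old}}}$, and iterating with the contraction of Lemma~\ref{lem:policy_eval} finishes it. Your explicit monotonicity/induction step and boundedness check (via $\gamma(s)\ge 0$ and $|\mathcal{A}|<\infty$) just spell out what the paper compresses into ``iterating this inequality.''
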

\begin{proof}
See Appendix~\ref{app:proof_improve}.
\end{proof}
\begin{restatable}{theorem}{convergence}(Soft Policy Iteration with Adaptive Discount)
    \label{thm:convergence}
Repeated application of soft policy evaluation and soft 
policy improvement from any $\pi\in\Pi$ converges to a 
policy $\pi^*$ such that 
$Q^{\pi^*}(s_t,a_t) \geq Q^\pi(s_t,a_t)$ for all 
$\pi\in\Pi$ and $(s_t,a_t)\in\mathcal{S}\times\mathcal{A}$, 
assuming $|\mathcal{A}|<\infty$ and 
$\beta = \sup_{s_t}\gamma(s_t)<1$.
\end{restatable}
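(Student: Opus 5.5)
We follow the standard soft policy iteration argument of \citet{haarnoja2018soft}, with Lemma~\ref{lem:policy_eval} and Lemma~\ref{lem:policy_improve} as the two building blocks.

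\textbf{Step 1: monotone bounded sequence.} Let $\pi_0=\pi$ and let $\pi_{i+1}$ be obtained from $\pi_i$ by evaluating $Q^{\pi_i}$ via Lemma~\ref{lem:policy_eval} and then projecting via Eq.~\eqref{eq:policy_improve}. Lemma~\ref{lem:policy_improve} gives $Q^{\pi_{i+1}}\ge Q^{\pi_i}$ pointwise, so the sequence is monotone. It is also bounded above. Rewards are bounded, and with $|\mathcal{A}|<\infty$ the entropy is at most $\log|\mathcal{A}|$. Since every discount is at most $\beta<1$, the fixed point of $\widetilde{\mathcal{T}}^{\pi}$ satisfies
\[
Q^{\pi}\le \frac{r_{\max}+\beta\alpha\log|\mathcal{A}|}{1-\beta}
\]
for every $\pi$. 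This follows by applying the comparison argument behind the contraction in Lemma~\ref{lem:policy_eval} to the constant function. Hence $Q^{\pi_i}$ converges pointwise to some limit. Call the limit policy $\pi^*$. In the tabular finite-action setting we can pass to a convergent subsequence of policies by compactness of the simplex, or argue directly as in SAC.

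\textbf{Step 2: the limit is a fixed point of improvement.} At convergence, $\pi^*$ must minimize Eq.~\eqref{eq:policy_improve} against $Q^{\pi^*}$ itself. The reason is that if some $\pi'$ had strictly smaller KL at some state, the strict version of the improvement inequality would give $Q^{\pi'}>Q^{\pi^*}$ somewhere, contradicting convergence. Hence, for every $\pi\in\Pi$ and every $s$,
\[
\mathbb{E}_{a\sim\pi^*}\bigl[Q^{\pi^*}(s,a)-\alpha\log\pi^*(a|s)\bigr]\ge \mathbb{E}_{a\sim\pi}\bigl[Q^{\pi^*}(s,a)-\alpha\log\pi(a|s)\bigr],
\]
which is the usual rewriting of the KL inequality, since the partition function $Z(s)$ does not depend on the policy.

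\textbf{Step 3: optimality against all $\pi\in\Pi$.} Write $V^{\pi^*}$ for the soft value of $\pi^*$, so the right-hand side above is at most $V^{\pi^*}(s)$. Expand $Q^{\pi^*}(s,a)=r(s,a)+\gamma(s)\mathbb{E}_{s'}[V^{\pi^*}(s')]$ and substitute the inequality repeatedly:
\[
Q^{\pi^*}(s,a)\ge r(s,a)+\gamma(s)\,\mathbb{E}_{s',a'\sim\pi}\bigl[Q^{\pi^*}(s',a')-\alpha\log\pi(a'|s')\bigr]=\widetilde{\mathcal{T}}^{\pi}Q^{\pi^*}(s,a).
\]
The operator $\widetilde{\mathcal{T}}^{\pi}$ is monotone, because $\gamma(s)\ge0$. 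Iterating therefore gives $Q^{\pi^*}\ge(\widetilde{\mathcal{T}}^{\pi})^kQ^{\pi^*}$ for every $k$. By Lemma~\ref{lem:policy_eval} the right-hand side converges to $Q^\pi$, so $Q^{\pi^*}\ge Q^\pi$. The state-dependent discount enters only as a nonnegative multiplier bounded by $\beta$, so every step goes through exactly as in the constant-$\gamma$ proof.

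\textbf{Main obstacle.} Step 2 is the delicate part: justifying that the limit of $\pi_i$ exists and is itself KL-optimal against its own $Q$. I would handle it by working with the finite-action tabular setting. There, $Q^{\pi}$ depends continuously on $\pi$, since it is the fixed point of a uniformly $\beta$-contractive family, and the projection in Eq.~\eqref{eq:policy_improve} is continuous. A limit point $\pi^*$ of the sequence then satisfies the fixed-point condition, and the monotone convergence of $Q^{\pi_i}$ transfers to $Q^{\pi^*}$.
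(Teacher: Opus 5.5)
Your proposal is correct and follows essentially the same route as the paper's proof: $Q^{\pi_i}$ is monotone and bounded via Lemma~\ref{lem:policy_improve}, the limit policy is KL-optimal against its own $Q^{\pi^*}$, and iterating $Q^{\pi^*}\ge\widetilde{\mathcal{T}}^{\pi}Q^{\pi^*}$ gives the result; your limit-point treatment of Step~2 is more careful than the paper's bare ``at convergence'' assertion. One minor fix: the explicit constant $(r_{\max}+\beta\alpha\log|\mathcal{A}|)/(1-\beta)$ is not a valid upper bound once $\gamma(\cdot)$ varies and $r_{\max}<-\alpha\log|\mathcal{A}|$ (at a state with $\gamma(s)=0$ one has $Q^\pi(s,a)=r(s,a)$, which can exceed it), so use e.g.\ $(\max(r_{\max},0)+\alpha\log|\mathcal{A}|)/(1-\beta)$ instead, since only boundedness is needed.
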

\begin{proof}
See Appendix~\ref{app:proof_convergence}.
\end{proof}

\subsubsection{Error Bound: Cost of Ignoring State-Dependent Structure}

We quantify the gap between the Q-functions obtained under 
state-dependent $\gamma(s)$ versus a fixed $\gamma$.
\begin{restatable}{theorem}{errorgap}(Error Gap)
    \label{thm:error_gap}
Let $Q_1^\pi$ and $Q_2^\pi$ be the soft Q-functions under 
the same policy $\pi$ and transition $P$, using 
state-dependent $\gamma(s)$ and fixed $\gamma$ respectively. 
Assume $R := \max_{s,a}|r(s,a)| < \infty$, 
$\epsilon := \min_{s,a}\pi(a|s) > 0$, and 
$\beta := \max_s \gamma(s) < 1$. Then:
\begin{equation}
\small
\|Q_1^\pi - Q_2^\pi\|_\infty \leq 
\frac{\max_{s\in\mathcal{S}}|\gamma(s)-\gamma|}
{(1-\beta)(1-\gamma)}
\left(R + \alpha \log(1/\epsilon)\right).
\label{eq:error_bound}
\end{equation}
\end{restatable}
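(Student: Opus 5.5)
The plan is to compare the two fixed-point equations directly. By Lemma~\ref{lem:policy_eval}, $Q_1^\pi$ is the unique fixed point of $\widetilde{\mathcal{T}}^\pi$ with discount $\gamma(\cdot)$, and $Q_2^\pi$ is the fixed point of the standard soft operator with constant $\gamma$. Write $V_i(s)=\mathbb{E}_{a\sim\pi}[Q_i^\pi(s,a)-\alpha\log\pi(a|s)]$ and $(PV)(s,a)=\mathbb{E}_{s'\sim p(\cdot|s,a)}[V(s')]$. Subtracting the two Bellman equations gives
\begin{equation}
Q_1^\pi-Q_2^\pi=\gamma(s)\,P(V_1-V_2)+(\gamma(s)-\gamma)\,PV_2 .
\end{equation}
Since $V_1-V_2=\mathbb{E}_{a\sim\pi}[Q_1^\pi-Q_2^\pi]$ (the entropy terms cancel), we get $\|P(V_1-V_2)\|_\infty\le\|Q_1^\pi-Q_2^\pi\|_\infty$. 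Taking sup-norms then yields
\begin{equation}
\|Q_1^\pi-Q_2^\pi\|_\infty\le\beta\|Q_1^\pi-Q_2^\pi\|_\infty+\max_s|\gamma(s)-\gamma|\,\|V_2\|_\infty .
\end{equation}
Rearranging with $\beta<1$ gives the factor $1/(1-\beta)$.

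It then remains to bound $\|V_2\|_\infty\le (R+\alpha\log(1/\epsilon))/(1-\gamma)$. The fixed-$\gamma$ soft value satisfies
\begin{equation}
V_2(s)=\mathbb{E}_{a\sim\pi}\big[r(s,a)-\alpha\log\pi(a|s)\big]+\gamma\,\mathbb{E}[V_2(s')].
\end{equation}
The per-step entropy-augmented reward is bounded in absolute value by $R+\alpha\log(1/\epsilon)$: because $\pi(a|s)\ge\epsilon$, we have $0\le-\log\pi(a|s)\le\log(1/\epsilon)$. Unrolling the geometric series, or equivalently applying the same sup-norm fixed-point argument to $V_2$ alone, gives the bound. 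Substituting this into the rearranged inequality produces exactly Eq.~\eqref{eq:error_bound}.

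The only delicate point is the choice of which side carries which discount. Grouping the difference as above, with the state-dependent discount multiplying the difference and the fixed-$\gamma$ value $V_2$ absorbing the discount gap, gives the stated $(1-\beta)(1-\gamma)$ denominator. The opposite grouping would give $(1-\gamma)(1-\beta)$ with $\|V_1\|_\infty$, whose bound is $(\cdot)/(1-\beta)$ rather than $(\cdot)/(1-\gamma)$, so both orderings in fact yield the same constant. Either choice works, but I would commit to the first grouping.

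A secondary check is that the $\epsilon$ lower bound makes $-\log\pi$ bounded. This needs $|\mathcal{A}|<\infty$, which holds in this tabular setting.
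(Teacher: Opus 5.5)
Your proof is correct and is essentially the paper's argument. The paper writes $\tilde Q_1^\pi-\tilde Q_2^\pi=(I-\Gamma P_\pi)^{-1}(\Gamma-\gamma I)P_\pi(I-\gamma P_\pi)^{-1}\tilde r_\pi$, which is the matrix form of your grouping, and bounds the factors by $1/(1-\beta)$, $\max_s|\gamma(s)-\gamma|$ and $(R+\alpha\log(1/\epsilon))/(1-\gamma)$; your sup-norm rearrangement replaces the explicit resolvent bound and is equally valid, since both $Q$-functions are bounded.
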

\begin{proof}
See Appendix~\ref{app:proof_error}.
\end{proof}

The bound shows that the discrepancy scales linearly with the maximum deviation $\max_s |\gamma(s)-\gamma|$ between the state-dependent and fixed-discount formulations. This suggests that when temporal sensitivity varies substantially across states, a single global discount factor may provide a coarse approximation to the induced value-propagation pattern, whereas a state-dependent discount function offers greater flexibility.



\section{Numerical Experiments}
\label{sec:experiments}

\subsection{Experimental Setup}
\label{sec:setup}
\textbf{Algorithms and baselines.} We instantiate AdaGamma in SAC \citep{haarnoja2018soft} and PPO \citep{schulman2017proximal}, using the same objective (Eq.~\eqref{eq:full_gamma_loss}) with algorithm-specific adapters (off-policy for SAC, on-policy for PPO). The gamma network is a two-layer MLP (hidden size 256) trained via the return-consistency objective of Section~\ref{sec:return_consistency}. Under a matched training budget, we compare against: fixed-$\gamma$ SAC/PPO ($\gamma{=}0.99$), an uncertainty-rule adaptive-$\gamma$ baseline (inspired by \cite{kim2022adaptive}; details in Appendix~\ref{app:uncertainty_rule}), and a cross-validated adaptive-$\gamma$ network. All methods use 5 seeds and are evaluated every $10^4$ steps. For AdaGamma, we use 5-step returns, initialize $\gamma$ and $\gamma_{\mathrm{ref}}$ at $0.98$, warm up for $10^5$ steps, and update $\gamma_{\mathrm{ref}}$ every 5 episodes (EMA coefficient $0.1$).

\textbf{Environments.} We evaluate AdaGamma in four settings: \texttt{SafetyPointGoal1-v0}~\cite{ji2023safety} for reward--cost tradeoffs and seed-shift generalization; \texttt{Humanoid-v4} and \texttt{Ant-v4} for high-dimensional continuous control; classic Gymnasium tasks~\cite{towers2024gymnasium}  for smaller-scale SAC/PPO comparisons; and a JD Logistics target-marketing deployment for real population-specific decision making. Unless otherwise stated, we report mean $\pm$ standard deviation over random seeds.

Hyperparameter details are given in Appendix ~\ref{app:hyperparams}; learning curves for SAC-AdaGamma and PPO-AdaGamma on \texttt{SafetyPointGoal1-v0}, \texttt{Humanoid-v4}, and \texttt{Ant-v4} are shown in Appendix ~\ref{app:reward_curves}.

\begin{table}[t]
  \centering
 \caption{Test performance of fixed, adaptive, and AdaGamma variants on \texttt{SafetyPointGoal1-v0}, \texttt{Humanoid-v4}, and \texttt{Ant-v4}.}
  \label{tab:main_results}
  \scalebox{0.6}{
  \begin{tabular}{l|cc||l|cc}
    \toprule
    Method & Reward & Cost & Method & Reward & Cost\\
    \midrule
    \multicolumn{6}{c}{\texttt{SafetyPointGoal1-v0}} \\
    \hline
    SAC-Fixed-$\gamma$    & $27.82 \pm 0.91$ & $52.53 \pm 40.75$ & PPO-Fixed-$\gamma$    & $22.58 \pm 1.34$ & $51.26 \pm 34.51$ \\
    SAC-CrossValidate     & $27.34 \pm 1.05$ & $45.84 \pm 31.14$ & PPO-CrossValidate     & $20.62 \pm 5.81$ & $54.13 \pm 27.09$ \\
    SAC-Uncertainty       & $27.47 \pm 1.24$ & $37.46 \pm 32.93$ & PPO-Uncertainty       & $22.59 \pm 1.35$ & $51.25 \pm 34.51$ \\
    \hline
    SAC-AdaGamma          & $\mathbf{28.25 \pm 1.05}$ & $\mathbf{29.37 \pm 18.59}$ & PPO-AdaGamma          & $\mathbf{26.31 \pm 0.90}$ & $\mathbf{41.82 \pm 19.42}$ \\
    \hline
    \multicolumn{6}{c}{\texttt{Humanoid-v4}} \\
    \hline
    SAC-Fixed-$\gamma$      & 5370.80 $\pm$ 11.24 &  - & PPO-Fixed-$\gamma$      & 411.61 $\pm$ 39.99   & -\\
    SAC-CrossValidate       & 616.25 $\pm$ 21.88  &  - & PPO-CrossValidate       & 291.42 $\pm$ 14.48   & -\\
    SAC-Uncertainty         & 6606.08 $\pm$ 59.69  &  - & PPO-Uncertainty         & 434.32 $\pm$ 98.70   & -\\
    \hline
    SAC-AdaGamma            &  \textbf{6907.99 $\pm$ 21.76}  & - & PPO-AdaGamma            & \textbf{476.51 $\pm$ 51.29}   & - \\
    \hline
    \multicolumn{6}{c}{\texttt{Ant-v4}} \\
    \hline
    SAC-Fixed-$\gamma$      & 3101.48 $\pm$ 748.16  & -  & PPO-Fixed-$\gamma$      & 989.57 $\pm$ 113.71  & -  \\
    SAC-CrossValidate       & 1399.44 $\pm$ 107.24  & -  & PPO-CrossValidate       &  1036.33 $\pm$ 110.85  & -   \\
    SAC-Uncertainty         & 3620.87 $\pm$ 187.90  & -  & PPO-Uncertainty         & 1018.51 $\pm$ 222.44  & -  \\
    \hline
    SAC-AdaGamma            & \textbf{4129.57 $\pm$ 1155.08}  & -   & PPO-AdaGamma            & \textbf{1172.47 $\pm$ 189.84}   & -  \\
    \bottomrule
  \end{tabular}
  }
\vspace{-10pt}
\end{table}

\subsection{Performance in \texttt{SafetyPointGoal1-v0}}
\label{sec:safety_results}

We begin with \texttt{SafetyPointGoal1-v0}, where the agent faces a reward--cost tradeoff with a nontrivial decision boundary. This makes the task sensitive to temporal credit assignment: some states favor shorter-horizon caution, while others require longer-horizon planning. Table~\ref{tab:main_results} summarizes the resulting reward and cumulative cost.

AdaGamma gives the strongest overall performance for both SAC and PPO. The uncertainty-based adaptive-$\gamma$ baseline already improves on fixed-$\gamma$, which supports the need for temporal adaptation in this setting. AdaGamma further strengthens this effect, indicating that the key benefit is not only adapting the discount factor, but doing so in a state-dependent and training-stable way.

\subsection{High-Dimensional Control: Humanoid and Ant}
\label{sec:humanoid_ant_results}
We further evaluate AdaGamma on \texttt{Humanoid-v4} and \texttt{Ant-v4}, two high-dimensional MuJoCo locomotion tasks with different temporal characteristics. Results are shown in Table~\ref{tab:main_results}, with qualitative snapshots in Appendix~\ref{app:snapshots}. AdaGamma attains the best test performance for both SAC and PPO on both tasks, with the largest gains appearing for SAC on \texttt{Humanoid-v4}. The weak performance of the cross-validated baseline on \texttt{Humanoid-v4} suggests that a single global discount is brittle, while the gains on \texttt{Ant-v4} support the value of state-adaptive temporal weighting in contact-rich dynamics. These results indicate that AdaGamma scales well to high-dimensional continuous-control settings.

\subsection{Online deployment on JD Logistics platform.}
\label{sec:jd_logistics}
We further evaluate AdaGamma in a live recommender system on the JD Logistics platform through a controlled four-week online A/B test. We allocate 10\% of production traffic to the experiment, split evenly between AdaGamma-enhanced SAC and a standard SAC baseline. Users are randomly assigned based on behavioral indicators, and pre-experiment statistics (order volume, coupon claiming volume, page views, click volume, and activity level) are balanced across groups under standard significance tests. During the experiment, the two groups are treated identically except for the deployed recommendation policy, allowing the observed difference to be attributed to the RL algorithm.

Our primary metric is the uplift in recommendation-induced order volume. Each week, we measure the incremental improvement over the previous seven days relative to the initial state at group assignment. As shown in Figure~\ref{fig:JD_res}, AdaGamma consistently outperforms standard SAC over four weeks with statistically significant gains. These results suggest that adaptive discounting captures user-state-dependent planning horizons beyond what a single global discount can model.

\begin{figure}
     \centering
    \includegraphics[width=0.5\linewidth]{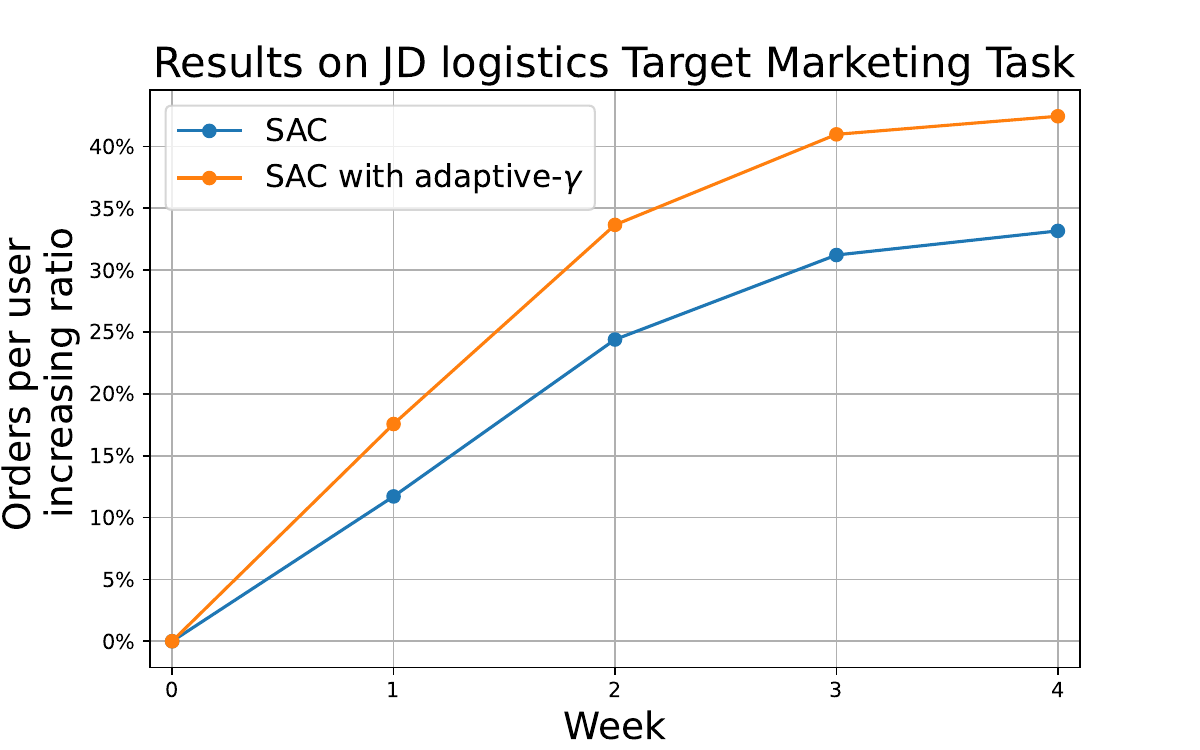}
   \caption{Four-week online A/B test on the JD Logistics target marketing task. The plotted metric is the increase ratio of average orders per user for standard SAC and SAC with AdaGamma.}
    \label{fig:JD_res}
\vspace{-10pt}
\end{figure}

\subsection{Analysis of Learned $\gamma_\phi(s)$}
\label{sec:gamma_analysis}

\textbf{Statewise structure.}
We analyze the learned discount on \texttt{Ant-v4} using SAC-AdaGamma. Figure~\ref{fig:heatmap} visualizes \(\gamma_\phi(s)\) on two low-dimensional state projections based on torso height and planar velocity, and shows its empirical distribution. Since these plots are constructed from rollout states of a fixed trained policy, they reflect the visitation distribution rather than a uniform scan of the full state space. Together with the fixed-\(\gamma\) ablation in Table~\ref{tab:ab_fixed_gamma}, they indicate that the gains are due to state dependence rather than a different average discount alone.

\begin{figure}[ht]
    \centering
    \begin{subfigure}[b]{0.24\textwidth}
        \centering
        \includegraphics[width=\textwidth]{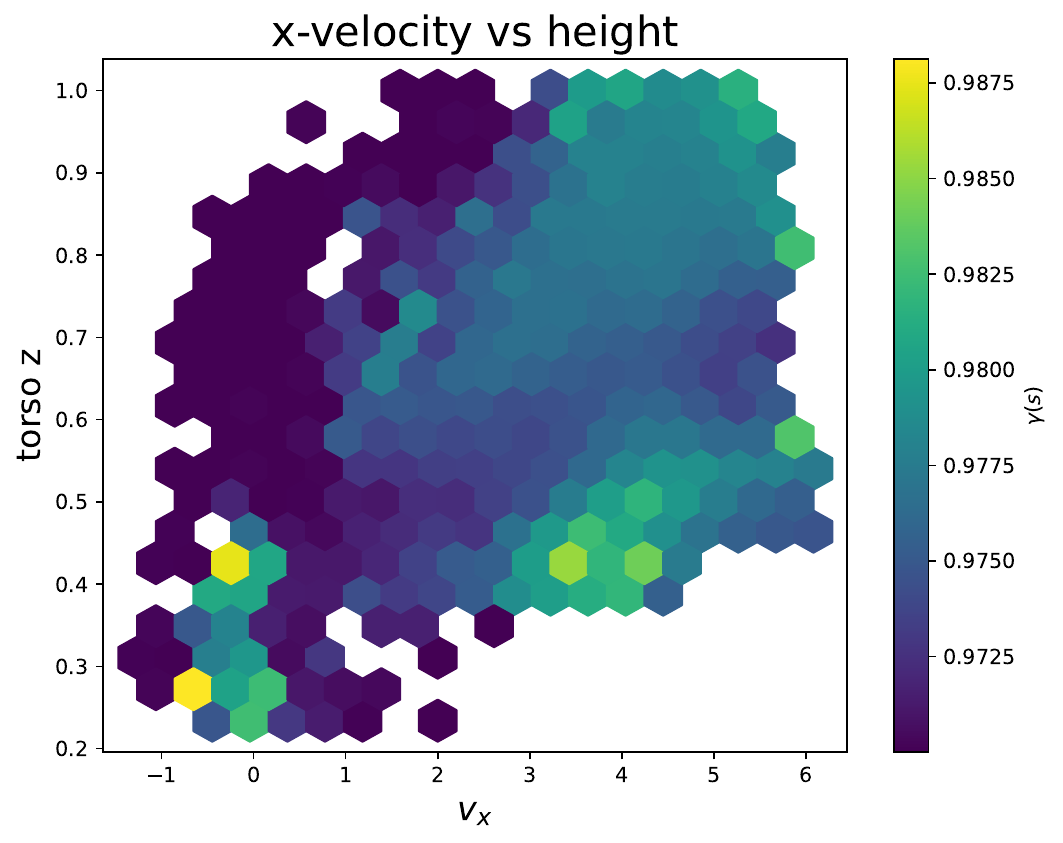}
    \end{subfigure}
    \hfill
    \begin{subfigure}[b]{0.23\textwidth}
        \centering
        \includegraphics[width=\textwidth]{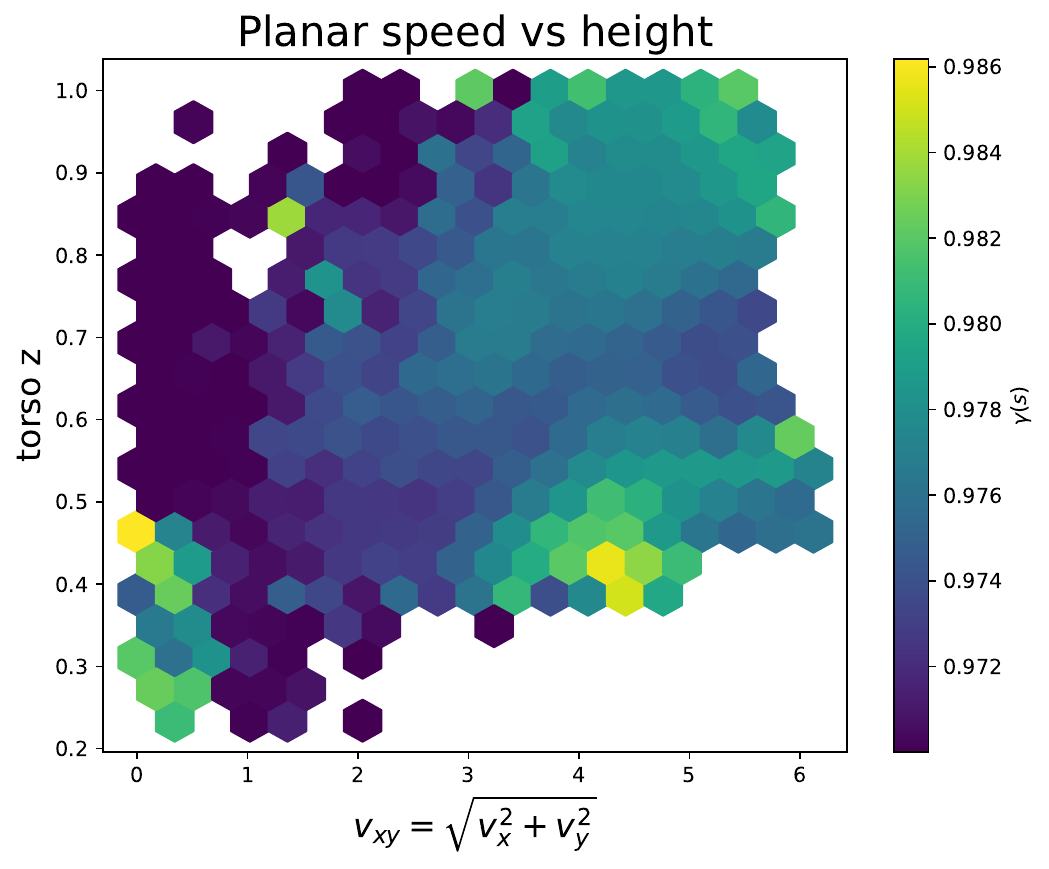}
    \end{subfigure}
    \begin{subfigure}[b]{0.48\textwidth}
        \centering
        \includegraphics[width=\textwidth]{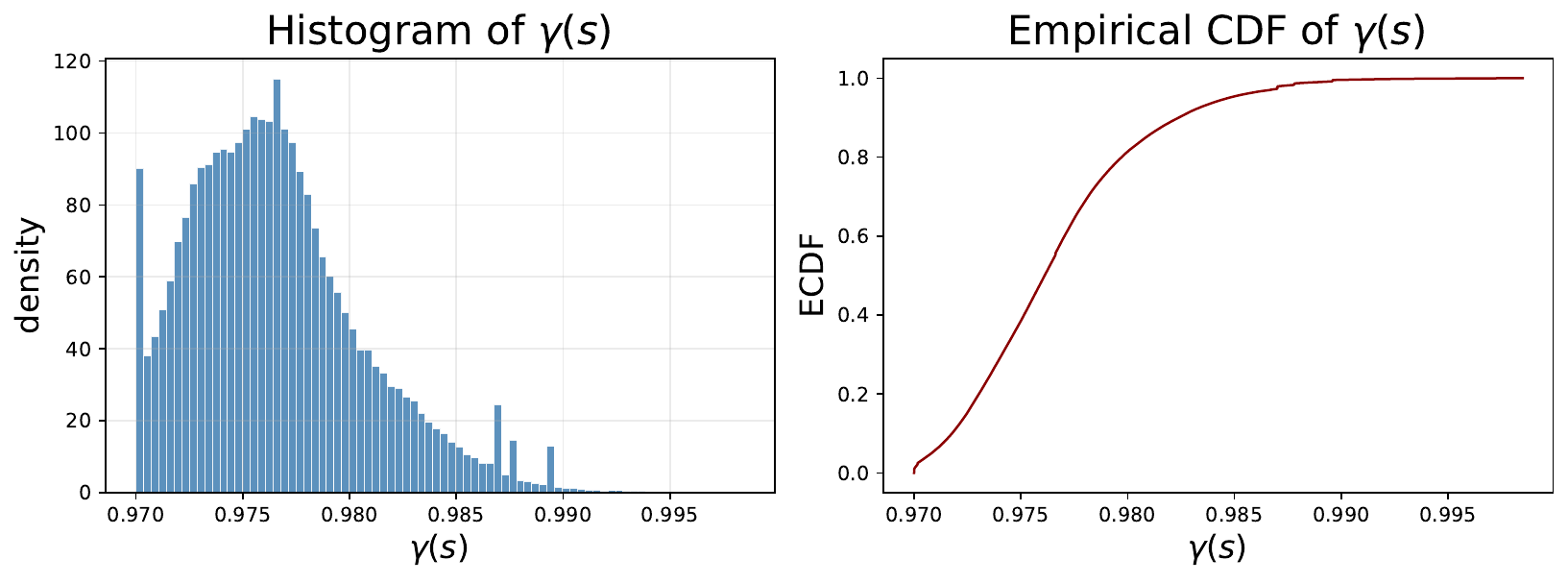}
    \end{subfigure}
    \caption{Statewise heatmaps and empirical distribution \(\gamma_\phi(s)\) learned by SAC-AdaGamma on \texttt{Ant-v4}.}
    \label{fig:heatmap}
    \vspace{-10pt}
\end{figure}


\textbf{Cross-algorithm consistency.}
Table~\ref{tab:gamma_cross_algorithm} in Appendix~\ref{app:consistency} reports the mean learned discount for SAC-AdaGamma and PPO-AdaGamma. The two methods are closely aligned on \texttt{SafetyPointGoal1-v0}, \texttt{Humanoid-v4}, and \texttt{Pendulum-v1}, suggesting that AdaGamma captures a task-level temporal scale that is stable across off-policy and on-policy learning. In \texttt{Ant-v4}, where SAC learns a shorter average horizon than PPO, indicating stronger interactions between discount adaptation and policy optimization in contact-rich locomotion. Overall, the cross-algorithm agreement suggests that \(\gamma_\phi(s)\) reflects environment-level temporal structure rather than purely optimizer-specific artifacts.

\subsection{Ablation Studies}
\label{sec:ablations}

\textbf{Training objective comparison.}
Sections~\ref{sec:safety_results} and~\ref{sec:humanoid_ant_results} show that AdaGamma consistently outperforms the SAC and PPO baselines, and Section~\ref{sec:gamma_analysis} analyzes the learned \(\gamma(s_t)\). Here we focus on the SAC variants. Table~\ref{tab:gamma_sacs} in Appendix~\ref{app:training_objective} reports the mean learned \(\gamma(s_t)\). The fixed baseline stays at \(\gamma=0.99\), whereas the cross-validated baseline collapses to the minimum candidate in all environments, matching its weak performance. The uncertainty-based baseline chooses less extreme but still global discounts. In contrast, AdaGamma learns task-dependent average discounts, indicating that its gains come from adapting temporal weighting rather than uniformly increasing or decreasing \(\gamma\).

\textbf{Comparison to fixed-$\gamma$.}
Table~\ref{tab:gamma_cross_algorithm} compares the mean learned \(\gamma_{\phi}(s)\) under SAC and PPO. To test whether the gains simply reflect a better fixed discount, we set fixed \(\gamma\) to the corresponding learned mean and re-evaluate SAC and PPO on \texttt{SafetyPointGoal1-v0}, \texttt{Humanoid-v4}, and \texttt{Ant-v4}; results are given in Table~\ref{tab:ab_fixed_gamma} of Appendix~\ref{app:fixed_gamma}. In all cases, this matched fixed discount still underperforms AdaGamma. We also expand the fixed-\(\gamma\) grid on these tasks in Table~\ref{tab:grid_search} of Appendix~\ref{app:fixed_gamma}, where AdaGamma remains superior. Thus, the improvement stems from state-dependent discounting rather than a better global \(\gamma\).

\textbf{Network architecture.}
We vary the hidden dimension of the gamma network over \{8, 16, 32, 64, 128, 256, 512\} on \texttt{Ant-v4}. Performance improves with capacity and saturates around 256 hidden units (Figure~\ref{fig:hidden_dim} in Appendix~\ref{app:hidden_dim}). Beyond this point, larger networks add computation with little gain, indicating that AdaGamma has low practical overhead.

 
    

\textbf{Return horizon $n$.}
We vary the multi-step return length \(n\in\{5,10,20\}\) in the return-consistency objective on \texttt{SafetyPointGoal1-v0} for both SAC and PPO. As shown in Table~\ref{tab:return_horizon} of Appendix~\ref{app:return_horizon}, \(n=5\) gives the best overall reward--cost tradeoff, while longer horizons reduce stability, particularly for PPO. This suggests that the return-consistency target should balance temporal coverage against estimation noise; in this task, \(n=5\) provides the best tradeoff.

\paragraph{Additional experiments.}
Results on classic control tasks are reported in Appendix~\ref{app:classic_results}. We also extend AdaGamma to DDPG~\citep{lillicrap2019continuouscontroldeepreinforcement} and TRPO~\citep{schulman2017trustregionpolicyoptimization}, with results on \texttt{SafetyPointGoal1-v0}, \texttt{Humanoid-v4}, and \texttt{Ant-v4} reported in Appendix~\ref{app:trpo_ddpg}.


\section{Conclusion}
\label{sec:conclusion}

\noindent
AdaGamma provides a simple and unified way to learn state-dependent discount factors in deep reinforcement learning. By interfacing only through the bootstrapped value target, it integrates naturally with both SAC and PPO, while the return-consistency objective prevents the degenerate behavior that arises from naive discount learning. On the theory side, we establish operator-level convergence properties for soft policy iteration with a fixed state-dependent discount function under tabular assumptions, and characterize the value discrepancy incurred. Empirically, AdaGamma improves reward--cost tradeoffs in safety-constrained settings, scales to high-dimensional continuous control, remains competitive on classic control benchmarks, and shows stable discount patterns across algorithms. Its positive results in a four-week JD Logistics A/B test further suggest that state-adaptive temporal weighting is not only algorithmically useful, but also practical in real-world sequential decision making.




\newpage
\bibliographystyle{unsrt}
\bibliography{ref}

\newpage
\appendix
\section{Additional Related Work Discussion}
\label{app:more_related_work}

\textbf{Fixed discounting and discount as regularization.}
Most deep RL methods use a fixed discount factor
\citep{sutton1998reinforcement,mnih2015human,haarnoja2018soft,schulman2017proximal},
despite the fact that $\gamma$ strongly influences the effective planning horizon, variance, and
optimization behavior
\citep{francois2016discount,amit2020discount,naik2019discountedreinforcementlearningoptimization,hu2022rolediscountfactoroffline}. 
A growing line of work interprets discounting not only as a preference parameter, but also as a form of
regularization
\citep{amit2020discount,rathnam2024rethinking}. In particular, \citep{amit2020discount} show that, for
several TD-learning methods, reducing $\gamma$ can be viewed as introducing an explicit regularization
effect, while \citep{rathnam2024rethinking} show that globally shrinking $\gamma$ may induce an
undesirable non-uniform prior over transition dynamics. Related analyses in offline RL further show that
$\gamma$ jointly controls regularization and pessimism, affecting the trade-off between policy quality
and data efficiency
\citep{hu2022rolediscountfactoroffline,foster2022offlinereinforcementlearningfundamental,zhan2022offlinereinforcementlearningrealizability}. 
These results provide part of the motivation for adaptive discounting, although they do not directly
address how to learn a state-dependent discount function in deep actor--critic training.

\textbf{Flexible discounting: schedules and non-exponential variants.}
Another line of work studies discounting schemes that vary over training, over time, or beyond the
standard exponential form. \citep{francois2016discount} discuss dynamic discount strategies, and
\citep{hou2021improvement} propose increasing discount factors over time steps to alleviate the
underweighting of distant rewards. More generally,
\citep{schultheis2022reinforcementlearningnonexponentialdiscounting} study RL with non-exponential
discount functions, and related work broadens the space of generalized return criteria
\citep{10.1609/aaai.v33i01.33017949,naik2019discountedreinforcementlearningoptimization}. 
These approaches enlarge the class of temporal preference models, but the discount rule is typically
global, time-indexed, or fixed by design. Our setting is different: we focus on learned
\emph{state-conditioned} discounting within a conventional bootstrapped actor--critic pipeline, where the
main challenge is not defining a richer return criterion, but ensuring that the learned discount does not
destabilize TD-based training.

\textbf{State-dependent discounting and generalized RL objectives.}
A more expressive formulation allows the discount factor to depend on the state or state-action pair.
\citep{wei2011markov} study MDPs with state-dependent discount factors and establish general existence and
characterization results. \citep{yoshida2013reinforcement} introduce state-dependent discounting in
model-free RL and derive an ExQ-learning algorithm with convergence guarantees. From a broader
decision-theoretic perspective, \citep{pitis2019rethinkingdiscountfactorreinforcement} argue that fixed
discounting is only one special case in a larger class of sequential preference models, while
\citep{naik2019discountedreinforcementlearningoptimization} emphasize that discounting is a substantive
modeling choice rather than a mere numerical convenience. These works establish that variable discounting
is mathematically meaningful and behaviorally expressive, but they do not directly provide a practical
implementation recipe for modern deep actor--critic methods with function approximation.

\textbf{Adaptive discounting in deep RL.}
Several recent works study adaptive discounting more directly in deep RL.
\citep{kim2022adaptive} propose an uncertainty- or advantage-driven adjustment rule for PPO and SAC,
while \citep{gu2022proximal} introduce a PPO variant with policy-feedback-dependent discounting.
Related approaches also adapt horizon-related quantities through dynamic schedules, uncertainty signals,
or return-design heuristics
\citep{francois2016discount,kim2022adaptive,gu2022proximal}. 
Compared with these methods, our approach learns an explicit state-conditioned neural module
$\gamma_\phi(s)$ and emphasizes a different algorithmic issue: the collapse behavior that can arise when
such a module is trained naively through bootstrapped TD objectives. Our return-consistency objective is
designed specifically to regularize this failure mode.

\textbf{Long horizons, statistical fragility, and temporal abstraction.}
Large discount factors require reliable long-horizon estimation, which becomes difficult when data are
noisy, sparse, or poorly mixed
\citep{francois2016discount,naik2019discountedreinforcementlearningoptimization,metelli2023tale}. 
More generally, sample-complexity and offline-RL analyses show that discounted problems become
statistically harder as the effective horizon grows or coverage deteriorates
\citep{foster2022offlinereinforcementlearningfundamental,hu2022rolediscountfactoroffline,jia2024offline1,zhan2022offlinereinforcementlearningrealizability}. 
In particular, \citep{metelli2023tale} study the $\gamma$-discounted mean estimation problem and derive
lower bounds depending on both the discount factor and the mixing properties of the underlying Markov
chain. This helps explain why a globally large $\gamma$ can be statistically fragile, and why adapting
bootstrap strength across states may be beneficial.

State-dependent discounting is also related to temporal abstraction and termination. In the options
framework, termination determines the duration of temporally extended behavior and thus implicitly
controls an effective planning horizon
\citep{precup2000eligibility,bacon2017optioncritic}. More broadly, temporally extended actions modulate
the scale of credit assignment and planning, and recent work shows benefits from decoupling behavior and
target termination conditions
\citep{harutyunyan2019termination}. Although our method does not introduce explicit options or
termination policies, the product-of-gammas structure in our modified GAE recursion plays a related
conceptual role: low-$\gamma$ regions shorten the effective credit-assignment horizon, while high-$\gamma$
regions preserve longer-range dependencies. In this sense, our approach is complementary to
temporal-abstraction methods rather than a replacement for them.

\section{Proofs}
\label{app:proofs}

\subsection{Proof of Lemma~\ref{lem:policy_eval}}
\label{app:proof_eval}
\policyeval*
\begin{proof}
Define the entropy-augmented reward
$r_\pi^\gamma(s_t,a_t) \triangleq r(s_t,a_t) + 
\gamma(s_t)\,\mathbb{E}_{s_{t+1}\sim p}
[\mathcal{H}(\pi(\cdot|s_{t+1}))]$, and rewrite the 
update as:
\begin{equation}
Q(s_t,a_t) \gets r_\pi^\gamma(s_t,a_t) + 
\gamma(s_t)\,\mathbb{E}_{s_{t+1}\sim p, a_{t+1}\sim\pi}
[Q(s_{t+1},a_{t+1})].
\end{equation}
Take two arbitrary Q-functions $Q_1, Q_2$:
\begin{align}
\|\widetilde{\mathcal{T}}^\pi Q_1 - 
\widetilde{\mathcal{T}}^\pi Q_2\|_\infty
&= \gamma(s_t)\left\|\mathbb{E}_{s_{t+1}\sim p}
\left[\mathbb{E}_{a_{t+1}\sim\pi}
[Q_1(s_{t+1},a_{t+1}) - Q_2(s_{t+1},a_{t+1})]\right]
\right\|_\infty \\
&\leq \sup_{s_t}\gamma(s_t)\|Q_1-Q_2\|_\infty 
= \beta\|Q_1-Q_2\|_\infty.
\end{align}
By the Banach fixed-point theorem, $Q^k$ converges to 
the unique fixed point. 
\end{proof}
\subsection{Proof of Lemma~\ref{lem:policy_improve}}
\label{app:proof_improve}
\policyimprove*
\begin{proof}
Define
\begin{equation}
J_{\pi_{\mathrm{old}}}(\pi(\cdot|s))
=
\mathbb{E}_{a\sim\pi(\cdot|s)}
\left[
Q^{\pi_{\mathrm{old}}}(s,a)
-
\alpha\log\pi(a|s)
\right].
\end{equation}
Since \(\pi_{\mathrm{new}}\) minimizes the KL divergence to the Boltzmann
distribution induced by \(Q^{\pi_{\mathrm{old}}}\), and
\(\pi_{\mathrm{old}}\in\Pi\) is feasible, we have
\begin{equation}
J_{\pi_{\mathrm{old}}}(\pi_{\mathrm{new}}(\cdot|s))
\ge
J_{\pi_{\mathrm{old}}}(\pi_{\mathrm{old}}(\cdot|s))
=
V^{\pi_{\mathrm{old}}}(s).
\end{equation}
Using the soft Bellman equation with adaptive discount,
\begin{align}
Q^{\pi_{\mathrm{old}}}(s,a)
&=
r(s,a)
+
\gamma(s)
\mathbb{E}_{s'\sim p}
\left[
V^{\pi_{\mathrm{old}}}(s')
\right]
\nonumber\\
&\le
r(s,a)
+
\gamma(s)
\mathbb{E}_{s'\sim p}
\mathbb{E}_{a'\sim\pi_{\mathrm{new}}}
\left[
Q^{\pi_{\mathrm{old}}}(s',a')
-
\alpha\log\pi_{\mathrm{new}}(a'|s')
\right]
\nonumber\\
&=
\widetilde{\mathcal{T}}^{\pi_{\mathrm{new}}}
Q^{\pi_{\mathrm{old}}}(s,a).
\end{align}
Iterating this inequality and using the contraction property from Lemma~1 gives
\begin{equation}
Q^{\pi_{\mathrm{old}}}(s,a)
\le
\lim_{k\to\infty}
\left(
\widetilde{\mathcal{T}}^{\pi_{\mathrm{new}}}
\right)^k
Q^{\pi_{\mathrm{old}}}(s,a)
=
Q^{\pi_{\mathrm{new}}}(s,a).
\end{equation}
\end{proof}

\subsection{Proof of Theorem~\ref{thm:convergence}}
\label{app:proof_convergence}
\convergence*
\begin{proof}
Let $\pi_i$ be the policy at iteration $i$. By 
Lemma~\ref{lem:policy_improve}, $\{Q^{\pi_i}\}$ is 
monotonically increasing. Since $Q^\pi$ is bounded above 
for $\pi\in\Pi$ (rewards are bounded, entropy is bounded 
below since $|\mathcal{A}|<\infty$, and the geometric 
series with adaptive discount converges since $\beta<1$), 
the sequence converges to some $Q^{\pi^*}$ for policy 
$\pi^*$. At convergence, $J_{\pi^*}(\pi^*(\cdot|s_t)) 
\geq J_{\pi^*}(\pi(\cdot|s_t))$ for all $\pi\in\Pi$, 
$\pi\neq\pi^*$. By the same iterative argument as in 
Lemma~\ref{lem:policy_improve}, 
$Q^{\pi^*}(s_t,a_t) \geq Q^\pi(s_t,a_t)$ for all 
$(s_t,a_t)$. \qed
\end{proof}

\subsection{Proof of Theorem~\ref{thm:error_gap}}
\label{app:proof_error}
\errorgap*
\begin{proof}
Define $\tilde{r}_\pi(s,a) = r(s,a) - \alpha\log\pi(a|s)$ and 
$P_\pi((s,a),(s',a')) = P(s'|s,a)\pi(a'|s')$. In 
vector-matrix form:
\begin{equation}
\tilde{Q}_1^\pi = (I - \Gamma P_\pi)^{-1}\tilde{r}_\pi, 
\quad 
\tilde{Q}_2^\pi = (I - \gamma P_\pi)^{-1}\tilde{r}_\pi,
\end{equation}
where $\Gamma = \text{diag}(\gamma(s))_{(s,a)}$. Then:
\begin{align}
\|Q_1-Q_2\|_\infty &= \|\tilde{Q}_1-\tilde{Q}_2\|_\infty 
= \|(I-\Gamma P_\pi)^{-1}(\Gamma-\gamma I)P_\pi
(I-\gamma P_\pi)^{-1}\tilde{r}_\pi\|_\infty \\
&\leq \frac{1}{1-\beta}\cdot\max_s|\gamma(s)-\gamma|
\cdot\frac{1}{1-\gamma}\cdot(R+\alpha\log\frac{1}{\epsilon})). 
\end{align}
\end{proof}

\subsection{Proof of Proposition~\ref{prop:gae}}
\label{app:proof_gae}
\propgae*
\begin{proof}
We prove the result by induction on \(T-1-t\).

\textbf{Base case.}
For \(t=T-1\), we have \(\hat{A}_{T-1}=\delta_{T-1}\).
The expansion also gives \(\hat{A}_{T-1}=\delta_{T-1}\), since
the summation is empty. Thus the claim holds.

\textbf{Inductive step.}
Assume that the expansion holds for \(\hat{A}_{t+1}\), i.e.,
\begin{equation}
\hat{A}_{t+1}
=
\delta_{t+1}
+
\sum_{l=1}^{T-2-t}
\left(
\prod_{k=0}^{l-1}\gamma_\phi(s_{t+1+k})
\right)
\lambda^l
\delta_{t+1+l}.
\end{equation}
Then, using the recursion
\(\hat{A}_t=\delta_t+\gamma_\phi(s_t)\lambda\hat{A}_{t+1}\), we obtain
\begin{align}
\hat{A}_t
&= \delta_t + \gamma_\phi(s_t)\lambda\,\hat{A}_{t+1} \\
&= \delta_t
+ \gamma_\phi(s_t)\lambda
\left[
\delta_{t+1}
+
\sum_{l=1}^{T-2-t}
\left(
\prod_{k=0}^{l-1}\gamma_\phi(s_{t+1+k})
\right)
\lambda^l
\delta_{t+1+l}
\right] \\
&= \delta_t
+ \gamma_\phi(s_t)\lambda\,\delta_{t+1}
+
\sum_{l=1}^{T-2-t}
\gamma_\phi(s_t)
\left(
\prod_{k=0}^{l-1}\gamma_\phi(s_{t+1+k})
\right)
\lambda^{l+1}
\delta_{t+1+l} \\
&= \delta_t
+
\left(
\prod_{k=0}^{0}\gamma_\phi(s_{t+k})
\right)
\lambda
\delta_{t+1}
+
\sum_{l=1}^{T-2-t}
\left(
\prod_{k=0}^{l}\gamma_\phi(s_{t+k})
\right)
\lambda^{l+1}
\delta_{t+1+l}.
\end{align}
Now let \(m=l+1\) in the last summation. Then \(m\) ranges from
\(2\) to \(T-1-t\), and we have
\begin{align}
\hat{A}_t
&= \delta_t
+
\left(
\prod_{k=0}^{0}\gamma_\phi(s_{t+k})
\right)
\lambda
\delta_{t+1}
+
\sum_{m=2}^{T-1-t}
\left(
\prod_{k=0}^{m-1}\gamma_\phi(s_{t+k})
\right)
\lambda^m
\delta_{t+m} \\
&= \delta_t
+
\sum_{m=1}^{T-1-t}
\left(
\prod_{k=0}^{m-1}\gamma_\phi(s_{t+k})
\right)
\lambda^m
\delta_{t+m}.
\end{align}
This completes the induction.
\end{proof}

\section{Details of Uncertainty-rule adaptive-$\gamma$ baseline}
\label{app:uncertainty_rule}
Inspired by Kim et al.~\cite{kim2022adaptive}, who adapt the discount factor based on value-estimation uncertainty measured via ensemble disagreement, we implement an uncertainty-driven baseline within both the SAC and PPO frameworks.
In SAC, we treat the absolute difference between the two Q-heads, $|Q_{\theta_1}(s,a) - Q_{\theta_2}(s,a)|$, as a lightweight uncertainty proxy.
In PPO, which lacks a twin-critic architecture by default, we introduce an auxiliary value network $V_{\phi_2}$ alongside the primary value network $V_{\phi_1}$ and use their disagreement $|V_{\phi_1}(s) - V_{\phi_2}(s)|$ as the analogous uncertainty signal.
Crucially, the auxiliary $V_{\phi_2}$ is used \emph{solely} for computing the per-state discount factor; it does not participate in PPO's standard operations such as advantage estimation (GAE) or the policy gradient objective, which continue to rely exclusively on the primary $V_{\phi_1}$.
In both cases, a per-state discount is computed via
\begin{equation}
  \gamma(s) = \gamma_{\max} - (\gamma_{\max} - \gamma_{\min})\,\sigma\!\bigl(\eta\cdot\beta \cdot d(s)\bigr),
\end{equation}
where $d(s)$ denotes the critic disagreement, $\beta$ is a learnable scaling parameter, $\eta$ is the uncertainty scale and $\sigma$ is the sigmoid function.
The intuition follows~\cite{kim2022adaptive}: when the two critics disagree strongly (high uncertainty), the discount is pushed toward $\gamma_{\min}$ to shorten the effective planning horizon and reduce reliance on unreliable bootstrapped targets; when they agree (low uncertainty), the discount approaches $\gamma_{\max}$ to enable longer-horizon credit assignment.
Compared with the original formulation in~\cite{kim2022adaptive}, which uses a larger ensemble and a different scaling rule, our variant is adapted to each algorithm's native architecture---reusing SAC's existing twin Q-networks and adding a minimal auxiliary value network for PPO---and grants the baseline additional flexibility through the learnable $\beta$.
This provides a competitive reference point that already incorporates state-dependent temporal adaptation, against which we isolate the additional benefit of AdaGamma's return-consistency training objective.

\section{Experiment Results}
\label{app:experiment_results}
\subsection{Cross-algorithm consistency}
\label{app:consistency}
Here we show the mean learned discount for SAC-AdaGamma and PPO-AdaGamma in Table~\ref{tab:gamma_cross_algorithm}. 
\begin{table}[ht]
  \centering
  \caption{Mean learned \(\gamma_\phi(s)\) for SAC-AdaGamma and PPO-AdaGamma.}
  \scalebox{0.7}{
\begin{tabular}{l|l|l|l|l}
\toprule
Method Variants & SafetyPointGoal1-v0 & Humanoid-v4 & Ant-v4 & Pendulum-v1 \\
\hline
SAC-AdaGamma & 0.9989 & 0.9860 & 0.9733 & 0.9908 \\
PPO-AdaGamma & 0.9988 & 0.9846 & 0.9990 & 0.9908 \\
\bottomrule
\end{tabular}
}
\label{tab:gamma_cross_algorithm}
\end{table}

\subsection{Training Objective Comparison}
\label{app:training_objective}
Here we show the learned discount patterns of SAC variants in Table~\ref{tab:gamma_sacs}.
\begin{table}[ht]
  \centering
 \caption{Mean value of $\gamma$ learned by different SAC variants.}
  \scalebox{0.7}{
\begin{tabular}{l|l|l|l}
\toprule
Method Variants    & SafetyPointGoal1-v0 & Humanoid-v4 & Ant-v4 \\
\hline
SAC-fixed $\gamma$ & 0.9900 & 0.9900 & 0.9900 \\ 
SAC-CrossValidate  & 0.9000 & 0.9000 & 0.9000 \\ 
SAC-Uncertainty    & 0.9495 & 0.9745 & 0.9745 \\ 
SAC-AdaGamma       & 0.9989 & 0.9860 & 0.9733 \\
\bottomrule
\end{tabular}
}
\label{tab:gamma_sacs}
\end{table}

\subsection{Network architecture}
\label{app:hidden_dim}
Here we show the performance curve of SAC-AdaGamma with hidden dimension of the gamma network over {8, 16, 32, 64, 128, 256, 512} on \texttt{Ant-v4} in Figure.~\ref{fig:hidden_dim}. 
\begin{figure}
    \centering
    \includegraphics[width=0.5\linewidth]{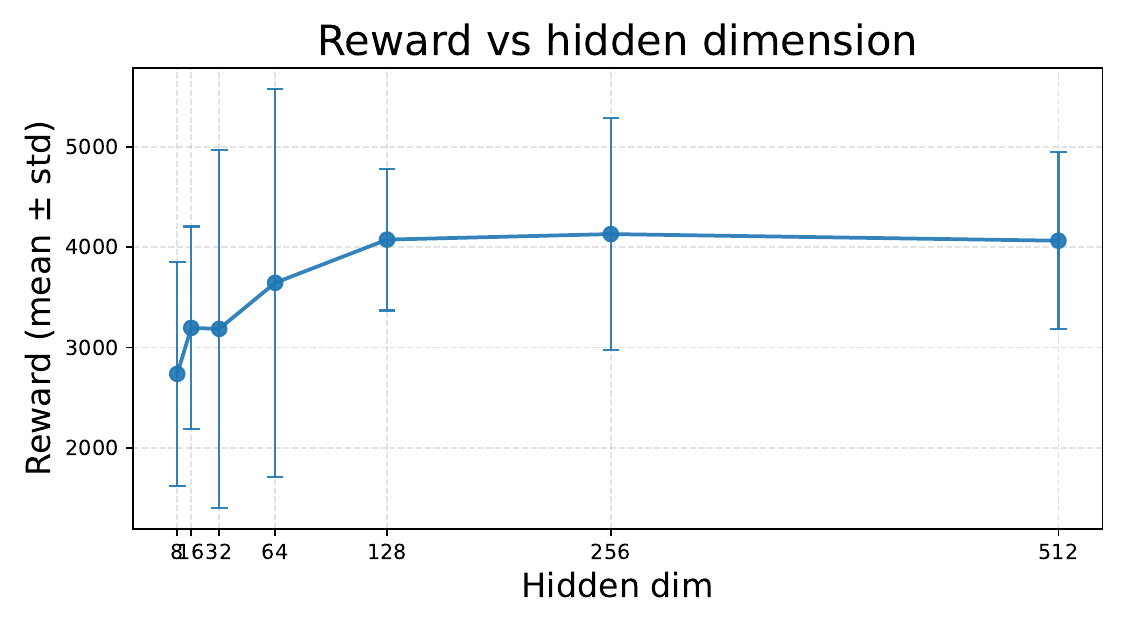}
    \caption{Performance(reward mean $\pm$ std) curve of SAC-AdaGamma with hidden-dim $\in\{8, 16, 32, 64, 128, 256, 512\}$}
    \label{fig:hidden_dim}
\end{figure}

\subsection{SAC/PPO with AdaGamma under different return horizon n}
\label{app:return_horizon}
In Table~\ref{tab:return_horizon}, we show the rewards and Costs of SAC and PPO under different return horizon n.

\begin{table}[ht]
  \centering
  \caption{Mean reward and cost, with standard deviation, for SAC-AdaGamma and PPO-AdaGamma with $n \in \{5,10,20\}$.}

\begin{tabular}{l|l|l}
\toprule
Method Variants    & SafetyPointGoal1-v0 Reward & SafetyPointGoal1-v0 Cost  \\ 
\hline
SAC-AdaGamma(n=5)  & \textbf{28.254 $\pm$ 1.05} & \textbf{47.36 $\pm$ 30.58} \\
SAC-AdaGamma(n=10) & 27.75 $\pm$ 0.85 & 56.20 $\pm$ 36.93 \\
SAC-AdaGamma(n=20) & 27.71 $\pm$ 1.05 & 57.70 $\pm$ 48.42 \\
\hline
PPO-AdaGamma(n=5)  & \textbf{26.30 $\pm$ 0.90} & \textbf{41.82 $\pm$ 19.42} \\
PPO-AdaGamma(n=10) & 25.92 $\pm$ 1.27 & 48.60 $\pm$ 32.44 \\
PPO-AdaGamma(n=20) & 16.06 $\pm$ 12.11 & 49.85 $\pm$ 36.39 \\
\bottomrule
\end{tabular}
\label{tab:return_horizon}

\end{table}

\subsection{Comparison to fixed-$\gamma$}
\label{app:fixed_gamma}
In this section we show the Performance of SAC and PPO with fixed $\gamma$ around the mean value we learned in Table~\ref{tab:ab_fixed_gamma} and the grid search of fixed-$\gamma$ value results of SAC and PPO in Table~\ref{tab:grid_search}.
\begin{table}[ht]
  \centering
  \caption{Performance of SAC and PPO with fixed $\gamma$ around the mean value we learned.}
  \label{tab:ab_fixed_gamma}
  \scalebox{0.7}{
  \begin{tabular}{l|cc||l|cc}
    \toprule
    Method & Reward & Cost & Method & Reward & Cost\\
    \midrule
    \multicolumn{6}{c}{\texttt{SafetyPointGoal1-v0}} \\
    \hline
    SAC-Fixed-$\gamma = 0.9950$    & 27.57 $\pm$ 0.81 &  51.25 $\pm$ 38.92  & PPO-Fixed-$\gamma = 0.9950$    & 25.36 $\pm$ 1.63  & 53.25 $\pm$ 32.13  \\
    SAC-Fixed-$\gamma = 0.9989$      & 27.06 $\pm$ 4.55 & 53.20 $\pm$ 37.24  & PPO-Fixed-$\gamma = 0.9988$      & 23.86 $\pm$ 1.93 & 46.85 $\pm$ 27.73  \\
   SAC-Fixed-$\gamma = 0.9990$       &  25.92 $\pm$ 1.34 & 50.20 $\pm$ 40.17   & PPO-Fixed-$\gamma = 0.9990$        & 25.71 $\pm$ 1.32  & 48.60 $\pm$ 29.07 \\
    \hline
    SAC-AdaGamma          & $\mathbf{28.25 \pm 1.05}$ & $\mathbf{29.37 \pm 18.59}$ & PPO-AdaGamma          & $\mathbf{26.31 \pm 0.90}$ & $\mathbf{41.82 \pm 19.42}$ \\
    \hline
    \multicolumn{6}{c}{\texttt{Humanoid-v4}} \\
    \hline
    SAC-Fixed-$\gamma = 0.9850$    & 6050.05 $\pm$ 22.50  &   -  & PPO-Fixed-$\gamma = 0.9950$    & 456.19 $\pm$ 27.61   & -   \\
    SAC-Fixed-$\gamma = 0.9860 $      & 5830.34 $\pm$ 6.54  &  -  & PPO-Fixed-$\gamma = 0.9999$      & 442.33 $\pm$ 17.78  & - \\
   SAC-Fixed-$\gamma = 0.9990$       & 5576.23 $\pm$ 17.84   &  -   & PPO-Fixed-$\gamma = 0.9990$        & 447.99 $\pm$ 23.06 & - \\
    \hline
    SAC-AdaGamma          & $\mathbf{6907.99 \pm 21.76}$  & -  & PPO-AdaGamma          & $\mathbf{476.51 \pm 51.29}$  &  -  \\
    \hline
    \multicolumn{6}{c}{\texttt{Ant-v4}} \\
    \hline
     SAC-Fixed-$\gamma = 0.9700$    & 3601.58 $\pm$ 1812.92  &   -  & PPO-Fixed-$\gamma = 0.9980$    & 1064.52 $\pm$ 193.92 & -   \\
    SAC-Fixed-$\gamma = 0.9733$      & 3461.86 $\pm$ 1803.14  &  -  & PPO-Fixed-$\gamma = 0.9985$      &  987.77 $\pm$ 234.11  & - \\
   SAC-Fixed-$\gamma = 0.9750$       & 3778.66 $\pm$ 644.60  &  -   & PPO-Fixed-$\gamma = 0.9990$        & 1033.70 $\pm$ 192.00 & - \\
    \hline
    SAC-AdaGamma          & $\mathbf{4129.57 \pm 1155.08}$  & -  & PPO-AdaGamma          & $\mathbf{1172.47 \pm 189.84}$ &  -  \\
    \bottomrule
  \end{tabular}
  }
\end{table}

\begin{table}[ht]
  \centering
  \caption{Grid search of SAC and PPO with fixed $\gamma$.}
  \label{tab:grid_search}
  \scalebox{0.7}{
  \begin{tabular}{l|cc||l|cc}
    \toprule
    Method & Reward & Cost & Method & Reward & Cost\\
    \midrule
     \multicolumn{6}{c}{\texttt{SafetyPointGoal1-v0}} \\
    \hline
     SAC-Fixed-$\gamma = 0.9000$    & 27.13 $\pm$ 1.09   & 53.30 $\pm$ 40.87     & PPO-Fixed-$\gamma = 0.9000$    & 18.98 $\pm$ 2.34   & 51.80 $\pm$ 27.30   \\
    SAC-Fixed-$\gamma = 0.9500$    & 27.88 $\pm$ 1.23    & 49.45 $\pm$ 34.19     & PPO-Fixed-$\gamma = 0.9500$    &  24.43 $\pm$ 1.80  &  49.95 $\pm$ 26.76  \\
     SAC-Fixed-$\gamma = 0.9700$    & 27.32 $\pm$ 1.30   & 51.65 $\pm$ 33.76      & PPO-Fixed-$\gamma = 0.9700$    & 22.25 $\pm$ 1.57   & 46.45 $\pm$ 31.19   \\
     SAC-Fixed-$\gamma = 0.9800$    & 27.96 $\pm$ 1.12   & 46.45 $\pm$ 35.90      & PPO-Fixed-$\gamma = 0.9800$    & 21.60 $\pm$ 4.80    & 50.80 $\pm$ 37.66   \\
   SAC-Fixed-$\gamma = 0.9990$       &  25.92 $\pm$ 1.34 & 50.20 $\pm$ 40.17   & PPO-Fixed-$\gamma = 0.9990$        & 25.71 $\pm$ 1.32  & 48.60 $\pm$ 29.07 \\
    \hline
    SAC-AdaGamma          & $\mathbf{28.25 \pm 1.05}$ & $\mathbf{29.37 \pm 18.59}$ & PPO-AdaGamma          & $\mathbf{26.31 \pm 0.90}$ & $\mathbf{41.82 \pm 19.42}$ \\
    \hline
    \multicolumn{6}{c}{\texttt{Humanoid-v4}} \\
    \hline
   SAC-Fixed-$\gamma = 0.9000$    & 723.73 $\pm$ 72.03   &   -  & PPO-Fixed-$\gamma = 0.9000$    & 305.43 $\pm$ 61.33  & -   \\
      SAC-Fixed-$\gamma = 0.9500$    & 3280.41 $\pm$ 1826.32    &   -  & PPO-Fixed-$\gamma = 0.9500$    & 349.12 $\pm$ 57.15  & -   \\
     SAC-Fixed-$\gamma = 0.9700$    & 6185.12 $\pm$ 29.35   &   -  & PPO-Fixed-$\gamma = 0.9700$    & 315.13 $\pm$ 19.90  & -   \\
    SAC-Fixed-$\gamma = 0.9800$    & 6346.42 $\pm$ 15.40  &   -  & PPO-Fixed-$\gamma = 0.9800$    & 356.19 $\pm$ 27.61  & -   \\
   SAC-Fixed-$\gamma = 0.9990$       & 5576.23 $\pm$ 17.84   &  -   & PPO-Fixed-$\gamma = 0.9990$        & 447.99 $\pm$ 23.06 & - \\
    \hline
    SAC-AdaGamma          & $\mathbf{6907.99 \pm 21.76}$  & -  & PPO-AdaGamma          & $\mathbf{476.51 \pm 51.29}$  &  -  \\
    \hline
    \multicolumn{6}{c}{\texttt{Ant-v4}} \\
    \hline
     SAC-Fixed-$\gamma = 0.9000$    & 1223.375  $\pm$  137.70   &   -  & PPO-Fixed-$\gamma = 0.9000$    & 750.41  $\pm$  70.57  & -   \\
      SAC-Fixed-$\gamma = 0.9500$    & 3236.72 $\pm$ 1868.49   &   -  & PPO-Fixed-$\gamma = 0.9500$    & 823.30  $\pm$  120.42  & -   \\
     SAC-Fixed-$\gamma = 0.9700$    & 3601.58 $\pm$ 1812.92   &   -  & PPO-Fixed-$\gamma = 0.9700$    & 893.34  $\pm$  141.93  & -   \\
    SAC-Fixed-$\gamma = 0.9800$    & 3761.03 $\pm$ 1692.59   &   -  & PPO-Fixed-$\gamma = 0.9800$    & 994.73  $\pm$  187.78  & -   \\
   SAC-Fixed-$\gamma = 0.9990$    & 918.32 $\pm$ 17.68   &   -  & PPO-Fixed-$\gamma = 0.9990$    & 1033.70 $\pm$ 192.00  & -   \\
    \hline
    SAC-AdaGamma          & $\mathbf{4129.57 \pm 1155.08}$  & -  & PPO-AdaGamma          & $\mathbf{1172.47 \pm 189.84}$ &  -  \\
    \bottomrule
  \end{tabular}
  }
\end{table}

\section{The algorithm framework under AdaGamma}\label{app:ada_algo}
In this section we show the algorithm variants of SAC and PPO with our AdaGamma.
\subsection{SAC-AdaGamma}
Here in Algorithm~\ref{alg:sac_adf} we show the SAC-AdaGamma.
\label{app:sac_ada_algo}
\begin{algorithm}[ht]
\caption{AdaGamma with SAC (Off-Policy Adapter)}
\label{alg:sac_adf}
\begin{algorithmic}[1]
\STATE \textbf{Input:} Policy $\pi_\psi$, Q-networks 
$Q_{\theta_1}, Q_{\theta_2}$, gamma network $g_\phi$, 
replay buffer $\mathcal{D}$
\FOR{each iteration}
  \FOR{each environment step}
    \STATE $a_t \sim \pi_\psi(\cdot|s_t)$, observe 
    $r_t, s_{t+1}, d_t$
    \STATE $\mathcal{D} \gets \mathcal{D} \cup 
    \{(s_t,a_t,r_t,s_{t+1},d_t)\}$
  \ENDFOR
  \FOR{each gradient step}
    \STATE Sample mini-batch from $\mathcal{D}$
    \STATE \textit{// Compute adaptive discount:}
    \STATE $\gamma_\phi(s_t) = \gamma_{\min} + 
    (\gamma_{\max}-\gamma_{\min})\cdot
    \sigma(g_\phi(s_t))$
    \STATE \textit{// Update Q-functions with adaptive target:}
    \STATE $\hat{Q} = r_t + \gamma_\phi(s_t)(1-d_t)
    [\min_i Q_{\bar{\theta}_i}(s_{t+1},a') 
    - \alpha\log\pi_\psi(a'|s_{t+1})]$
    \STATE $\theta_i \gets \theta_i - \lambda_Q 
    \nabla_{\theta_i} J_Q(\theta_i)$
    \STATE \textit{// Update policy and temperature 
    (unchanged):}
    \STATE $\psi \gets \psi - \lambda_\pi 
    \nabla_\psi J_\pi(\psi)$; \quad
    $\alpha \gets \alpha - \lambda_\alpha 
    \nabla_\alpha J(\alpha)$
    \STATE \textit{// Update gamma network 
    (return-consistency):}
    \STATE Compute $G_t^{(n)}$ with reference 
    $\bar{\gamma}$ (stop-gradient)
    \STATE Optionally update $\bar{\gamma}$ by EMA of 
    replay-mean $\gamma_\phi(s)$ after warmup
    \STATE $\phi \gets \phi - \lambda_\gamma 
    \nabla_\phi J_\gamma(\phi)$ 
    \quad (Eq.~\eqref{eq:full_gamma_loss})
    \STATE \textit{// Soft-update target networks:}
    \STATE $\bar{\theta}_i \gets \tau\theta_i + 
    (1-\tau)\bar{\theta}_i$
  \ENDFOR
\ENDFOR
\end{algorithmic}
\end{algorithm}

\subsection{PPO-AdaGamma}
Here in Algorithm~\ref{alg:ppo_adf} we show the PPO-AdaGamma.
\label{app:ppo_ada_algo}
\begin{algorithm}[ht]
\caption{AdaGamma with PPO (On-Policy Adapter)}
\label{alg:ppo_adf}
\begin{algorithmic}[1]
\STATE \textbf{Input:} Policy $\pi_\psi$, value network 
$V_\omega$, gamma network $g_\phi$
\FOR{each iteration}
  \STATE \textit{// Collect rollout:}
  \FOR{$t = 0, \ldots, T-1$}
    \STATE $a_t \sim \pi_\psi(\cdot|s_t)$, observe 
    $r_t, s_{t+1}$
  \ENDFOR
  \STATE \textit{// Compute $\gamma_\phi(s_t)$ for all 
  rollout states (freeze):}
  \STATE $\{\gamma_t\}_{t=0}^{T-1} \gets 
  \{\gamma_\phi(s_t)\}_{t=0}^{T-1}$
  \STATE \textit{// Modified GAE with state-dependent 
  $\gamma$:}
  \STATE $\hat{A}_{T-1} = r_{T-1} + \gamma_{T-1} V(s_T) 
  - V(s_{T-1})$
  \FOR{$t = T-2, \ldots, 0$}
    \STATE $\delta_t = r_t + \gamma_t\,V(s_{t+1}) - V(s_t)$
    \STATE $\hat{A}_t = \delta_t + \gamma_t \cdot 
    \lambda \cdot \hat{A}_{t+1}$
  \ENDFOR
  \STATE Normalize: $\hat{A}_t \gets 
  (\hat{A}_t - \text{mean})/\text{std}$
  \STATE \textit{// PPO epochs (frozen $\gamma$):}
  \FOR{epoch $= 1, \ldots, K$}
    \STATE Update $\pi_\psi$ via clipped surrogate 
    with $\hat{A}_t$
    \STATE Update $V_\omega$ on value targets
  \ENDFOR
  \STATE \textit{// Update gamma network:}
  \STATE Compute $L_\gamma^{\text{RC}}$ from rollout 
  data (Eq.~\eqref{eq:return_consistency})
  \STATE $\phi \gets \phi - \lambda_\gamma 
  \nabla_\phi J_\gamma(\phi)$
\ENDFOR
\end{algorithmic}
\end{algorithm}

\subsection{Reward Curves}
\label{app:reward_curves}
In this section we show the running reward curves of SAC and PPO, include AdaGamma and the baselines on SafetyPointGoal1-v0, Humanoid-v4 and Ant-v4 tasks.

\begin{figure}[ht]
    \centering
    \begin{subfigure}[b]{0.32\textwidth}
        \centering
        \includegraphics[width=\textwidth]{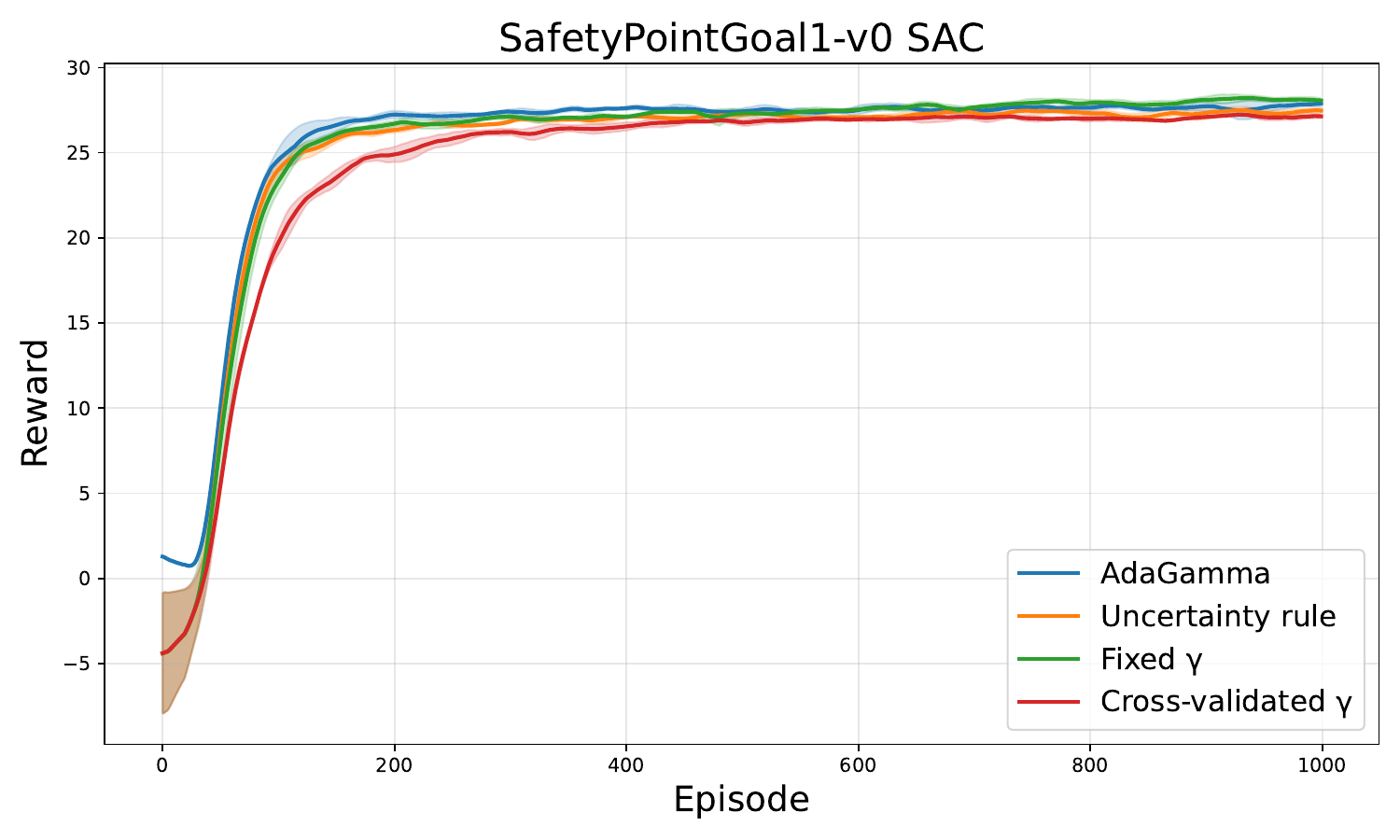}
    \end{subfigure}
    \hfill
    \begin{subfigure}[b]{0.32\textwidth}
        \centering
        \includegraphics[width=\textwidth]{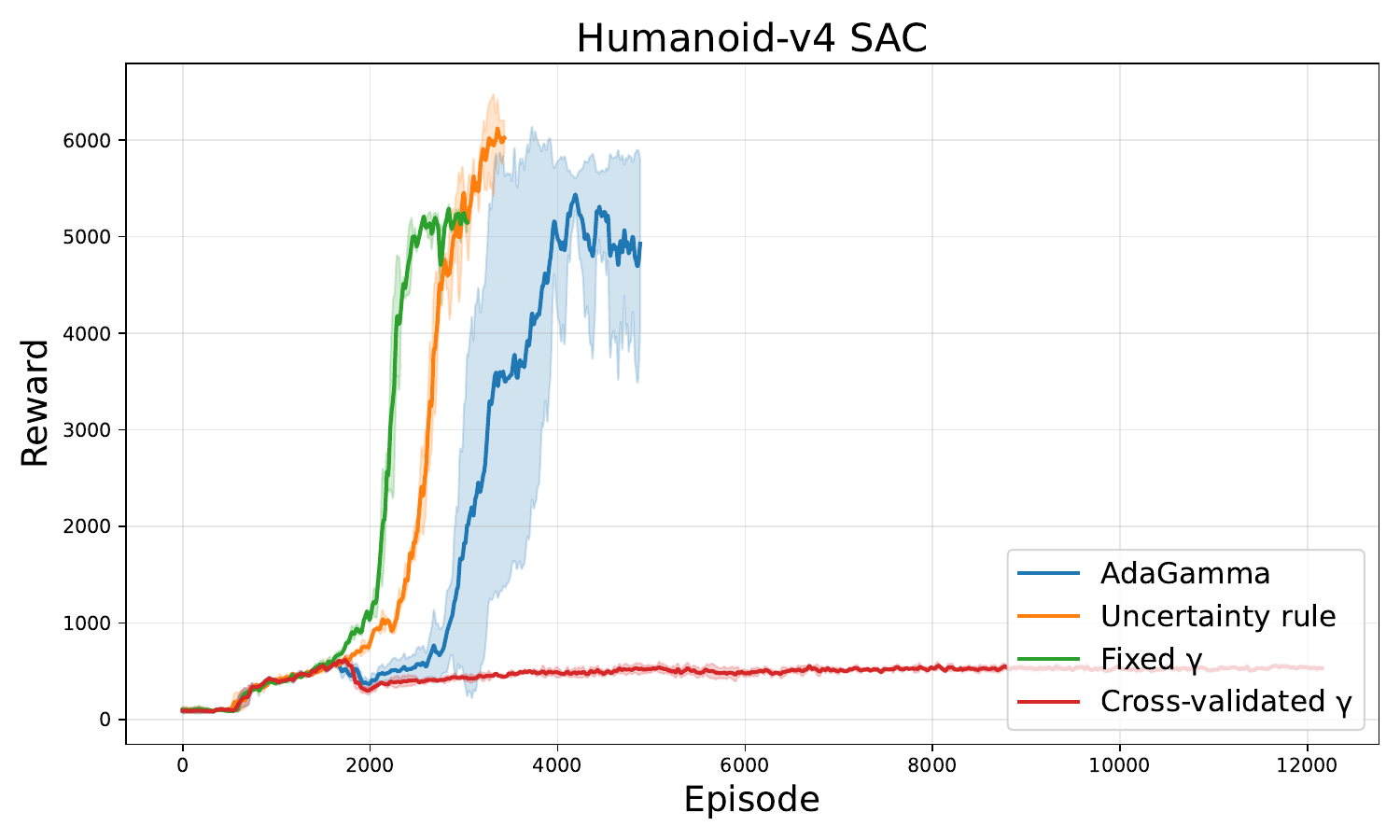}
    \end{subfigure}
    \hfill
    \begin{subfigure}[b]{0.32\textwidth}
        \centering
        \includegraphics[width=\textwidth]{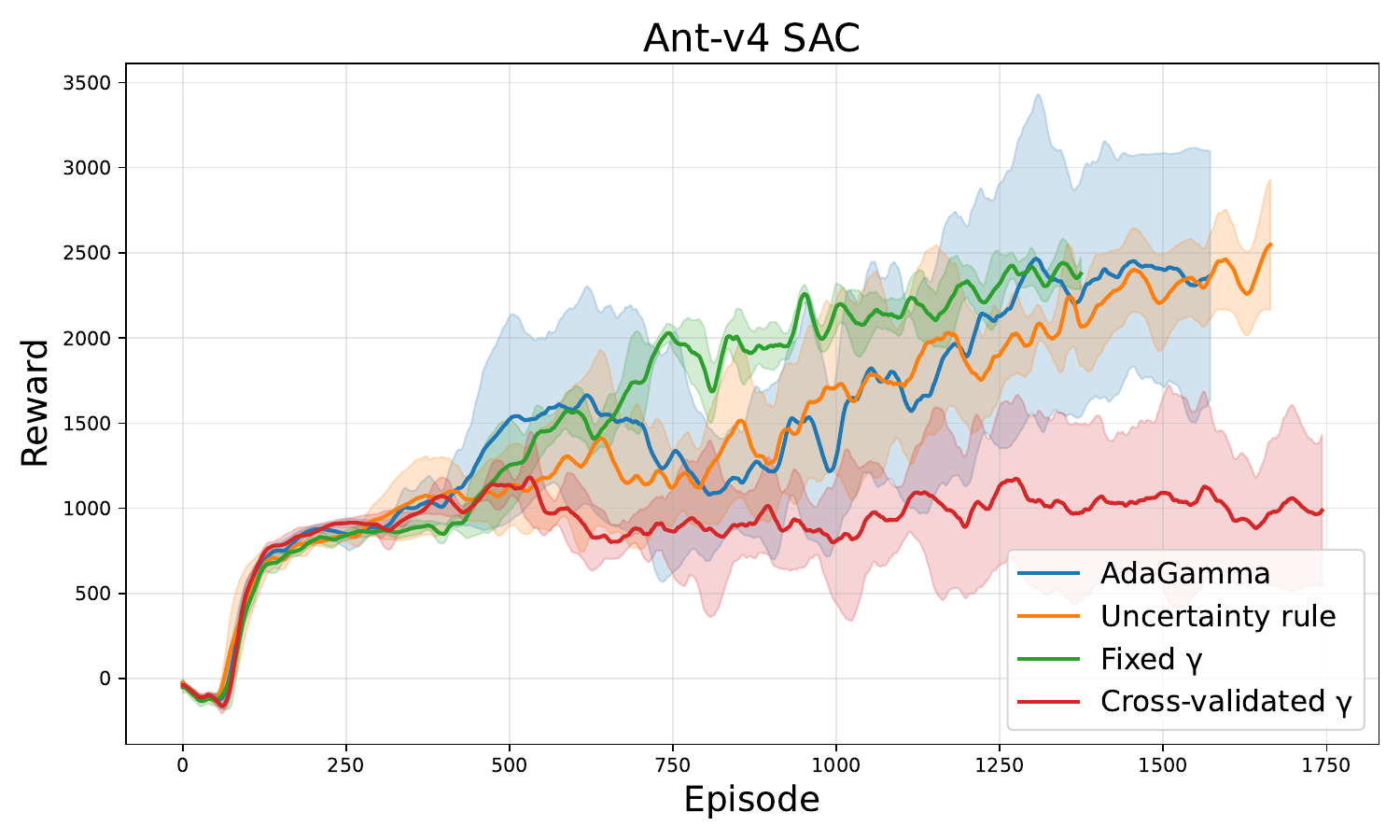}
    \end{subfigure}
    \caption{Learning reward curves of SAC-AdaGamma on SafetyPointGoal1-v0, Humanoid-v4 and Ant-v4}
    \label{fig:reward_sac}
\end{figure}

\begin{figure}[ht]
    \centering

    \begin{subfigure}[b]{0.32\textwidth}
        \centering
        \includegraphics[width=\textwidth]{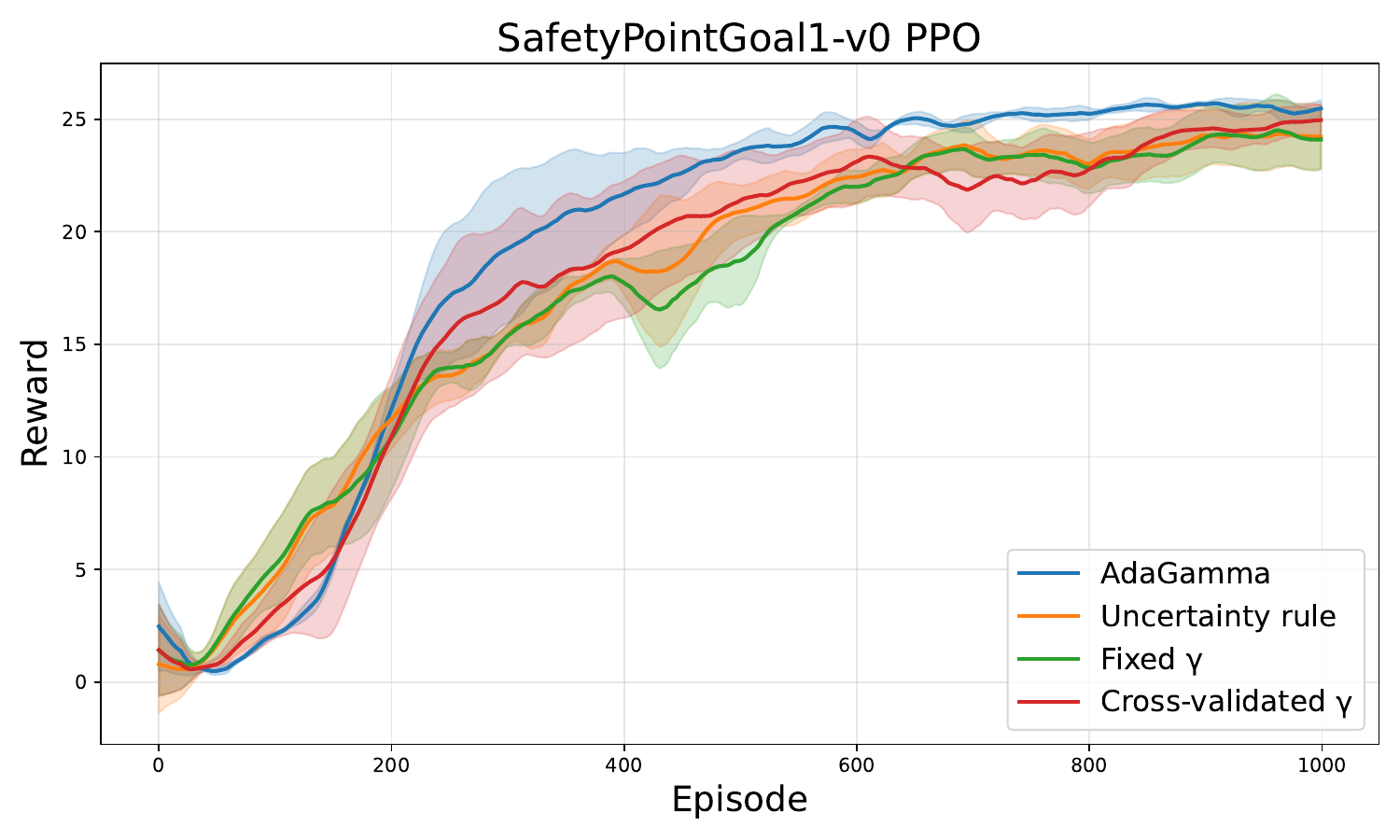}
       
    \end{subfigure}
    \hfill
    \begin{subfigure}[b]{0.32\textwidth}
        \centering
        \includegraphics[width=\textwidth]{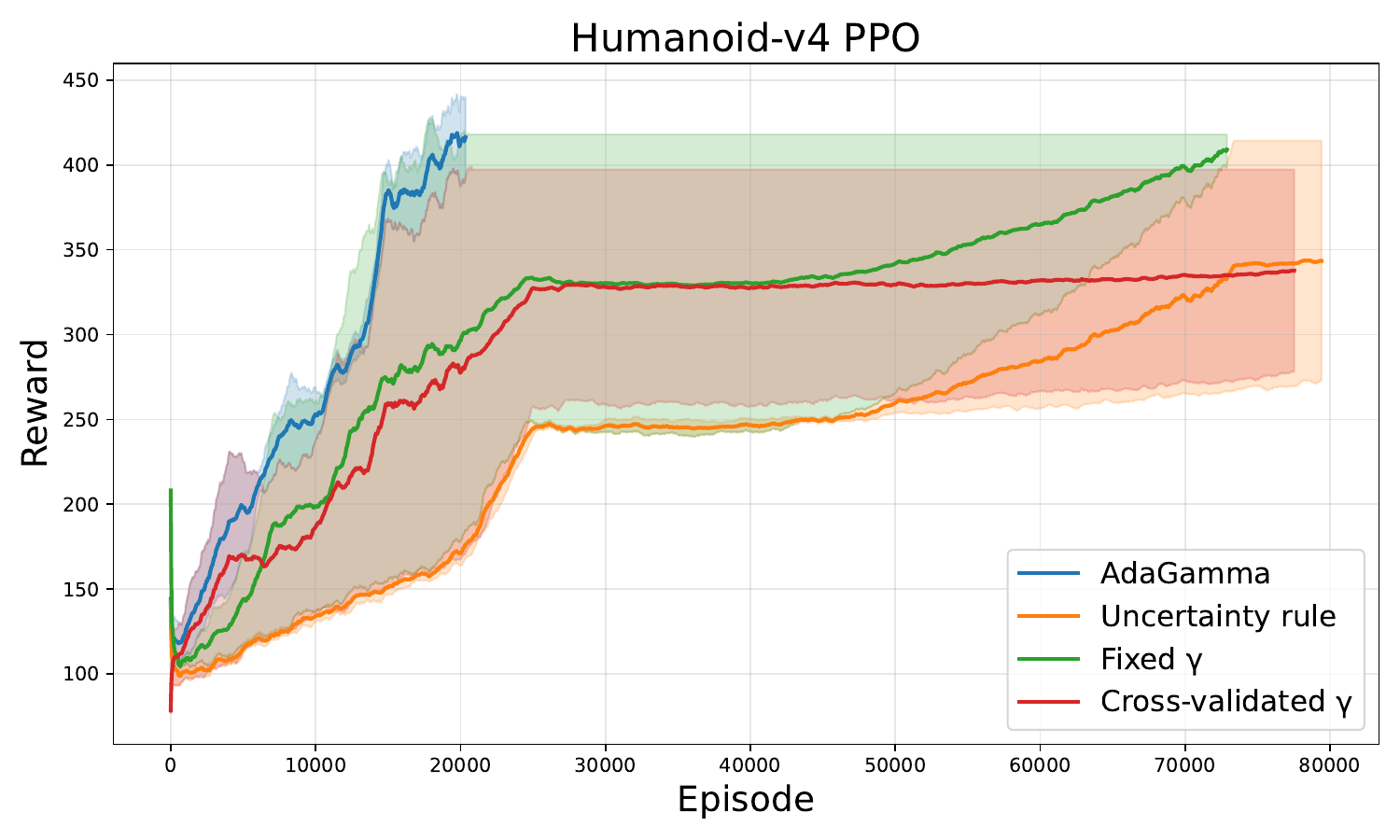}
       
    \end{subfigure}
    \hfill
    \begin{subfigure}[b]{0.32\textwidth}
        \centering
        \includegraphics[width=\textwidth]{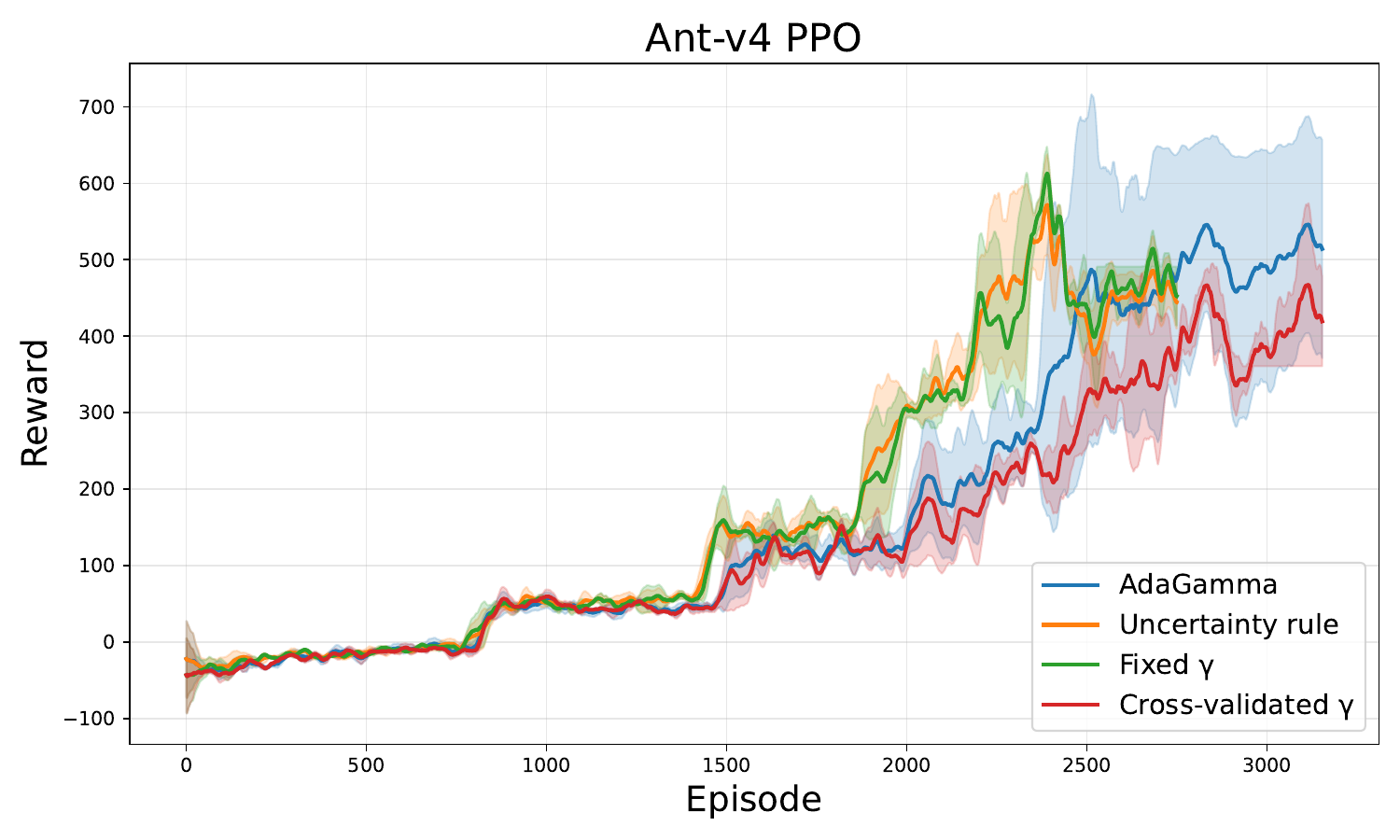}
    \end{subfigure}
    \caption{Learning reward curves of PPO-AdaGamma on SafetyPointGoal1-v0, Humanoid-v4 and Ant-v4}
    \label{fig:reward_ppo}
\end{figure}

\section{Generalization to TRPO and DDPG}
\label{app:trpo_ddpg}
To examine whether AdaGamma transfers beyond our primary SAC/PPO implementations, we evaluate integrated DDPG ~\citep{lillicrap2019continuouscontroldeepreinforcement} and TRPO \citep{schulman2017trustregionpolicyoptimization} agents on the same three benchmarks used in Section~\ref{sec:experiments}: the constrained navigation task \texttt{SafetyPointGoal1-v0}(\ref{tab:trpo_ddpg_safe}), and the high-dimensional MuJoCo locomotion domains \texttt{Humanoid-v4}(\ref{tab:trpo_ddpg_human}) and \texttt{Ant-v4}(\ref{tab:trpo_ddpg_ant}). We deliberately keep environments, seeds, horizons, evaluation protocol, and discount-related baselines aligned with the main experiments so that any differences can be attributed to the choice of base algorithm rather than task setup. \texttt{SafetyPointGoal1-v0} tests reward--cost tradeoffs under safety constraints, while \texttt{Humanoid-v4} and \texttt{Ant-v4} stress long-horizon coordination and contact-rich dynamics, respectively.

\begin{table}[ht]
  \centering
  \caption{Performance in \texttt{SafetyPointGoal1-v0}}
  \label{tab:trpo_ddpg_safe}
    \scalebox{0.7}{
  \begin{tabular}{l | cc|| l | cc}
    \toprule
    Method Variants & Reward & Cost & Method Variants & Reward & Cost\\
    \midrule
   TRPO-Fixed-$\gamma$      & 24.48 $\pm$ 2.01  & \textbf{45.90 $\pm$ 28.90}  & DDPG-Fixed-$\gamma$      & 27.34 $\pm$ 1.18  & 51.45 $\pm$ 41.13  \\
    TRPO-CrossValidate       & 24.38 $\pm$ 7.11  & 46.35 $\pm$ 28.92  & DDPG-CrossValidate       & 27.26 $\pm$ 1.23  &  53.85 $\pm$ 41.92   \\
    TRPO-Uncertainty         &23.50 $\pm$ 1.90   & 64.55 $\pm$ 35.31  & DDPG-Uncertainty         & 27.10 $\pm$ 0.70  & 52.70 $\pm$ 40.64  \\
    TRPO-AdaGamma            & \textbf{25.93} $\pm$ 1.46  &  49.40 $\pm$ 29.17   & DDPG-AdaGamma            & \textbf{27.45 $\pm$ 1.40}  & \textbf{46.15 $\pm$ 32.72}   \\
    \bottomrule
  \end{tabular}
    }
\end{table}

\begin{table}[ht]
 
  \centering
  \caption{Performance on \texttt{Humanoid-v4}.}
  \label{tab:trpo_ddpg_human}
  \scalebox{0.7}{
  \begin{tabular}{l | r || l | r}
    \toprule
    Method Variants & Reward & Method Variants & Reward \\
    \midrule
    TRPO-Fixed-$\gamma$      & 218.15 $\pm$ 11.33  & DDPG-Fixed-$\gamma$      &  165.82 $\pm$ 5.67 \\
    TRPO-CrossValidate       & 221.10 $\pm$ 10.10 & DDPG-CrossValidate       &  356.85 $\pm$ 18.47   \\
    TRPO-Uncertainty         & 250.65 $\pm$ 25.40  & DDPG-Uncertainty         & 454.46$\pm$ 13.18  \\
    TRPO-AdaGamma            &  \textbf{284.49 $\pm$ 11.21}   & DDPG-AdaGamma            &  \textbf{457.02 $\pm$ 45.58}  \\
    \bottomrule
  \end{tabular}
  }
\end{table}

\begin{table}[ht]
 
  \centering
  \caption{Performance on \texttt{Ant-v4}.}
  \label{tab:trpo_ddpg_ant}
  \scalebox{0.7}{
  \begin{tabular}{l | r || l | r}
    \toprule
    Method Variants & Reward & Method Variants & Reward \\
    \midrule
    TRPO-Fixed-$\gamma$      & 1326.83 $\pm$ 103.37  & DDPG-Fixed-$\gamma$      &  3136.60  $\pm$  1106.70 \\
    TRPO-CrossValidate       & 940.26 $\pm$ 457.39  & DDPG-CrossValidate       &   1968.91 $\pm$  687.19  \\
    TRPO-Uncertainty         &  940.26 $\pm$ 457.39  & DDPG-Uncertainty         & 2915.64  $\pm$  1179.86  \\
    TRPO-AdaGamma            & \textbf{1451.34 $\pm$ 243.90}   & DDPG-AdaGamma            & \textbf{3774.19  $\pm$  458.63}   \\
    \bottomrule
  \end{tabular}
  }
\end{table}

\section{Classic Control Results}
\label{app:classic_results}

To further assess whether state-dependent discounting is useful beyond the safety and high-dimensional
locomotion settings, we evaluate AdaGamma on a set of standard Gymnasium control benchmarks
with both SAC and PPO. Table~\ref{tab:classic}  shows that AdaGamma
improves or matches the base algorithm on most classic-control tasks. On \texttt{CartPole-v1}, both
SAC--AdaGamma and PPO--AdaGamma reach the maximum score of $500$, reducing the residual
variance observed in the fixed-discount baselines. On \texttt{Pendulum-v1},
\texttt{MountainCarContinuous-v0}, and \texttt{Acrobot-v1}, performance remains below the task
optimum and exhibits greater variability; even so, AdaGamma is often competitive with or modestly
better than the fixed-discount baselines.

\begin{table*}[ht]
  \centering
   \caption{Test results of SAC and PPO with and without adaptive-$\gamma$.}
\scalebox{0.9}{
  \begin{tabular}{c|cc|cc}
  \toprule
       Test Results(rewards)&  SAC  & SAC-AdaGamma &  PPO  & PPO-AdaGamma  \\
  \hline
      CartPole-v1    & 483.20 $\pm$ 14.85   & \textbf{500.00 $\pm$ 0.00} & 481.90 $\pm$ 78.70   & \textbf{500.00 $\pm$ 0.00}       \\
      Pendulum-v1    & -64.832  $\pm$  62.931   &  \textbf{-58.557  $\pm$  56.618}   &-212.27 $\pm$ 150.86   & \textbf{-198.12 $\pm$ 126.97}  \\
      MountainCarContinuous-v0 & 94.47 $\pm$  0.73   & \textbf{94.60 $\pm$ 0.60}   & 84.72 $\pm$ 11.48    & \textbf{86.84$\pm$ 3.24}    \\
      Acrobot-v1 & -82.91 $\pm$ 12.20  & \textbf{-82.61 $\pm$ 14.64}   & \textbf{-95.34 $\pm$ 21.46}  & -115.186 $\pm$  29.02   \\
  \bottomrule
  
  \end{tabular}
  }
 
  \label{tab:classic}
\end{table*}

\section{Snapshots of \texttt{Humanoid-v4} and \texttt{Ant-v4}}
\label{app:snapshots}
In this section we show the snapshots of the trained RL agent of SAC and PPO with AdaGamma on \texttt{Humanoid-v4} and \texttt{Ant-v4} tasks.

\begin{figure}[ht]
    \centering

    \begin{subfigure}[b]{0.19\textwidth}
        \centering
        \includegraphics[width=\textwidth]{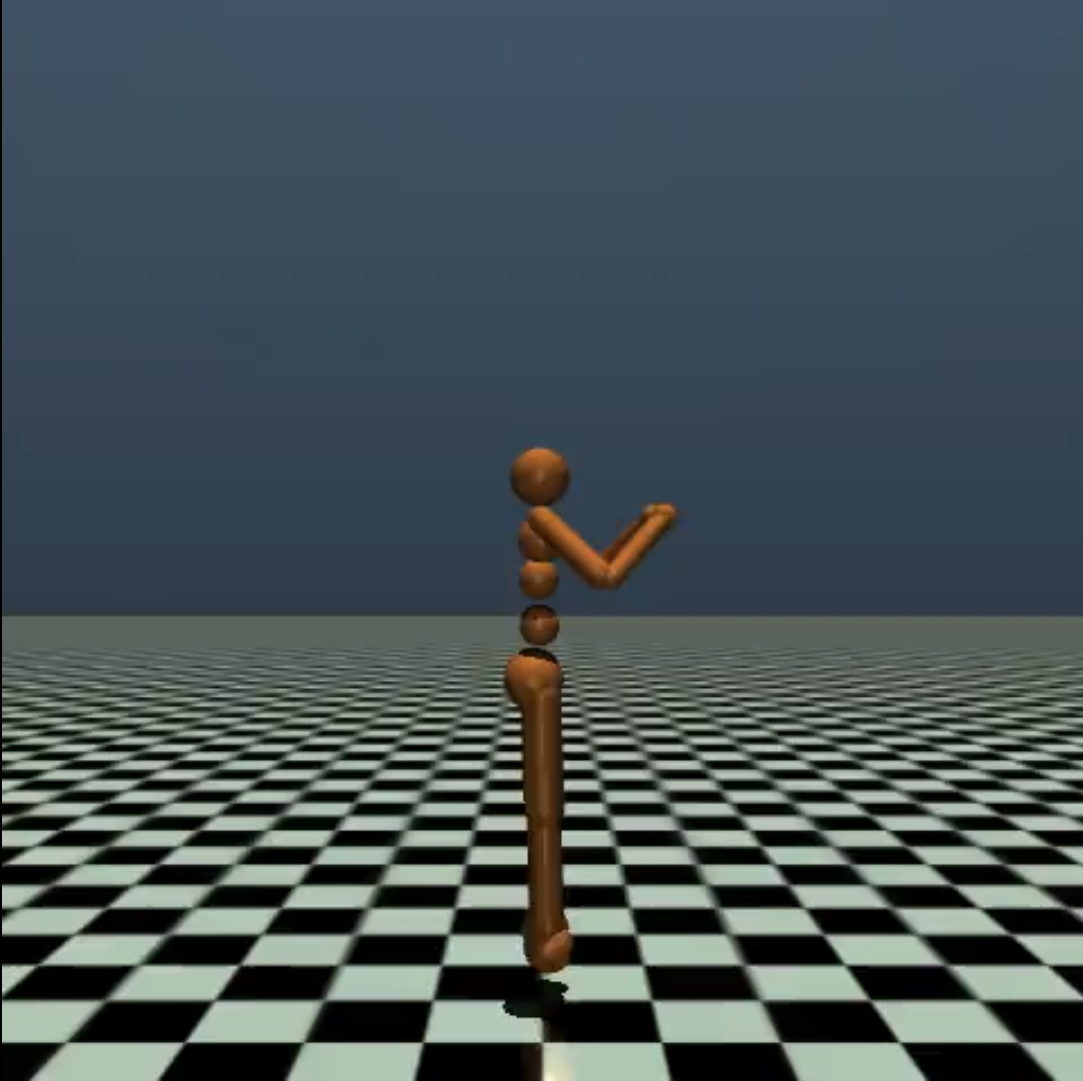}
       
    \end{subfigure}
    \hfill
    \begin{subfigure}[b]{0.19\textwidth}
        \centering
        \includegraphics[width=\textwidth]{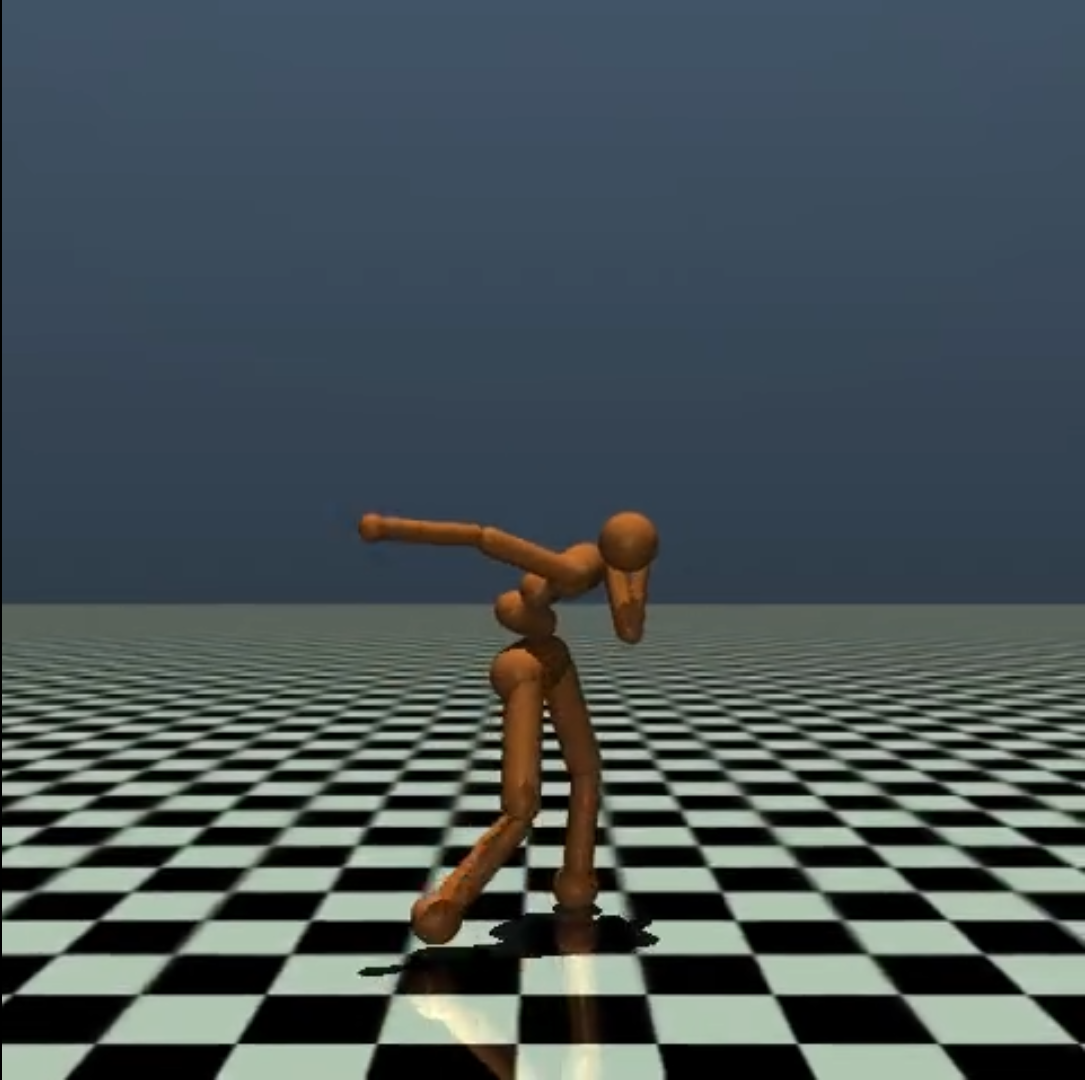}
       
    \end{subfigure}
    \hfill
    \begin{subfigure}[b]{0.19\textwidth}
        \centering
        \includegraphics[width=\textwidth]{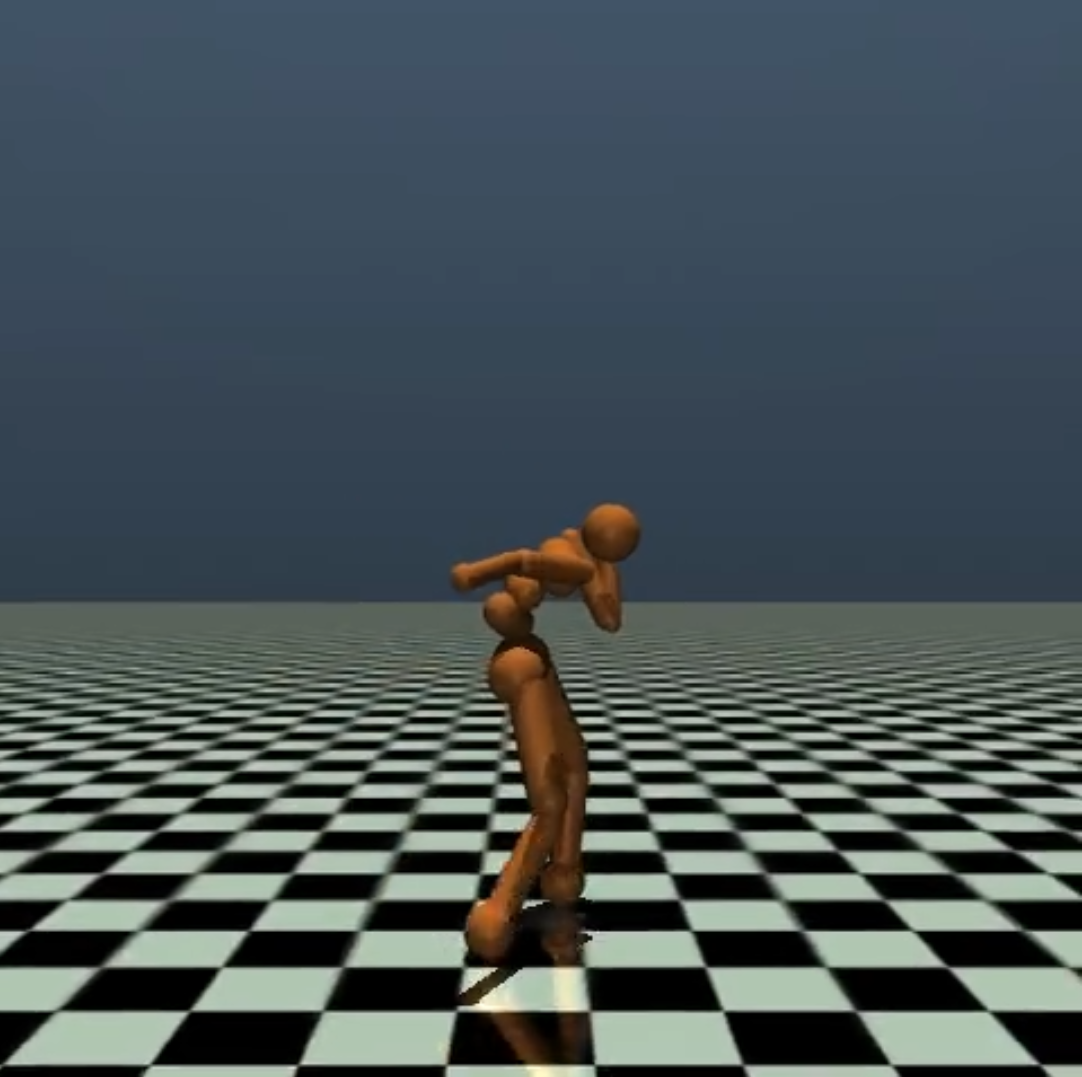}
       
    \end{subfigure}
    \hfill
    \begin{subfigure}[b]{0.19\textwidth}
        \centering
        \includegraphics[width=\textwidth]{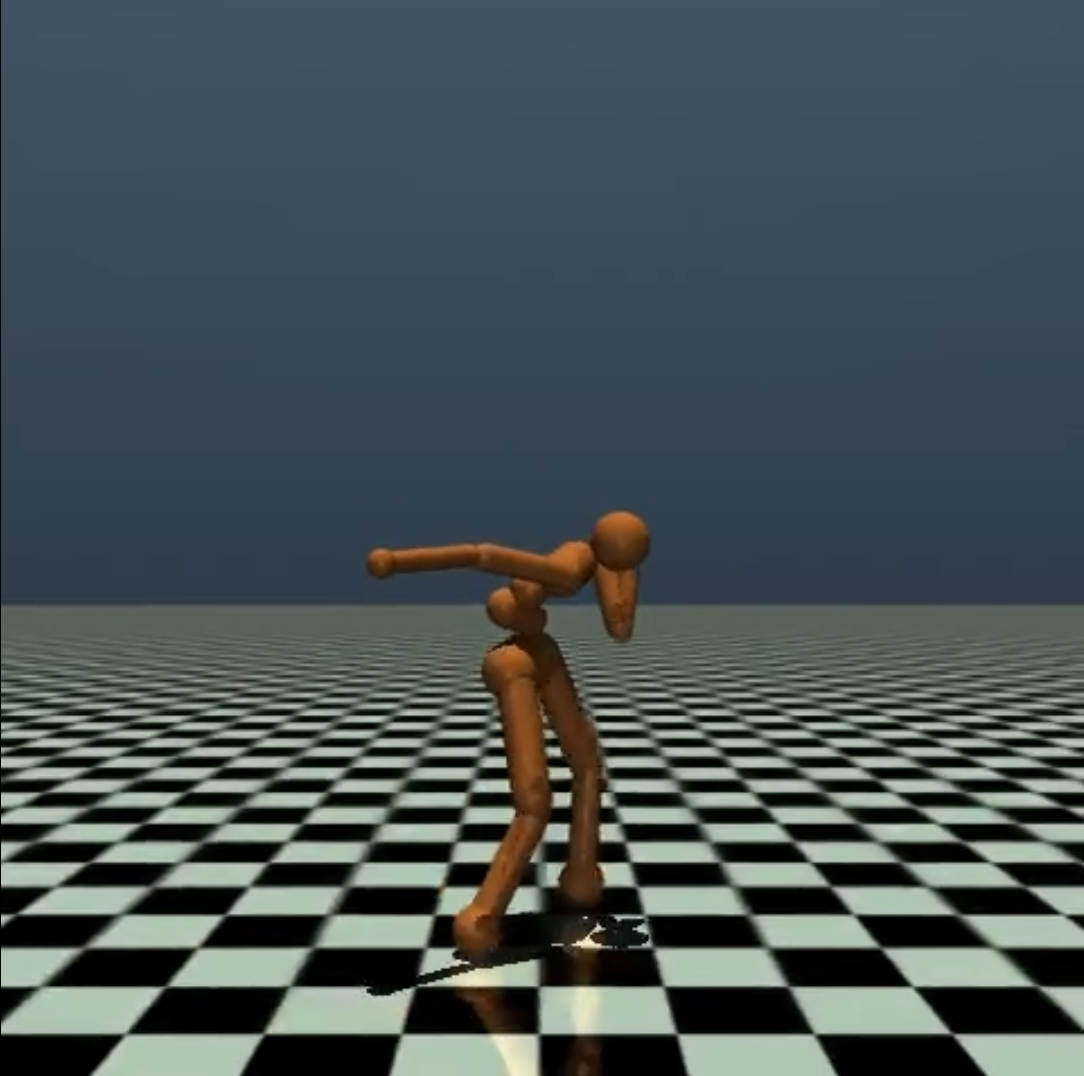}
        
    \end{subfigure}
    \hfill
    \begin{subfigure}[b]{0.19\textwidth}
        \centering
        \includegraphics[width=\textwidth]{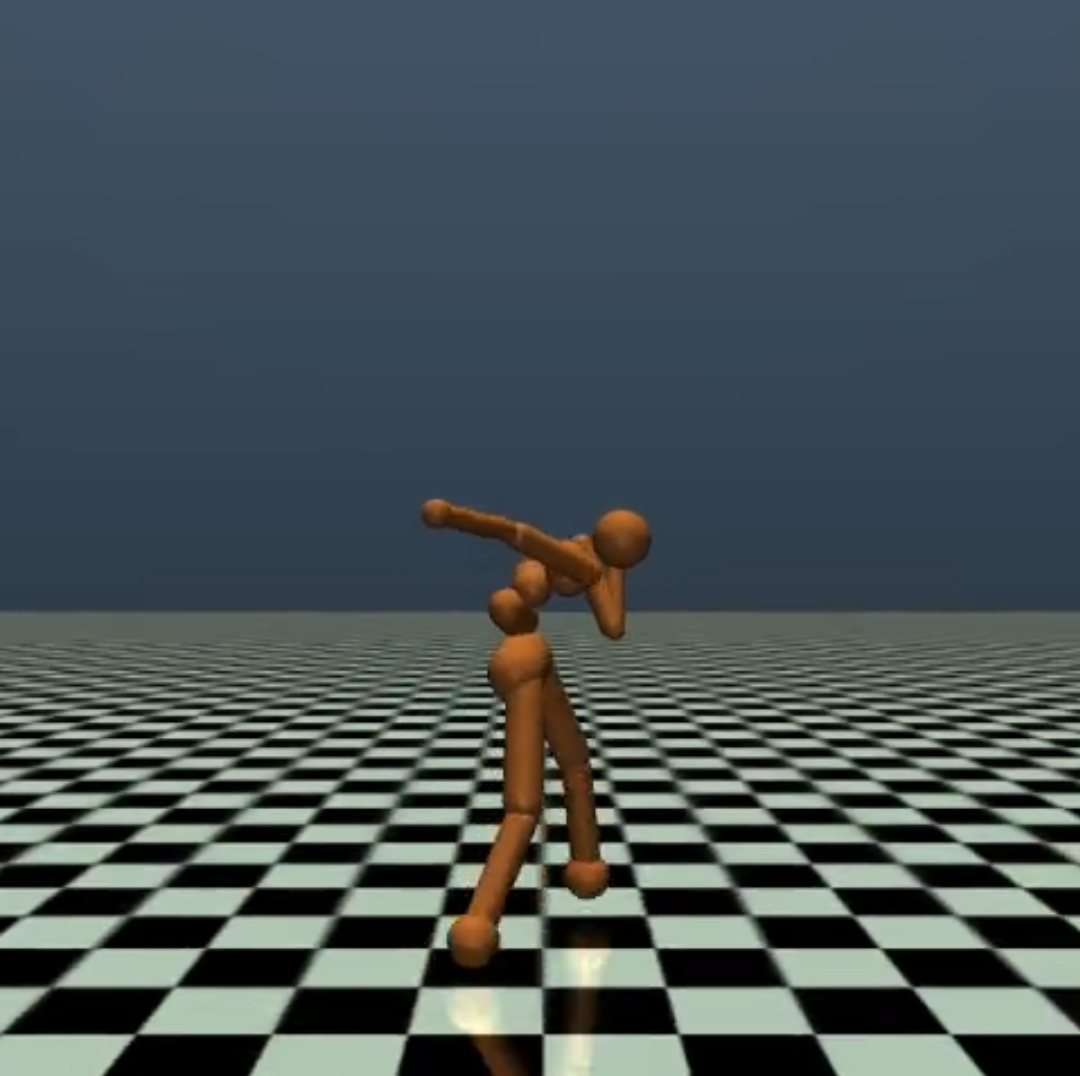}
       
    \end{subfigure}
     \caption{Snapshots of SAC-AdaGamma on Humanoid-v4}
    \label{fig:sac_humanoid}
\end{figure}

\begin{figure}[ht]
    \centering

    \begin{subfigure}[b]{0.19\textwidth}
        \centering
        \includegraphics[width=\textwidth]{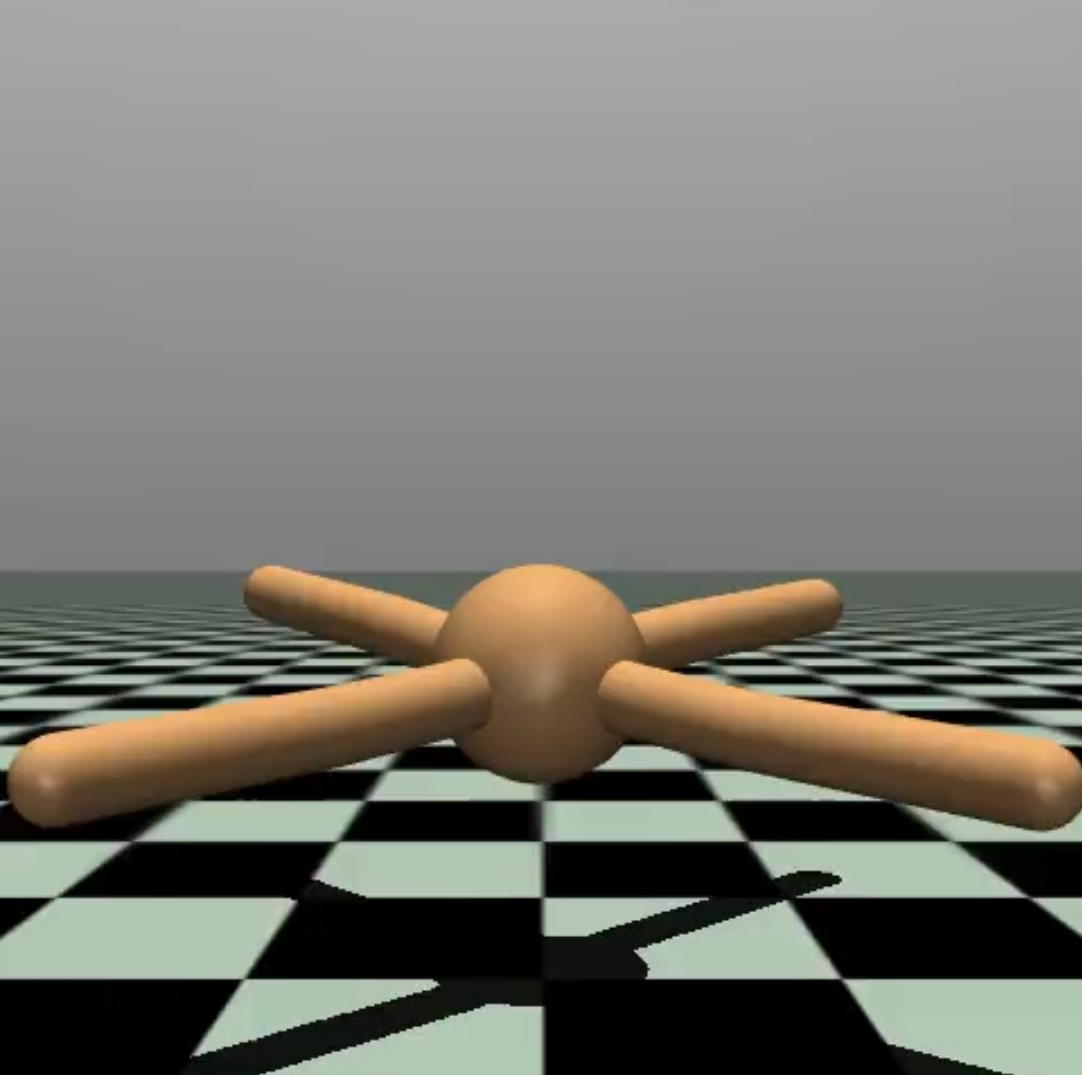}
       
    \end{subfigure}
    \hfill
    \begin{subfigure}[b]{0.19\textwidth}
        \centering
        \includegraphics[width=\textwidth]{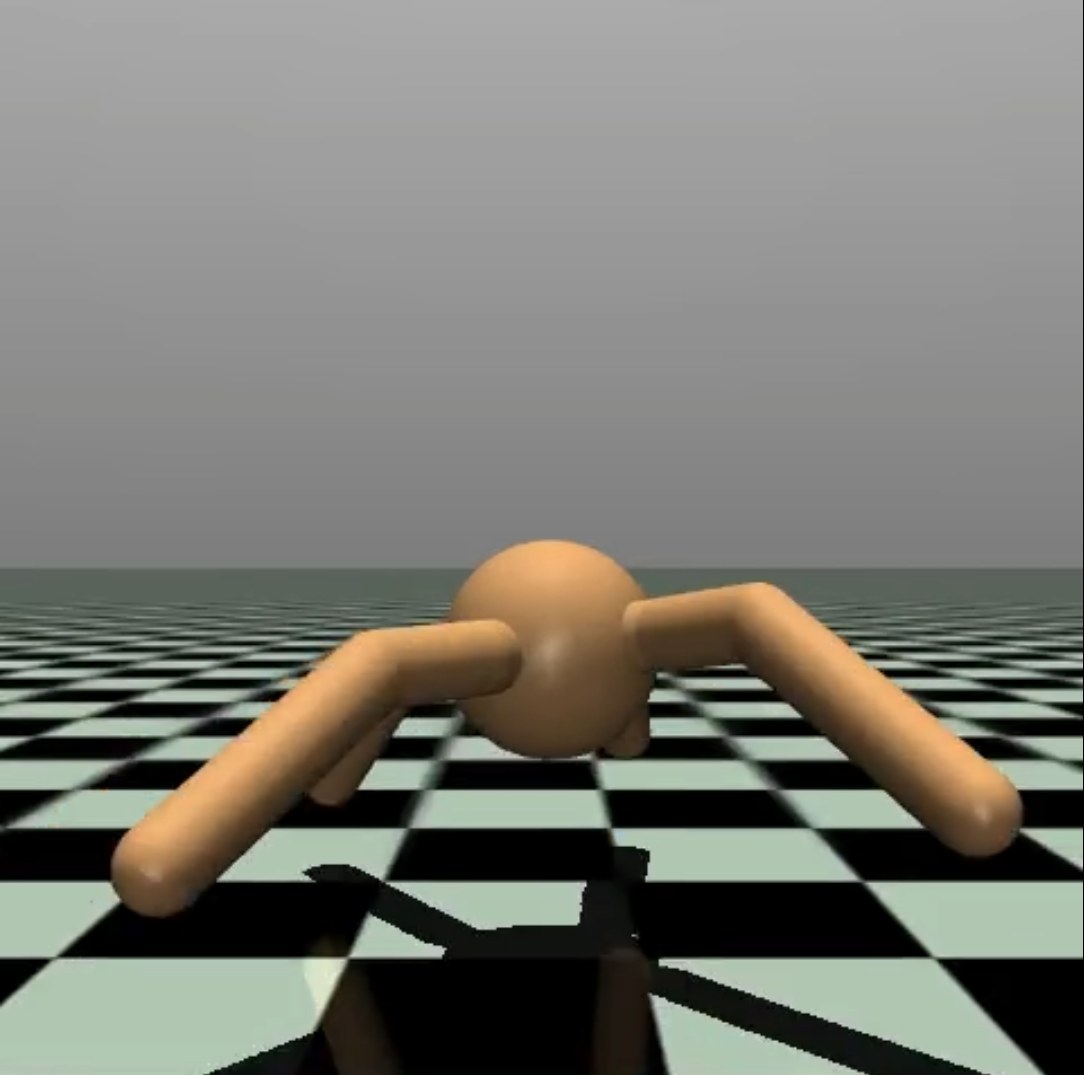}
       
    \end{subfigure}
    \hfill
    \begin{subfigure}[b]{0.19\textwidth}
        \centering
        \includegraphics[width=\textwidth]{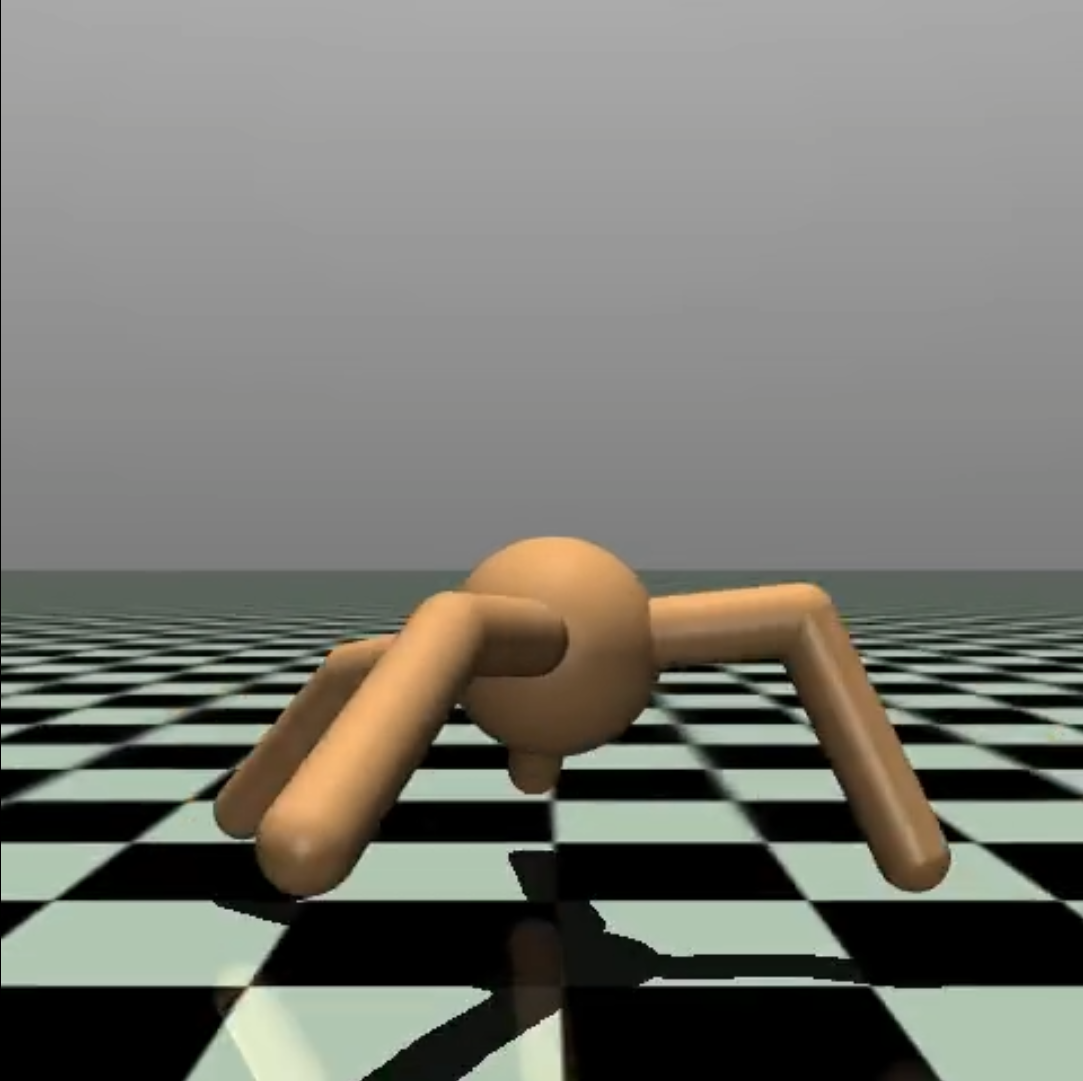}
       
    \end{subfigure}
    \hfill
    \begin{subfigure}[b]{0.19\textwidth}
        \centering
        \includegraphics[width=\textwidth]{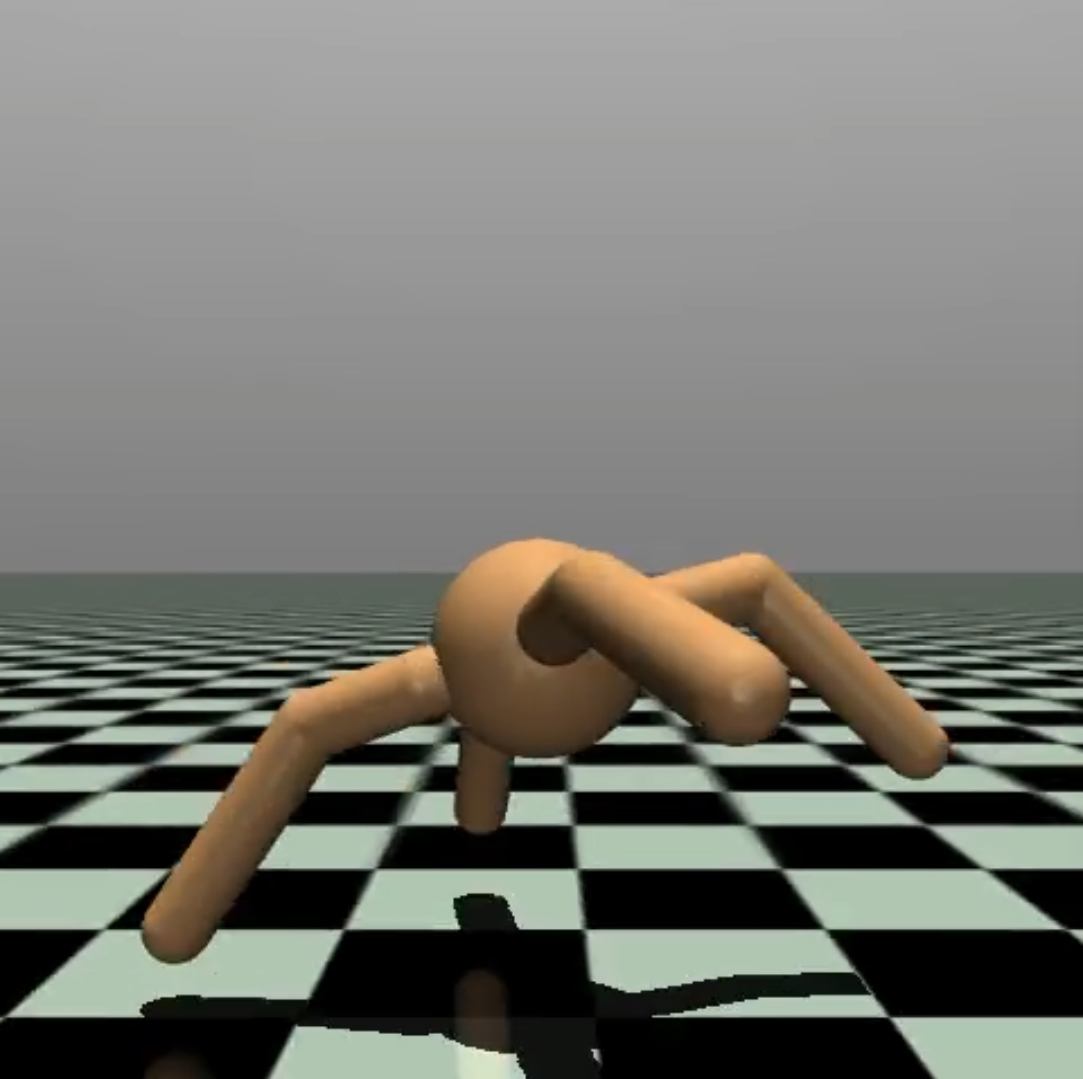}
        
    \end{subfigure}
    \hfill
    \begin{subfigure}[b]{0.19\textwidth}
        \centering
        \includegraphics[width=\textwidth]{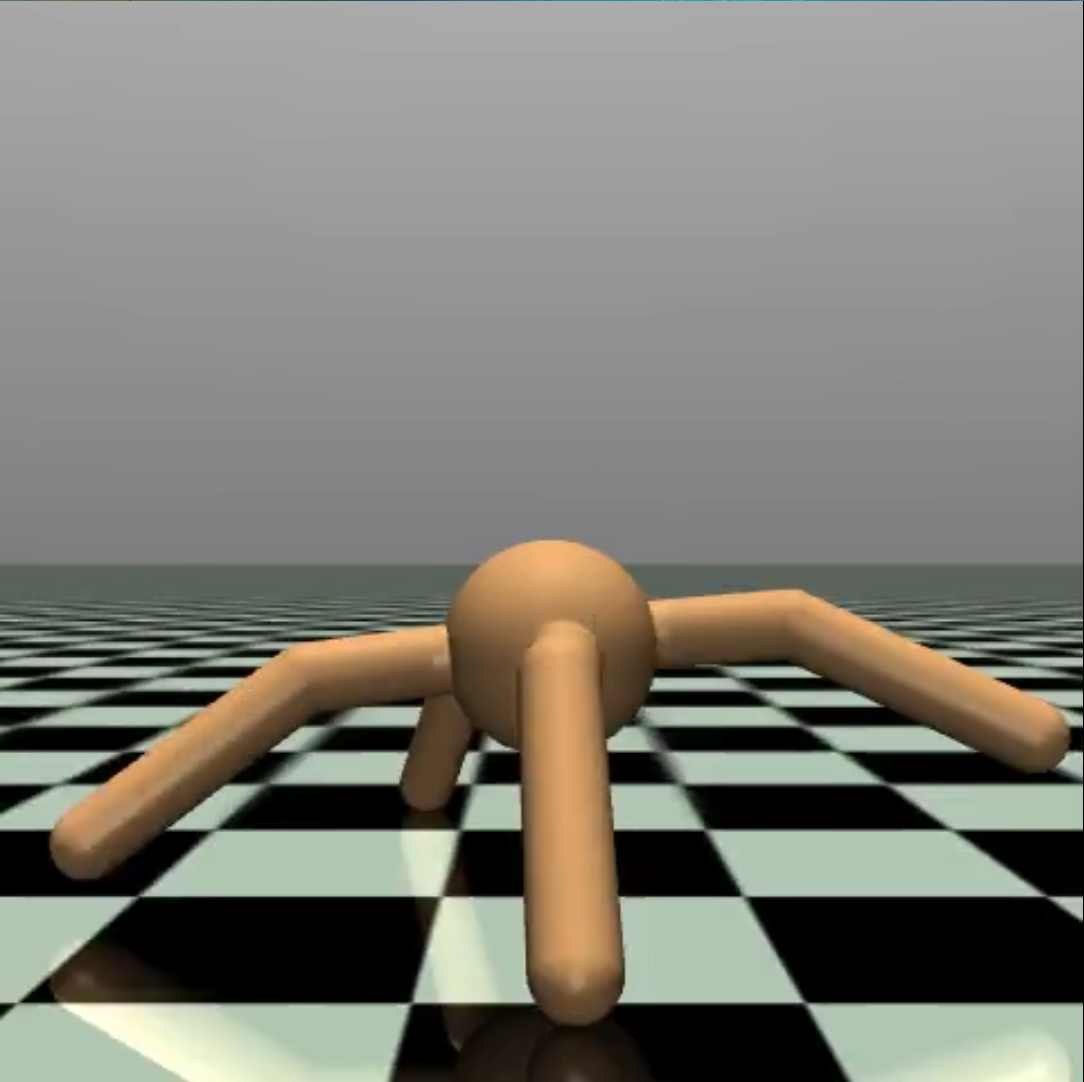}
       
    \end{subfigure}
     \caption{Snapshots of SAC-AdaGamma on Ant-v4}
    \label{fig:sac_ant}
\end{figure}

\begin{figure}[ht]
    \centering

    \begin{subfigure}[b]{0.19\textwidth}
        \centering
        \includegraphics[width=\textwidth]{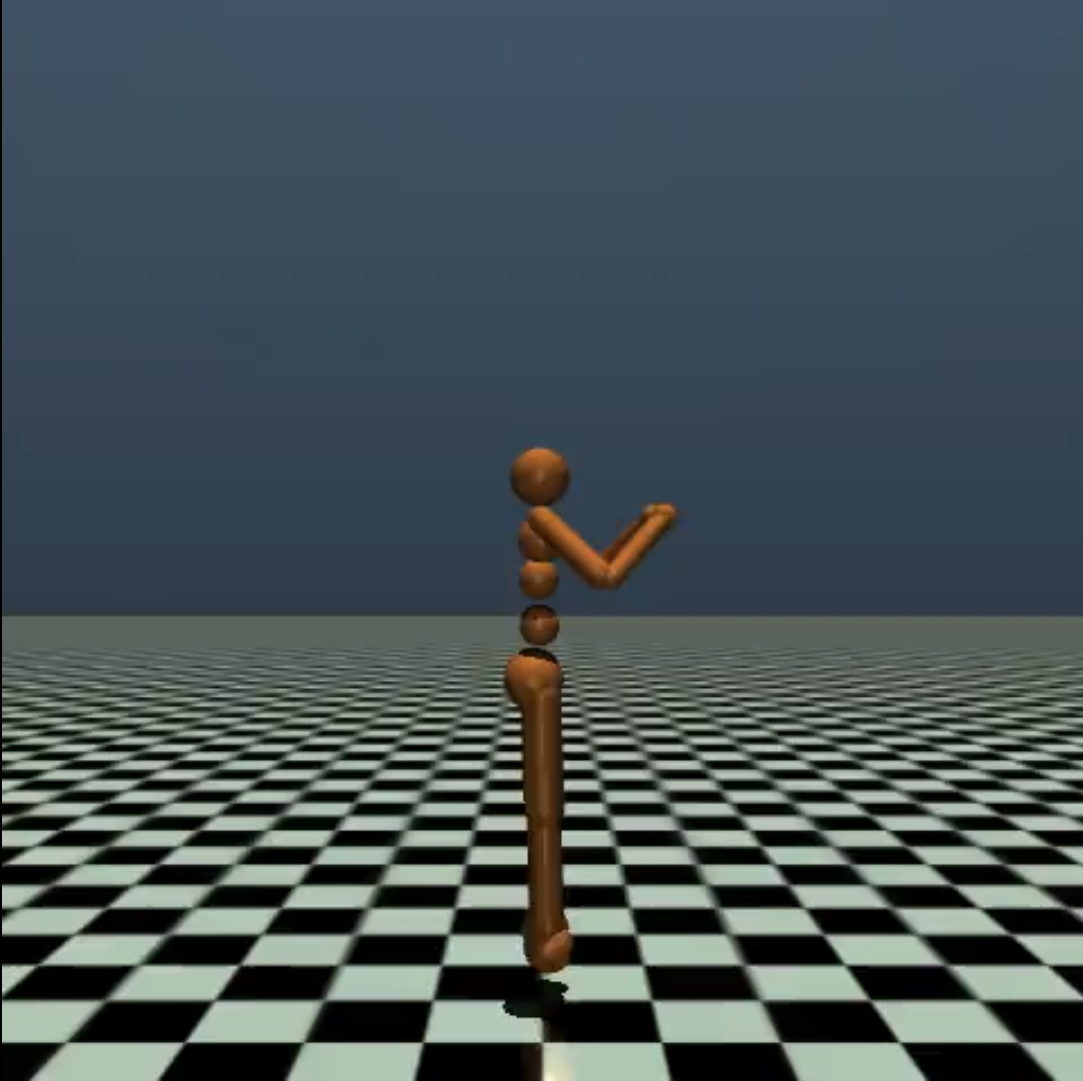}
       
    \end{subfigure}
    \hfill
    \begin{subfigure}[b]{0.19\textwidth}
        \centering
        \includegraphics[width=\textwidth]{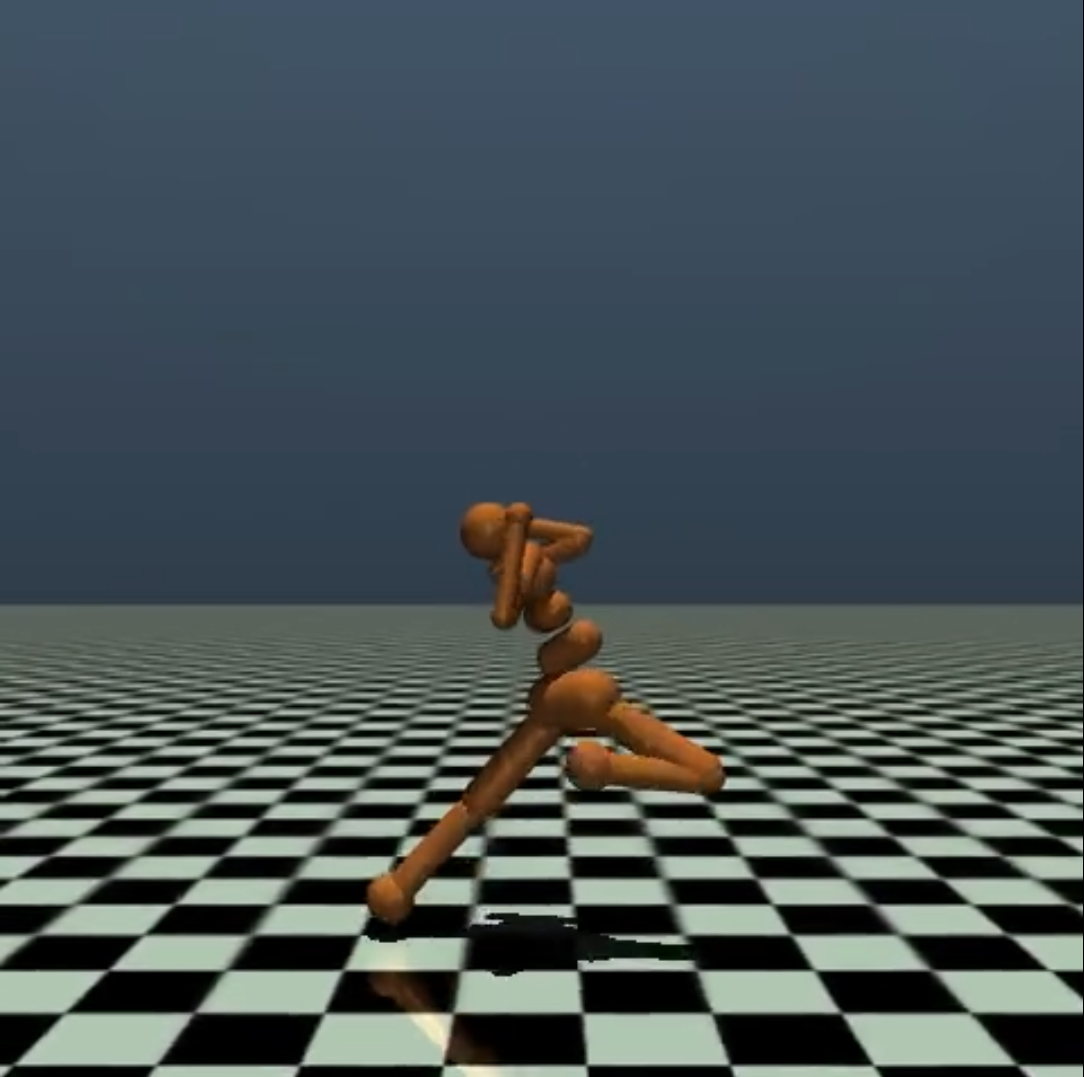}
       
    \end{subfigure}
    \hfill
    \begin{subfigure}[b]{0.19\textwidth}
        \centering
        \includegraphics[width=\textwidth]{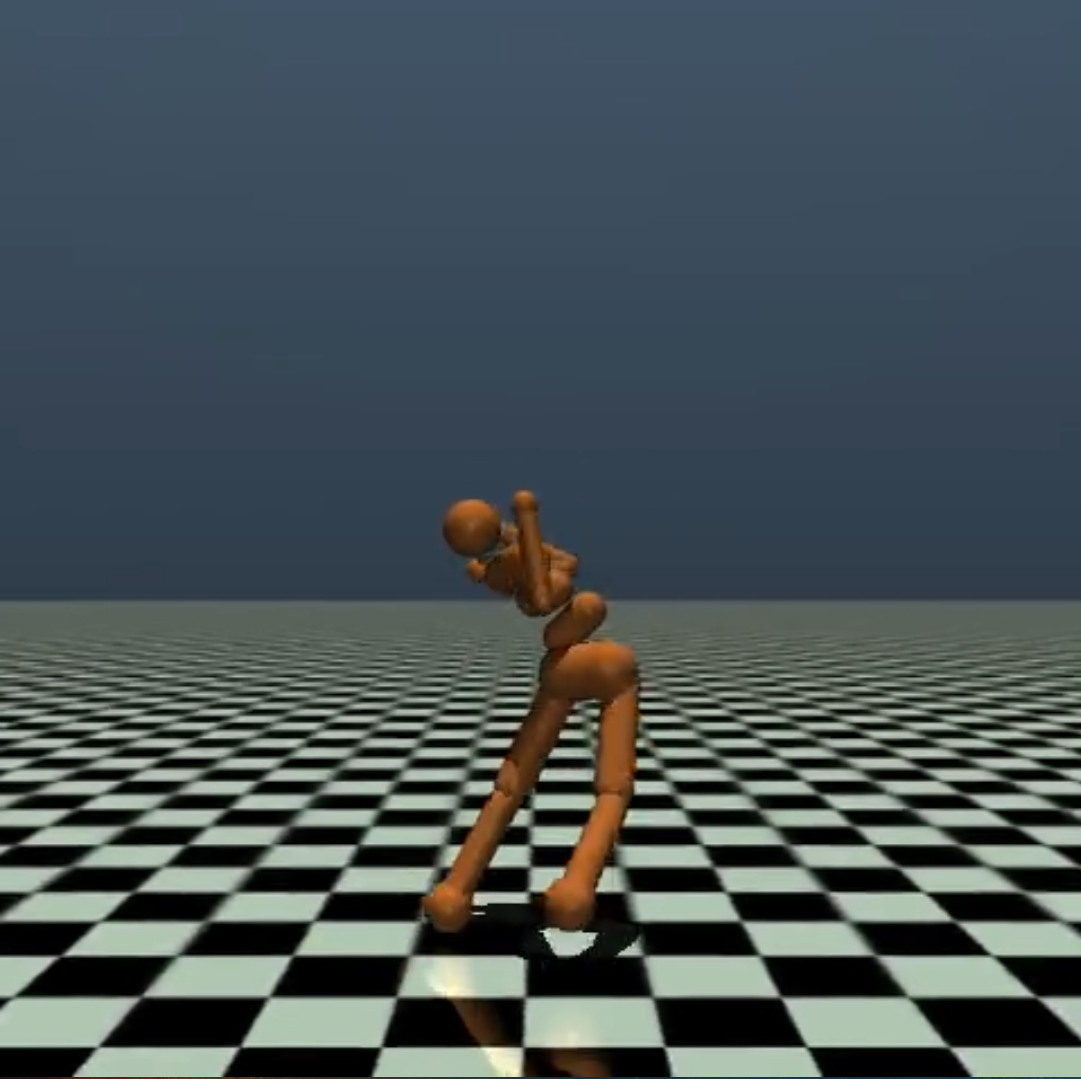}
       
    \end{subfigure}
    \hfill
    \begin{subfigure}[b]{0.19\textwidth}
        \centering
        \includegraphics[width=\textwidth]{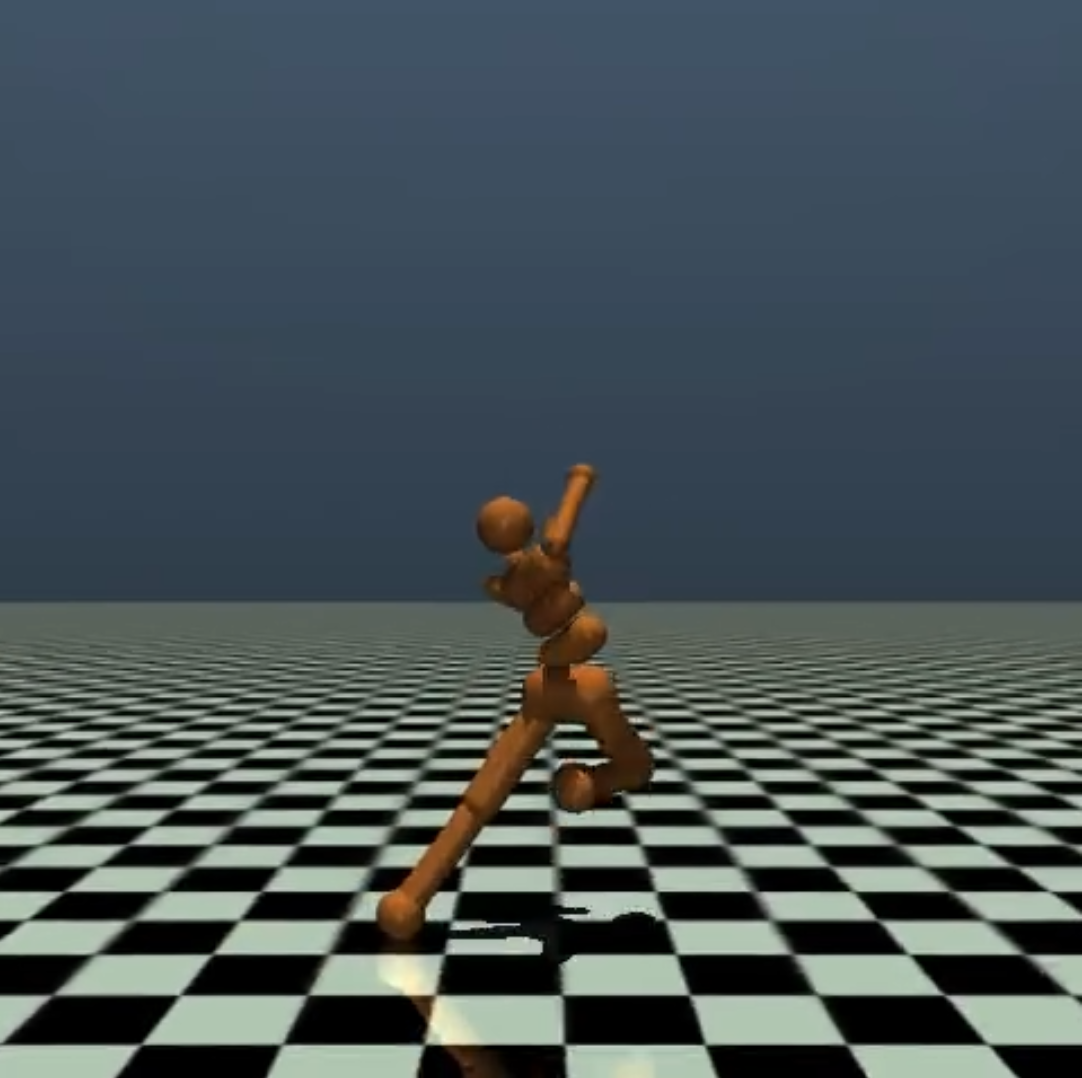}
        
    \end{subfigure}
    \hfill
    \begin{subfigure}[b]{0.19\textwidth}
        \centering
        \includegraphics[width=\textwidth]{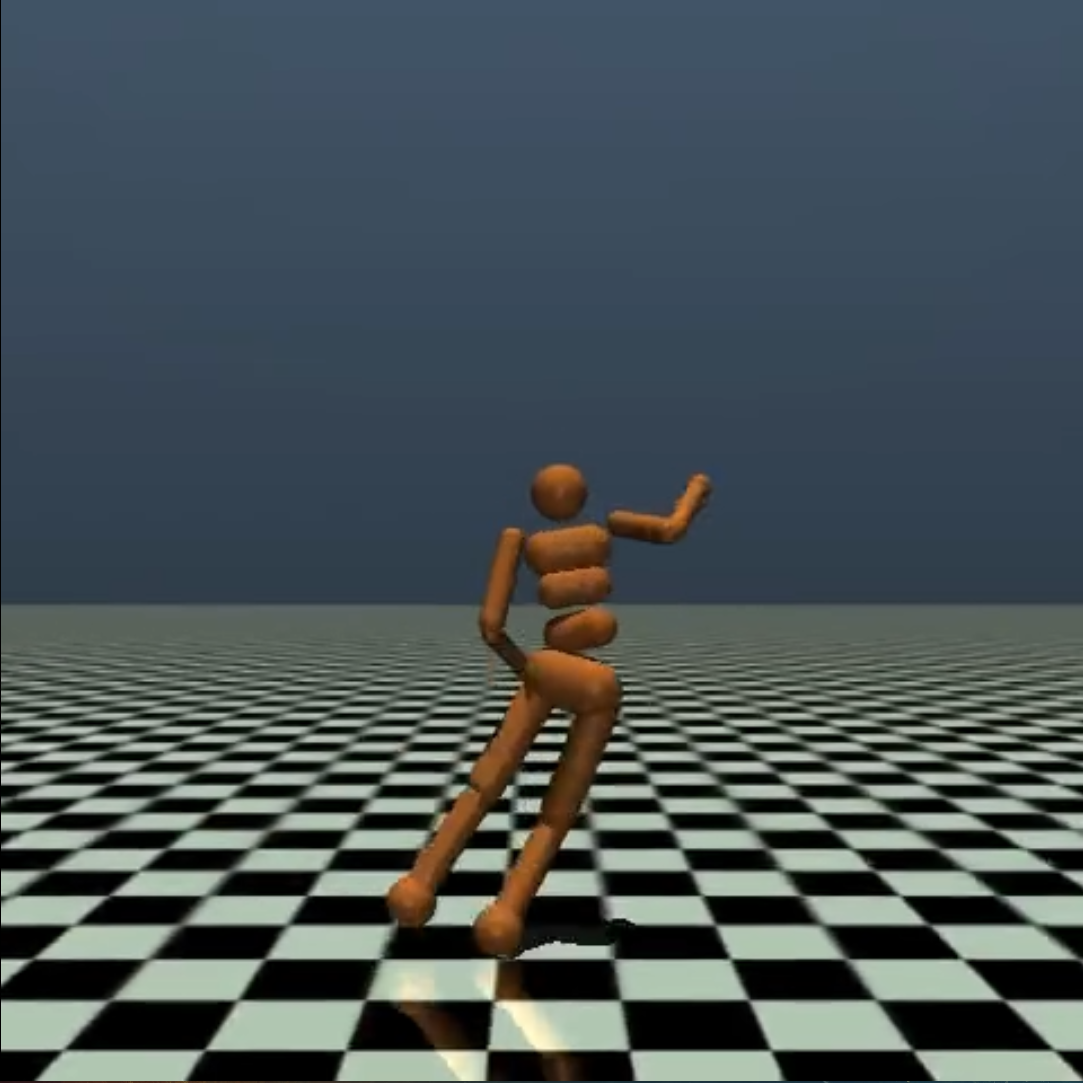}
       
    \end{subfigure}
     \caption{Snapshots of PPO-AdaGamma on Humanoid-v4}
    \label{fig:ppo_humanoid}
\end{figure}

\begin{figure}[ht]
    \centering

    \begin{subfigure}[b]{0.19\textwidth}
        \centering
        \includegraphics[width=\textwidth]{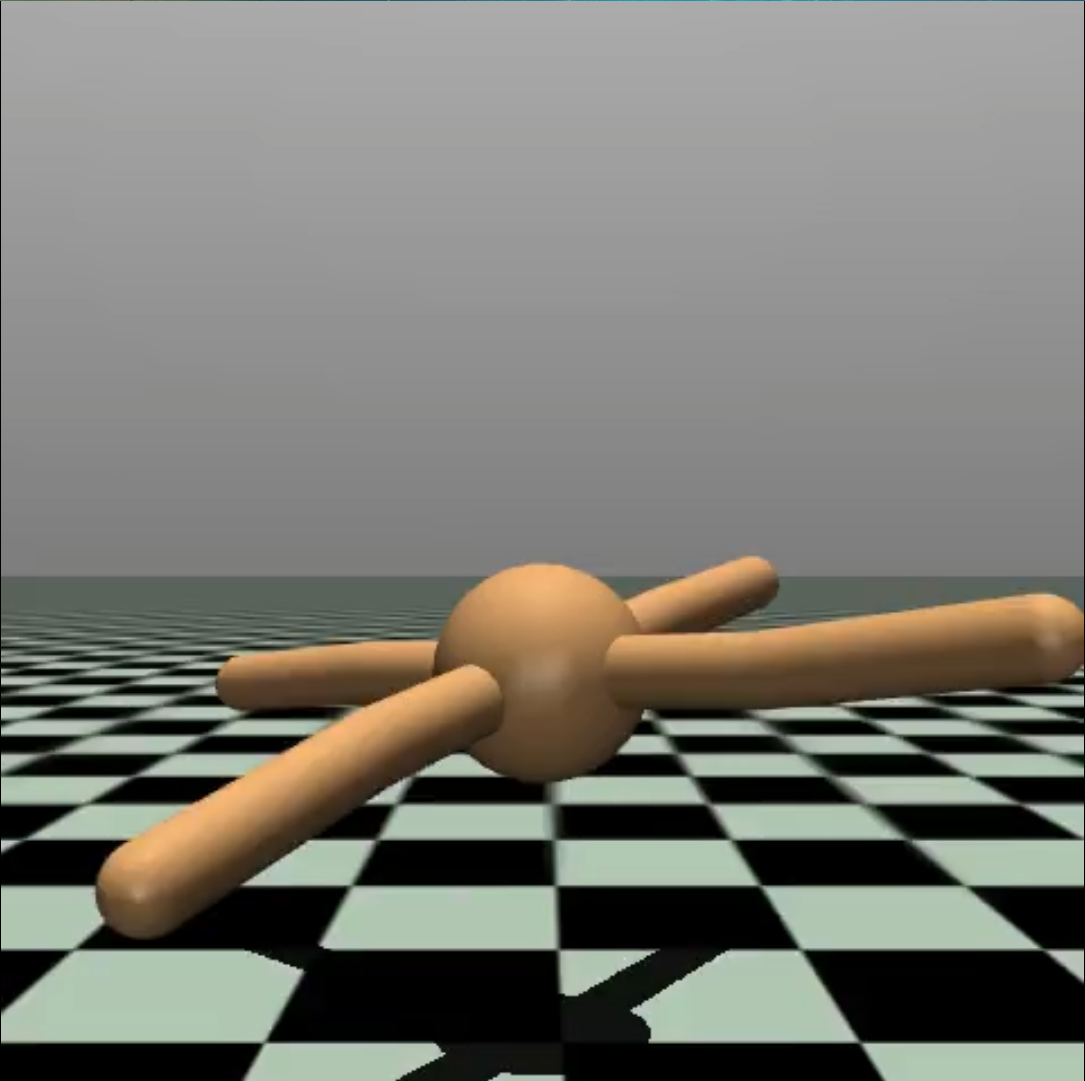}
       
    \end{subfigure}
    \hfill
    \begin{subfigure}[b]{0.19\textwidth}
        \centering
        \includegraphics[width=\textwidth]{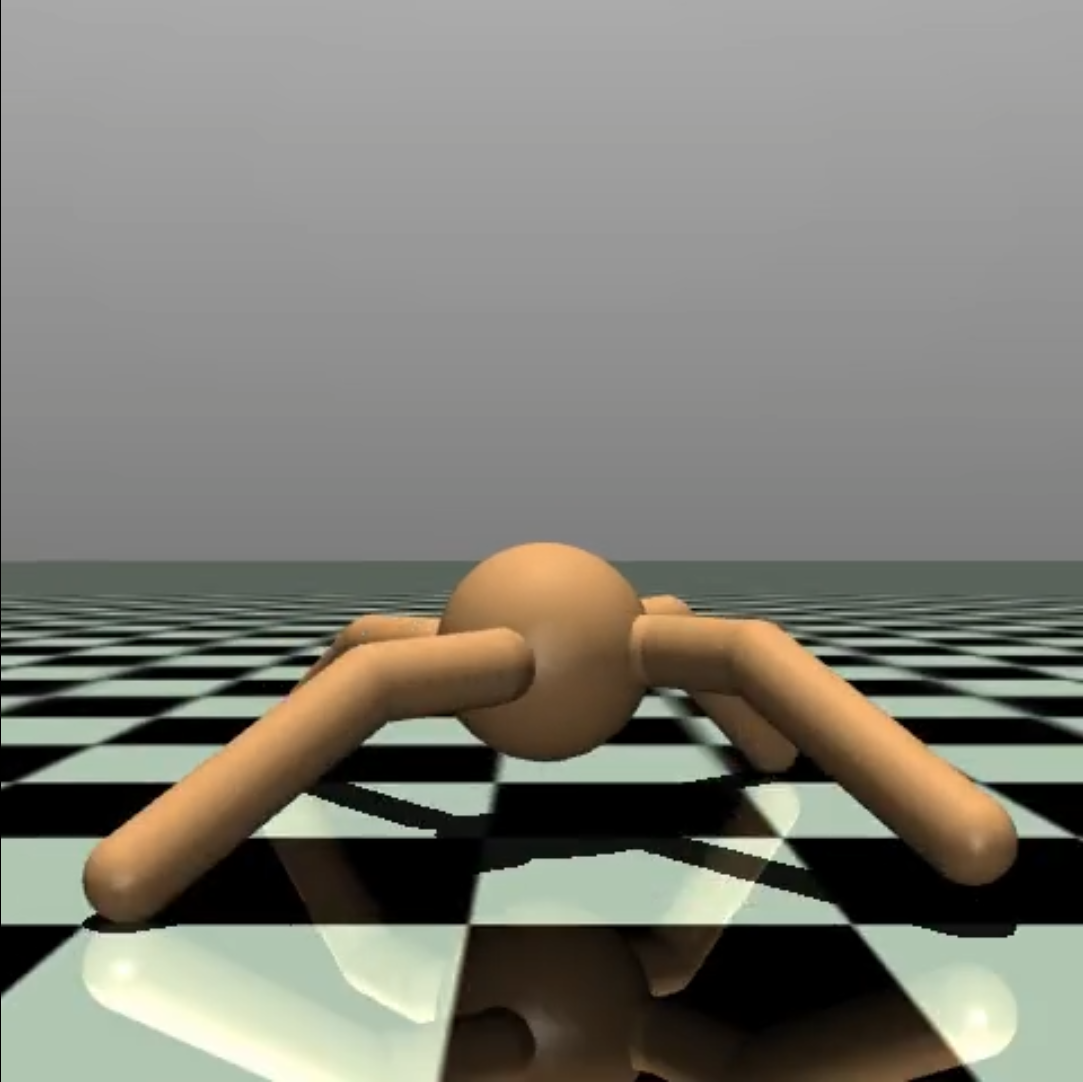}
       
    \end{subfigure}
    \hfill
    \begin{subfigure}[b]{0.19\textwidth}
        \centering
        \includegraphics[width=\textwidth]{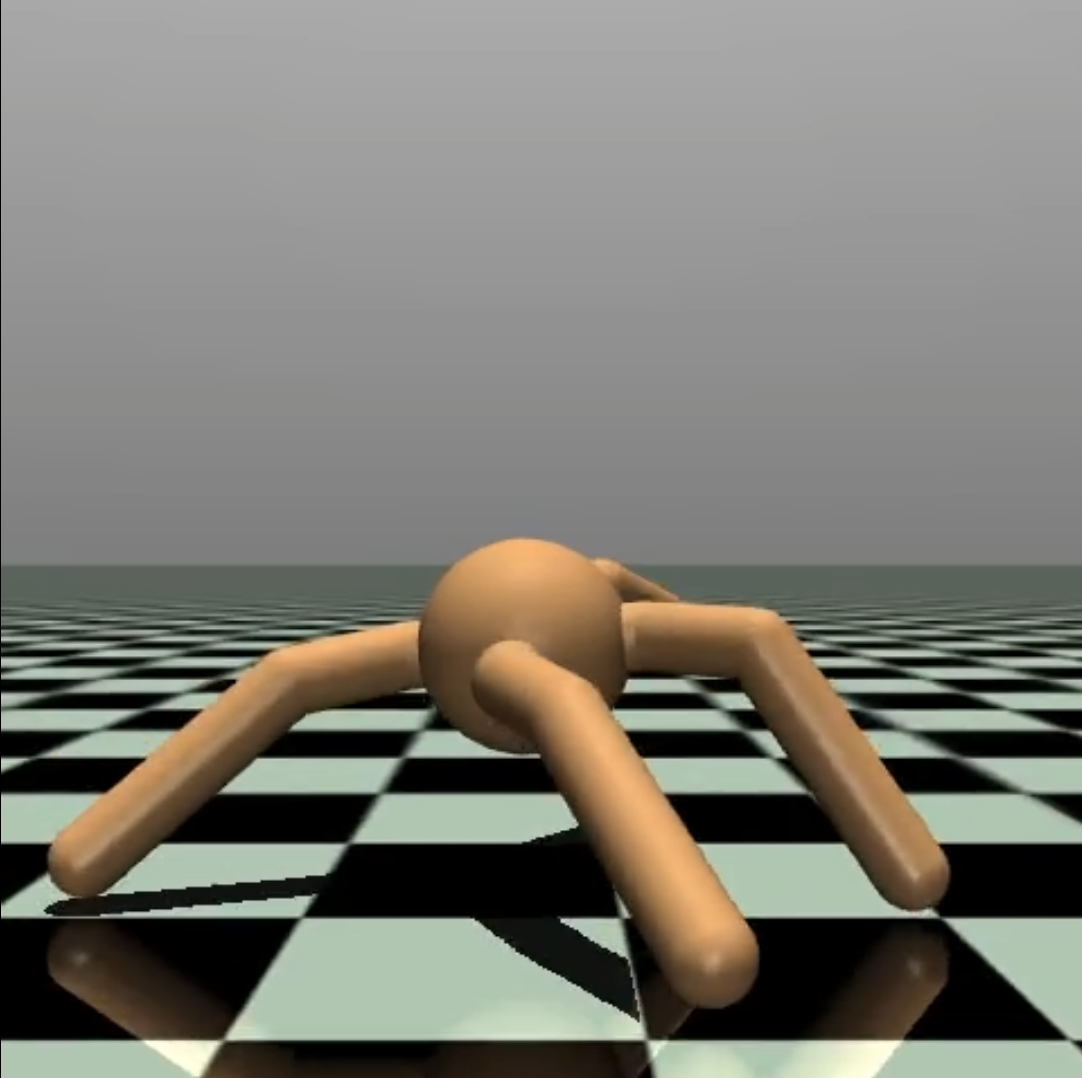}
       
    \end{subfigure}
    \hfill
    \begin{subfigure}[b]{0.19\textwidth}
        \centering
        \includegraphics[width=\textwidth]{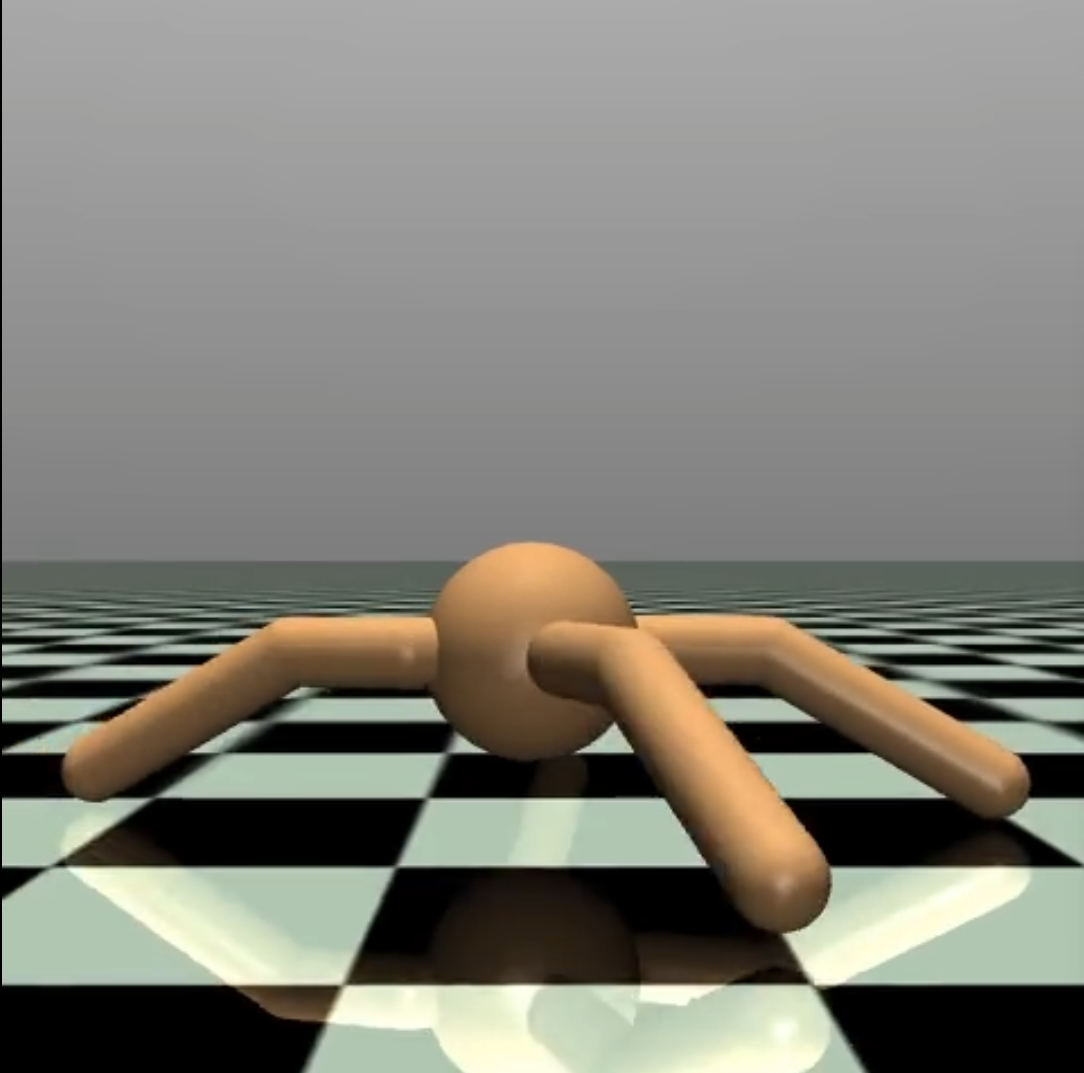}
        
    \end{subfigure}
    \hfill
    \begin{subfigure}[b]{0.19\textwidth}
        \centering
        \includegraphics[width=\textwidth]{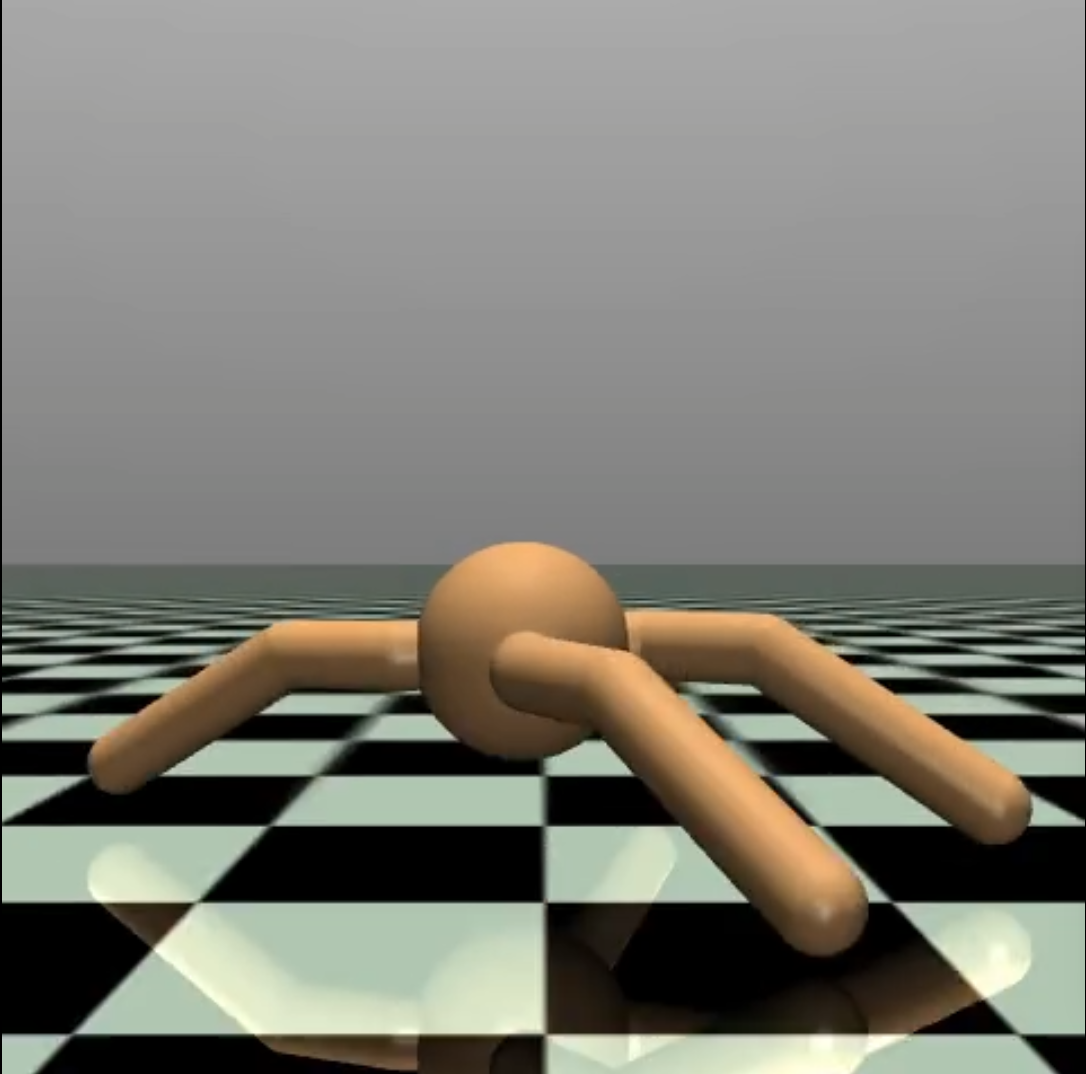}
       
    \end{subfigure}
     \caption{Snapshots of PPO-AdaGamma on Ant-v4}
    \label{fig:ppo_ant}
\end{figure}

\section{Hyperparameter Configurations}
\label{app:hyperparams}
\begin{table*}[t]
  \centering
  \caption{SAC hyperparameters by task.}
  \label{tab:hyperparams-sac}
  \scalebox{0.7}{
  \setlength{\tabcolsep}{4pt}
  \begin{tabular}{@{}llccc@{}}
    \toprule
     & & SafetyPointGoal1-v0 &  Humanoid-v4 & Ant-v4 \\
    \midrule
    \multicolumn{5}{@{}l}{\emph{Task / schedule}} \\
    & Max env steps & $10^6$ & $10^6$ & $10^6$ \\
    & Eval interval & \multicolumn{3}{c}{$10^4$ steps} \\
    \midrule
    \multicolumn{5}{@{}l}{\emph{SAC backbone (shared \texttt{Config})}} \\
    & Actor/Critic LR & \multicolumn{3}{c}{$3\times 10^{-4}$} \\
    & Target smoothing $\tau$ & \multicolumn{3}{c}{$5\times 10^{-3}$} \\
    & Replay capacity & \multicolumn{3}{c}{$10^6$} \\
    & Min buffer before train & \multicolumn{3}{c}{$5000$} \\
    & Batch size & \multicolumn{3}{c}{$256$} \\
    & Grad steps / env step & \multicolumn{3}{c}{$1$} \\
    & Hidden dim (policy/Q) & \multicolumn{3}{c}{$256$} \\
    & Entropy coef.\ $\alpha$ & \multicolumn{3}{c}{$0.2$ (automatic tuning)} \\
    & Max grad norm & \multicolumn{3}{c}{$1.0$} \\
    & Episode horizon cap & \multicolumn{3}{c}{$1000$} \\
    \midrule
    \multicolumn{5}{@{}l}{\emph{Safety constraint (SafetyPointGoal1 only)}} \\
    & Enabled / cost limit & \multicolumn{3}{c}{yes / $25$} \\
    & Lagrange LR / init & \multicolumn{3}{c}{$5\times 10^{-3}$ / $10^{-3}$} \\
    \midrule
    \multicolumn{5}{@{}l}{\emph{Uncertainty-rule baseline ($\beta$)}} \\
    & $\beta_{\mathrm{init}}$; learnable; $\beta$ LR &
      \multicolumn{3}{c}{$2.0$; true; $10^{-3}$} \\
    & Target $\gamma$ (\texttt{uncertainty\_target}) & --- & $0.99$ & $0.99$ \\
    & CLI: \texttt{uncertainty-scale} $=s \Rightarrow \beta_{\mathrm{init}}=2s$ &
      \multicolumn{3}{c}{$s=1$ in scripts} \\
    \midrule
    \multicolumn{5}{@{}l}{\emph{Fixed / RC / cross-validated (per script)}} \\
    & Warmup steps (\texttt{gamma-warmup-steps}) &
      $10^5$ & $10^5$ & $10^5$ \\
    & $\gamma$-net LR / hidden & \multicolumn{3}{c}{$10^{-4}$ / $256$} \\
    & Clip $[\gamma_{\min},\gamma_{\max}]$ &
      $[0.9, 0.999]$ & $[0.9, 0.999]$ & $[0.97, 0.999]$ \\
    & $n$-step $n$ & $5$ & $5$ & $5$ \\
    & Fixed $\gamma$ & $0.99$ & $0.99$ & $0.99$ \\
    & RC: $\gamma_{\mathrm{default}}$ / $\gamma_{\mathrm{ref}}$ init &
      $0.98$ / $0.98$ & $0.98$ / $0.98$ & $0.98$ / $0.98$ \\
    & RC: ref adaptive; after warmup &
      \multicolumn{3}{c}{true / true} \\
    & RC: EMA $\tau$ / update every (episodes) &
      \multicolumn{3}{c}{$0.1$ / $5$} \\
    & RC: train every (\texttt{gamma-update-freq} steps) &
      $20$ & $20$ & $20$ \\
    & $\lambda_{\mathrm{rc}}$ & \multicolumn{3}{c}{$1.0$} \\
    & $\lambda_{\mathrm{dev}}/\lambda_{\mathrm{var}}/\lambda_{\mathrm{bound}}$ &
      \multicolumn{3}{c}{$0.005\,/\,0.012\,/\,0.05$} \\
    & CV: $\gamma$; $H_{\mathrm{cv}}$; loss weight &
      --- & $0.98$; $20$; $1.0$ & $0.98$; $20$; \\
    & Uncertainty: \texttt{gamma-base} & --- & $0.99$ & $0.98$ \\
    \bottomrule
  \end{tabular}
  }
\end{table*}

\begin{table*}[t]
  \centering
  \caption{PPO hyperparameters by task.}
  \label{tab:hyperparams-ppo-by-task}
  \scalebox{0.7}{
  \setlength{\tabcolsep}{3.5pt}
  \begin{tabular}{@{}llccc@{}}
    \toprule
     & Hyperparameter
     & SafetyPointGoal1-v0
     & Humanoid-v4
     & Ant-v4 \\
    \midrule
    \multicolumn{5}{@{}l}{\emph{Task / schedule}} \\
    & Max env steps & $10^6$ & $3\times 10^6$ & $10^6$ \\
    & Eval interval & \multicolumn{3}{c}{$10^4$ steps} \\
    \midrule
    \multicolumn{5}{@{}l}{\emph{PPO core}} \\
    & Actor LR / critic LR &
      $3\!\times\!10^{-4}$ / $3\!\times\!10^{-4}$ &
      $10^{-4}$ / $10^{-4}$ &
      $3\!\times\!10^{-4}$ / $3\!\times\!10^{-4}$ \\
    & Policy clip $\varepsilon$ & \multicolumn{3}{c}{$0.2$} \\
    & GAE $\lambda$ & \multicolumn{3}{c}{$0.95$} \\
    & PPO epochs & $10$ & $8$ & $10$ \\
    & Rollout buffer size & $4096$ & $16384$ & $4096$ \\
    & Mini-batch size & $128$ & $256$ & $128$ \\
    & Entropy coef. & $0.01$ & $0.005$ & $0.01$ \\
    & Max grad norm & \multicolumn{3}{c}{$0.5$} \\
    & Action std (init / floor / decay $\Delta$) &
      \multicolumn{3}{c}{$0.5$ / $0.1$ / $0.05$} \\
    & Action-std decay period (steps) &
      $2\times 10^5$ & $10^5$ & $2\times 10^5$ \\
    & Episode horizon cap & \multicolumn{3}{c}{$1000$} \\
    \midrule
    \multicolumn{5}{@{}l}{\emph{$\gamma$ network (AdaGamma / CV / RC branches)}} \\
    & Hidden dim / $\gamma$ LR &
      \multicolumn{3}{c}{$256$ / $3\times 10^{-4}$} \\
    & Clip $[\gamma_{\min}, \gamma_{\max}]$ &
      \multicolumn{3}{c}{$[0.9,\,0.999]$} \\
    & $\gamma$-net warmup (episodes) &
      $200$ \textit{(default; script unset)} &
      $500$ &
      $200$ \textit{(default; script unset)} \\
    & Reg.\ $\lambda_{\mathrm{dev}}/\lambda_{\mathrm{var}}/\lambda_{\mathrm{bound}}/\epsilon_{\mathrm{bound}}$ &
      \multicolumn{3}{c}{$0.01$ / $0.005$ / $0.05$ / $0.005$} \\
    \midrule
    \multicolumn{5}{@{}l}{\emph{Return-consistency (RC) --- where used}} \\
    & Horizon $n$ (\texttt{rc-horizon}) &
      $\{5,10,20\}$ \textit{(separate runs)} &
      $10$ &
      $10$ \\
    & $\lambda_{\mathrm{rc}}$ & \multicolumn{3}{c}{$1.0$} \\
    & $\gamma_{\mathrm{ref}}$ init / adaptive / after warmup &
      \multicolumn{3}{c}{$0.99$ / true / true} \\
    & Ref EMA $\tau$ / update every (PPO upd.) &
      $0.1$ / $1$ &
      $0.05$ / $5$ &
      $0.1$ / $1$ \\
    \midrule
    \multicolumn{5}{@{}l}{\emph{Per-baseline settings (non-RC)}} \\
    & Fixed $\gamma$ &
      $0.99$ &
      $0.99$ &
      $0.99$ \\
    & Uncertainty \newline
      (\texttt{gamma-base}, $\gamma_{\min}$, scale, \texttt{num-q-ensembles}) &
      $0.99$, $0.9$, $1.0$, $5$ &
      $0.99$, $0.9$, $1.0$, $5$ &
      $0.99$, $0.9$, $1.0$, $5$ \\
    & Cross-validated\newline
      ($H_{\mathrm{cv}}$, $\lambda_{\mathrm{cv}}$) &
      $10$, $1.0$ &
      $10$, $1.0$ &
      $10$, $1.0$ \\
    \bottomrule
  \end{tabular}
  }
\end{table*}

\section{Limitations}
\label{app:limitaion}
While AdaGamma is broadly applicable, its advantages are most evident in environments with substantial state-dependent variation in effective planning horizon, where adaptive discounting can better capture heterogeneous temporal structure than a single global discount. As a result, its gains may appear more modest on temporally homogeneous tasks that are already well served by a fixed discount factor.

\clearpage
\newpage

\end{document}